\newcommand{\norm}[1]{\left\|#1\right\|}
\newcommand{\parens}[1]{\left(#1\right)}
\newcommand{\braces}[1]{\left\{#1\right\}}
\newcommand{\brackets}[1]{\left[#1\right]}
\newcommand{\abs}[1]{\left|#1\right|}
\newcommand{\RR}{\mathbb{R}}
\newcommand{\EE}{\mathbb{E}}
\newcommand{\PP}{\mathbb{P}}
\newcommand{\NN}{\mathbb{N}}
\newcommand{\ind}[1]{\mathbb{I}\left\{#1\right\}}
\def\deanonymize{1}
\declaretheorem[name=Theorem, numberwithin=section]{theo}
\declaretheorem[name=Lemma, sibling=theo]{lem}
\declaretheorem[name=Corollary, sibling=theo]{cor}
\declaretheorem[name=Definition, sibling=theo]{defi}
\declaretheorem[name=Assumption, sibling=theo]{ass}
\declaretheorem[name=Observation, sibling=theo]{obs}
\title{Spectral Bias Outside the Training Set for Deep Networks in the Kernel Regime}
\author{Benjamin Bowman\\
UCLA Department of Mathematics\\
\texttt{benbowman314@math.ucla.edu}
\And
Guido Mont\'ufar\\ 
UCLA Departments of Mathematics and Statistics and MPI MIS\\ % short 
\texttt{montufar@math.ucla.edu} \\
}
\begin{document}

\maketitle

\begin{abstract}
\noindent
We provide quantitative bounds measuring the $L^2$ difference in function space between the trajectory of a finite-width network trained on finitely many samples from the idealized kernel dynamics of infinite width and infinite data.  An implication of the bounds is that the network is biased to learn the top eigenfunctions of the Neural Tangent Kernel not just on the training set but over the entire input space.  This bias depends on the model architecture and input distribution alone and thus does not depend on the target function which does not need to be in the RKHS of the kernel.  The result is valid for deep architectures with fully connected, convolutional, and residual layers.  Furthermore the width does not need to grow polynomially with the number of samples in order to obtain high probability bounds up to a stopping time.  The proof exploits the low-effective-rank property of the Fisher Information Matrix at initialization, which implies a low effective dimension of the model (far smaller than the number of parameters).  We conclude that local capacity control from the low effective rank of the Fisher Information Matrix is still underexplored theoretically.
\end{abstract}

\section{Introduction}
Training heavily overparameterized networks via gradient based optimization has become standard operating procedure in deep learning.  Overparameterized networks are able to interpolate arbitrary labels both in principle and in practice \citep{zhang2017understanding}, rendering classical PAC learning theory insufficient to explain the generalization of networks within this modality.  The high capacity of modern networks ensures that there are both good and bad empirical risk minimizers.  Miraculously the network preferentially chooses the good solutions and sidesteps those that are unfavorable, posing a challenge and opportunity to today's researchers. 
\par
The success of overparameterized networks has prompted the theoretical community to search for more subtle forms of capacity control \citep{neyshabur2015search,neyshabur2017geometry,NIPS2017_58191d2a}.  The contemporary point-of-view is that the data distribution, model parameterization, and optimization algorithm are all relevant in limiting complexity.  This has led to a variety of efforts to characterize the properties that networks and related models are biased towards when optimized via gradient descent.  Examples include max-margin bias for classification problems \citep{soudry2018implicit, 
telgarskynonseparable, Nacson2019ConvergenceOG,NEURIPS2018_0e98aeeb}, minimum nuclear norm bias for matrix factorization \citep{NIPS2017_58191d2a,pmlr-v75-li18a,NEURIPS2018_0e98aeeb}, and minimum RKHS norm bias in the kernel regime \citep{generalizationinducedbyinit}.
\par
Empirically it is known that neural networks tend to learn low Fourier frequencies first and add higher frequencies only later in training \citep{rahaman2019spectral, xu2019training, yang2022overcoming}, the phenomenon that has been titled ``Spectral Bias" or the ``Frequency Principle".  Theoretical justifications of this have been proposed by studying networks in the kernel regime.  For shallow univariate ReLU networks \cite{basri2019convergence, basri2020frequency} demonstrate that the dominant eigenfunctions of the Neural Tangent Kernel (NTK) \citep{jacot2020neural} correspond to the low Fourier frequencies for the uniform distribution and more generally to smoother components for nonuniform distributions.  This echos the results by \citet{williams2019gradient} and \citet{jin2021implicit} that show that univariate ReLU networks in the kernel regime are biased towards smooth interpolants.  Abstracting away from Fourier frequencies, ``Spectral Bias" can be interpreted more broadly to mean bias towards learning the top eigenfunctions of the Neural Tangent Kernel.  By looking at empirical approximations to the eigenfunctions, spectral bias was demonstrated to hold on the training set by \citet{arora2019finegrained}, \citet{basri2020frequency}, and \citet{caounderstanding}.  A recent work by \citet{bowman2022implicit} was able to demonstrate that spectral bias holds off the training set for shallow feedforward networks when the network is underparameterized.  In the present work we exploit the low-effective-rank property of the Fisher Information Matrix and are able to demonstrate that spectral bias holds outside the training set without the underparameterization requirement.  In fact the number of samples can be on the same order as the width of the network.  Furthermore, by leveraging a recent work by \citet{liuonlinearity} bounding the Hessian of wide networks, our result permits deep networks with fully connected, convolutional, and residual layers.  Consequently we are able to conclude that spectral bias holds for more realistic sample complexities and diverse architectures.

\subsection{Our Contributions}
\begin{itemize}[leftmargin=*] 
    \item We provide quantitative bounds measuring the $L^2$ difference in function space between the trajectory of a finite-width network trained on finitely many samples from the idealized kernel dynamics of infinite width and infinite data (see Theorem~\ref{thm:main} and Corollary~\ref{cor:maincor}). 
    \item As an implication of these bounds, eigenfunctions of the NTK integral operator (not just their empirical approximations) are learned at rates corresponding to their eigenvalues (see Corollary~\ref{cor:maincor} and Observation~\ref{obs:spectralbias}). 
    \item We demonstrate that the network will inherit the bias of the kernel at the beginning of training even when the width only grows linearly with the number of samples (see Observation~\ref{obs:linearover}).
\end{itemize}

\subsection{Related Work}
\paragraph{NTK Convergence Results} 
The NTK was introduced by \citet{jacot2020neural} while almost concurrently \citet{du2018gradient} used it implicitly to prove a global convergence guarantee for gradient descent applied to a shallow ReLU network.  These two highly charismatic works led to a flurry of subsequent works, of which we can only hope to provide a partial list.  Global convergence for arbitrary labels was addressed in a series of works \citep{du2018gradient,du2019gradient,oymak2019moderate,allenzhu2019convergence,qyunhpyramidal,nguyenrelu,zou2018stochastic,zou2019improved}.  For arbitrary labels to our knowledge all works require the network width to either grow polynomially with the number of samples $n$ or the inverse desired accuracy $\epsilon^{-1}$.  If one assumes the target function aligns with the NTK model, for shallow networks this can be reduced to polylogarithmic width for the logistic loss \citep{Ji2020Polylogarithmic} or linear width for the squared loss \citep{Weinan2020ACA,su2019learning,bowman2022implicit}.

\paragraph{Spectrum of the NTK/Hessian and Generalization}
The fact that the NTK tends to have a small number of large outlier eigenvalues has been observed in many works (e.g.\ \citealt{arora2019finegrained,oymak2020generalization,li2019gradient}).  \citet{papyantraces} demonstrated that for classification problems the logit gradients cluster within classes, which produces outliers in the spectra of the NTK and the Hessian of the loss.  There have been a series of works analyzing the NTK/Hessian spectrum theoretically using random matrix theory and other tools (e.g.\ \citealt{karakida_pathological_2021,NEURIPS2018_18bb68e2,pmlr-v70-pennington17a, 10.5555/3495724.3496370,https://doi.org/10.48550/arxiv.1907.10599}).  Recently the spectrum of the NTK integral operator for ReLU networks has been shown to asymptotically follow a power law \citep{yarotskyasymptotics}.  \citet{arora2019finegrained} provided a generalization bound that is effective when the labels align with the top eigenvectors of the NTK.  \citet{oymak2020generalization} were able to use the low effective rank of the NTK to obtain generalization bounds, and \citet{li2019gradient} used the same property to demonstrate robustness to label noise.  The low effective rank of the Hessian has also been incorporated into PAC-Bayes bounds, most recently by \citet{Yang2021DoesTD}.  Interestingly, the notion of the effective dimension they define is essentially the same quantity we use to bound the model complexity of the network's linearization. 

\paragraph{NTK Eigenvector and Eigenfunction Convergence Rates}
\citet{https://doi.org/10.48550/arxiv.2010.08153} explicitly tracked the dynamics of the infinite width shallow model in the Fourier domain.  \citet{arora2019finegrained} demonstrated that when training the hidden layer of a shallow ReLU network, the residual error on the training set projected along eigenvectors of the NTK Gram matrix decays linearly at rates corresponding to the eigenvalues.  \citet{caounderstanding} proved a similar statement for training both layers, and \citet{basri2020frequency} proved the analogous statement for a deep fully connected ReLU network where the first and last layer are fixed.  Our result can be viewed as the corresponding statement for the test residual instead of the empirical residual: projections of the test residual along eigen\textit{functions} of the NTK \textit{integral operator} are learned at rates corresponding to their eigenvalues.  This was shown in a recent work \citep{bowman2022implicit} for shallow fully connected networks that are underparameterized.  By contrast our result does not require the network to be underparameterized, and holds for deep networks with fully connected, convolutional, and residual layers.  We view our fundamental contribution as demonstrating that spectral bias holds with more realistic sample complexities and in considerable generality with respect to model architecture.

\section{Preliminaries}
\label{sec:preliminaries}

\subsection{Notation}
Vectors $v \in \RR^k$ will be column vectors by default.  We will let $\langle \bullet, \bullet \rangle$ and $\norm{\bullet}_2$ denote the Euclidean inner product and norm.  We define $\langle \bullet, \bullet \rangle_{\RR^n} = \frac{1}{n} \langle \bullet, \bullet \rangle$ and $\norm{\bullet}_{\RR^n} := \sqrt{\langle \bullet, \bullet \rangle_{\RR^n}}$ to be the normalized Euclidean inner product and norm.  The notation $\overline{B}(v, r) := \{ w : \norm{w - v}_2 \leq r\}$ will denote the \textit{closed} Euclidean ball centered at $v$ of radius $r$.  $\norm{A}_{op} := \sup_{\norm{v}_2 = 1} \norm{Av}_2$ will denote the operator norm for matrices.  For a symmetric matrix $A \in \RR^{k \times k}$, $\lambda_i(A)$ denotes its $i$-th largest eigenvalue, i.e.\ $\lambda_1(A) \geq \lambda_2(A) \geq \cdots \geq \lambda_k(A)$.  For a set $A$ we will let $|A|$ denote its cardinality.  For a natural number $k \geq 1$, we will let $[k] := \{1, \ldots, k\}$.  We will let $L^p(X, \nu)$ denote the $L^p$ space over domain $X$ with measure $\nu$.  We will denote the inner product associated with $L^2(X, \nu)$ as $\langle \bullet, \bullet \rangle_\nu$.  We will use the standard big $O$ and $\Omega$ notation with $\Tilde{O}$ and $\Tilde{\Omega}$ hiding logarithmic terms.

\subsection{NTK Dynamics}
Let $f(x; \theta)$ be our scalar-valued neural network model taking inputs $x \in X \subset \RR^d$ parameterized by $\theta \in \RR^p$.  For now we will not specify a specific architecture.  Our training data will be $n$ input-label pairs $\{(x_1, y_1), \ldots, (x_n, y_n)\} \subset \RR^d \times \RR$ where we assume that the labels $y_i$ are generated from a fixed scalar-valued target function $f^*$, namely $f^*(x_i) = y_i$.  We will let $y \in \RR^n$ denote the label vector $y = (y_1, \ldots, y_n)^T$.  Let $\hat{r}(\theta) \in \RR^n$ denote the vector that measures the residual error on the training set, whose $i$-th entry is $\hat{r}(\theta)_i := f(x_i; \theta) - y_i$.
We will optimize the squared loss
\[ \Phi(\theta) := \frac{1}{2n} \norm{\hat{r}(\theta)}_2^2 = \frac{1}{2} \norm{\hat{r}(\theta)}_{\RR^n}^2 \]
via gradient flow
\[ \partial_t \theta_t = - \partial_{\theta} \Phi(\theta), \]
which is the continuous time analog of gradient descent.  For conciseness we will denote $\hat{r}(\theta_t)$ by $\hat{r}_t$ and let $r_t(x) := f(x; \theta_t) - f^*(x)$ denote the residual for an arbitrary input $x$ not necessarily in the training set.  We may also write $r(x; \theta) := f(x; \theta) - f^*(x)$ for the residual for an arbitrary $\theta$.
\par
We recall some key definitions and facts about the NTK.  For a comprehensive introduction we refer the reader to \citet{jacot2020neural}.
We recall the definition of the analytical NTK 
\[ K^\infty(x, x') := \EE_{\theta_0 \sim \mu}\brackets{\langle \nabla_\theta f(x; \theta_0), \nabla_\theta f(x'; \theta_0) \rangle}, \]
where the expectation is taken over the parameter initialization $\theta_0 \sim \mu$.  The kernel $K^\infty$ induces an integral operator $T_{K^\infty} : L^2(X, \rho) \rightarrow L^2(X, \rho)$ 
\begin{equation}\label{eq:tkinftydef}
T_{K^\infty} g(x) := \int_X K^\infty(x, s) g(s) d\rho(s) ,     
\end{equation}
where $X$ is our input space and $\rho$ is the input distribution.  We assume our training inputs $x_1, \ldots, x_n$ are i.i.d.\ samples from $\rho$.  More generally, for a continuous kernel $K(x, x')$ we define $T_K : L^2(X, \rho) \rightarrow L^2(X, \rho)$
\begin{equation}\label{eq:tkdefinition}
T_K g(x) := \int_X K(x, s) g(s) d\rho(s).    
\end{equation}
Returning back to $K^\infty$, by Mercer's theorem we have the decomposition
\[ K^\infty(x, x') = \sum_{i = 1}^\infty \sigma_i \phi_i(x) \phi_i(x'), \]
where $\{\phi_i\}$ is an orthonormal basis for $L^2(X, \rho)$ and $\{\sigma_i\}$ is a nonincreasing sequence of positive values.  We will see that the bias at the beginning of training within our framework can be described entirely through the operator $T_{K^\infty}$ and its eigenfunctions.  We note that $T_{K^\infty}$ depends only on the model architecture, parameter initialization distribution $\mu$, and input distribution $\rho$.  
The training data sample $x_1, \ldots, x_n$ introduces a discretization of the operator $T_{K^\infty}$
\begin{equation}\label{eq:tndefinition}
T_n g(x) := \frac{1}{n} \sum_{i = 1}^n K^\infty(x, x_i) g(x_i) = \int_X K^\infty(x, s) g(s) d\widehat{\rho}(s) , 
\end{equation}
where $\widehat{\rho} = \frac{1}{n} \sum_{i = 1}^n \delta_{x_i}$ is the empirical measure.
We now introduce the time-dependent NTK 
\[ K_t(x, x') := \langle \nabla_\theta f(x;\theta_t), \nabla_\theta f(x'; \theta_t) \rangle \]
with the associated time-dependent operator $T_n^t$
\begin{equation}\label{eq:tntdefinition}
T_n^t g(x) := \frac{1}{n} \sum_{i = 1}^n K_t(x, x_i) g(x_i) = \int_X K_t(x, s) g(s) d\widehat{\rho}(s).    
\end{equation}
The update rule for the residual $r_t$ under gradient flow is given by
\[ \partial_t r_t(x) = - \frac{1}{n} \sum_{i = 1}^n K_t(x, x_i) r_t(x_i) = - T_n^t r_t. \]
Speaking loosely, as the network width tends to infinity the time-dependent NTK $K_t(x, x')$ becomes constant so that $K_t(x, x') = K^\infty(x, x')$ uniformly in $t$.  If $K_t = K^\infty$ then we have the operator equality $T_n^t = T_n$.  Similarly, heuristically as $n \rightarrow \infty$ we have $T_n \rightarrow T_{K^\infty}$.  Thus in the idealized infinite width, infinite data limit the update rule becomes
\[ \partial_t r_t = -T_{K^\infty} r_t ,\]
which has the solution $r_t = \exp(-T_{K^\infty} t) r_0$ which is defined via its projections
\[ \langle r_t, \phi_i \rangle_\rho = \exp(-\sigma_i t) \langle r_0, \phi_i \rangle_\rho. \]
Thus in this idealized setting the network learns eigenfunctions $\phi_i$ at rates determined by their eigenvalues $\sigma_i$.  The dependence of the convergence rate on the magnitude of $\sigma_i$ is particularly relevant as the NTK tends to have a very skewed spectrum.  We can estimate the spectrum of $K^\infty$ by randomly initializing a network and computing the Gram matrix $(G_0)_{i,j} := K_0(x_i, x_j)$.  
In Figure~\ref{fig:ntk_spec} we plot the spectrum of the NTK Gram Matrix $(G_0)_{i,j} := K_0(x_i, x_j)$ at initialization.  We observe a small number of outlier eigenvalues of large magnitude followed by a long tail of small eigenvalues.  This phenomenon has appeared in many works (e.g.\ \citealt{arora2019finegrained, oymak2020generalization,li2019gradient}).  For ReLU networks the spectrum is known to asymptotically follow a power law $\sigma_i \sim \Lambda i^{-\nu}$ \citep{yarotskyasymptotics}.  The goal of this work is to quantify the extent to which a finite-width network trained on finitely many samples behaves like the idealized kernel dynamics $r_t = \exp(-T_{K^\infty} t) r_0$ corresponding to infinite width and infinite data.  
\begin{figure}
      \begin{subfigure}[b]{0.5\textwidth}
         \centering
         \begin{tikzpicture}
         \node at (0,0) {\includegraphics[width=\textwidth]{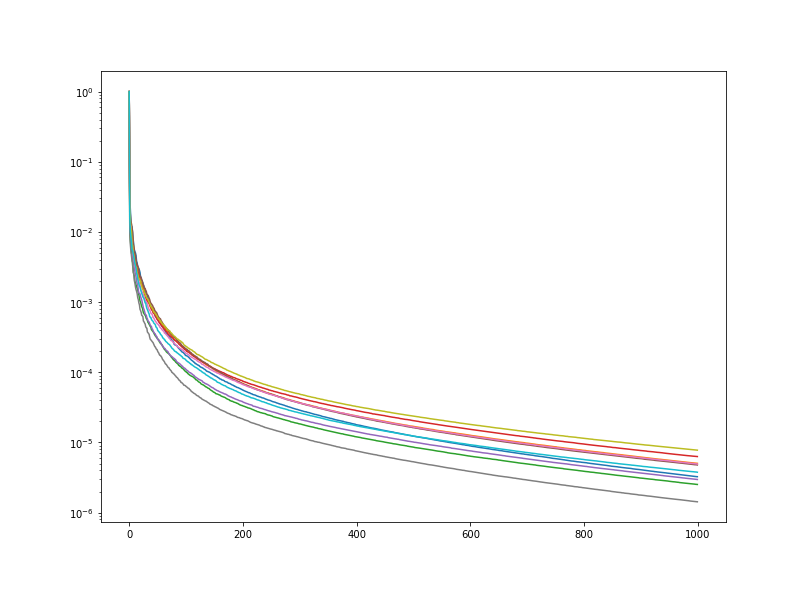}}; 
         \node at (0,-2.5){\small Index of eigenvalue};
         \node at (-3.25,0){\rotatebox{90}{\small Normalized magnitude}}; 
         \node at (0,2.25) {\small Spectrum of LeNet-5 NTK on MNIST}; 
         \end{tikzpicture}
    \end{subfigure}
     \hfill
     \begin{subfigure}[b]{0.5\textwidth}
         \centering
         \begin{tikzpicture}
         \node at (0,0) {\includegraphics[width=\textwidth]{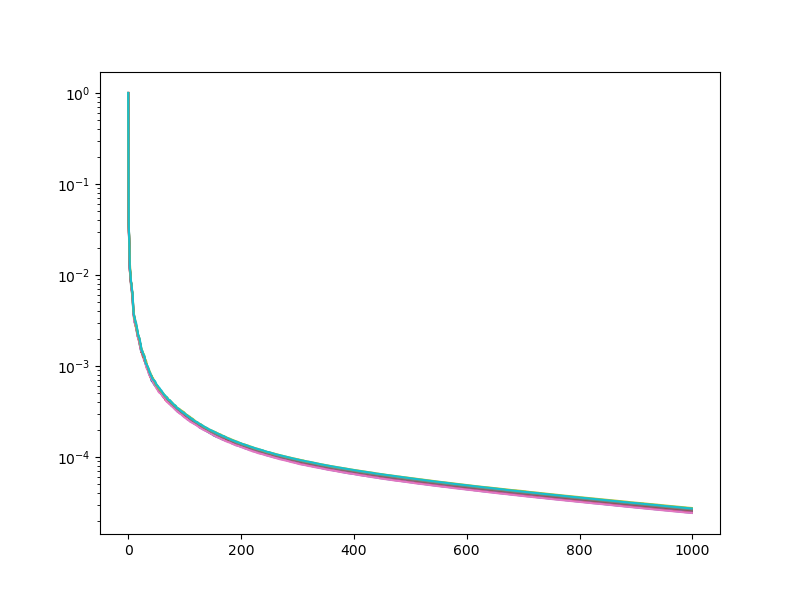}}; 
         \node at (0,-2.5){\small Index of eigenvalue};
         \node at (-3.25,0){\rotatebox{90}{\small Normalized magnitude}}; 
        \node at (0,2.25) {\small Spectrum of shallow NTK on CIFAR10};          
         \end{tikzpicture}
     \end{subfigure}
        \caption{We plot the NTK spectrum on MNIST and CIFAR10 for two networks using 10 random parameter initializations and data batches.  In both plots the x-axis represents the eigenvalue index $k$ (linear scale) and the y-axis 
        the normalized eigenvalue $\lambda_k / \lambda_1$ magnitude (log scale).  To avoid numerical issues, we compute the NTK on  
        a batch of size 2000 and plot the first 1000 eigenvalues.
        The left plot computed the NTK corresponding to the logit of class 0 for LeNet-5 on MNIST.  The right plot is for a shallow fully-connected softplus network with 4000 hidden units on CIFAR10.}
        \label{fig:ntk_spec}
\end{figure}

\subsection{Applicable Architectures}\label{sec:architecture}
We now specify an architecture for our model $f(x; \theta)$.  We consider deep networks of the form
\begin{align*}
\alpha^{(0)} &:= x ,\\
\alpha^{(l)} &:= \psi_l(\theta^{(l)}, \alpha^{(l - 1)}), \quad l \in [L], \\
f(x; \theta) &:= \frac{1}{\sqrt{m_{L}}} v^T \alpha^{(L)},
\end{align*}
where each $\psi_l(\theta^{(l)}, \bullet) : \RR^{m_{l - 1}} \rightarrow \RR^{m_l}$ is a vector-valued function parameterized by $\theta^{(l)} \in \RR^{p_l}$ and $v \in \RR^{m_L}$.  We define $\theta^{(L + 1)} := v$ and set $\theta := ((\theta^{(1)})^T, \ldots, (\theta^{(L + 1)})^T)^T$ to be the collection of all parameters.  We assume each layer mapping $\psi_l$ has one of the following forms: 
\begin{align*}
\text{Fully Connected}&: \psi_l(\theta^{(l)}, \alpha^{(l - 1)}) = \omega\parens{\frac{1}{\sqrt{m_{l - 1}}} W^{(l)} \alpha^{(l - 1)}}  \\
\text{Convolutional}&:  \psi_l(\theta^{(l)}, \alpha^{(l - 1)}) = \omega\parens{\frac{1}{\sqrt{m_{{l - 1}}}} W^{(l)} * \alpha^{(l - 1)}} \\
\text{Residual}&: \psi_l(\theta^{(l)}, \alpha^{(l - 1)}) = \omega\parens{\frac{1}{\sqrt{m_{l - 1}}} W^{(l)} \alpha^{(l - 1)}} + \alpha^{(l - 1)}  
\end{align*}
Here $\theta^{(l)} = vec(W^{(l)})$ and $\omega$ is a twice continuously differentiable function such that $\omega$ and $\omega'$ are Lipschitz.  All parameters of the network will be trained as in practice.  For feedforward and residual layers 
$W^{(l)} \in \RR^{m_l \times m_{l - 1}}$ is a matrix.  For the case of convolutional layers $W^{(l)} \in \RR^{K \times m_{l} \times m_{l - 1}}$ is an order-3 tensor with filter size $K$.  The precise definition of the convolution $*$ is offered in the appendix.  We will let $m = \min_{l} m_{l}$ denote the minimum width of the network.   We will assume that $\max_{l} \frac{m_l}{m} = O(1)$.  The input dimension $d := m_0$, the depth $L$, and the filter sizes $K$ of convolutional layers will be treated as constant.  The depth $L$ being constant is essential for NTK convergence; see \citet{Hanin2020Finite} for an explanation of failure modes whenever depth is nonconstant. 
\par
We will now discuss our initialization scheme.  We will perform the antisymmetric initialization trick introduced by \citet{generalizationinducedbyinit} so that the model is identically zero at initialization $f(\bullet; \theta_0) \equiv 0$.  Let $f(x; \theta)$ be any neural network of the form described above.  Then let $\tilde{\theta} = \left[ \begin{smallmatrix}
\theta\\
\theta'
\end{smallmatrix} \right]$
where $\theta, \theta' \in \RR^p$.
We then define
\[ f_{ASI}(x; \tilde{\theta}) := \frac{1}{\sqrt{2}} f(x; \theta) - \frac{1}{\sqrt{2}} f(x; \theta') \]
which takes the difference of two rescaled copies of our original model $f(x; \theta)$ with parameters $\theta$ and $\theta'$ that are optimized freely.  The antisymmetric initialization trick initializes $\theta_0 \sim N(0, I)$ then sets
$\tilde{\theta}_0 = \left[\begin{smallmatrix}
\theta_0\\
\theta_0
\end{smallmatrix}\right]$. 
We then optimize the model $f_{ASI}$ starting from the initialization $\tilde{\theta}_0$.  This trick simultaneously ensures that the model is identically zero at initialization without changing the NTK at initialization \citep{generalizationinducedbyinit}.  For ease of notation we will simply assume from now on that $f(x; \theta) = f_{ASI}(x; \theta)$ and not write the subscript $ASI$.

\section{Main Results}
\label{sec:main-results}
Before stating our main result, we enumerate our key assumptions for the sake of clarity, assumed to hold throughout. Detailed proofs are deferred to the appendix. 
\begin{ass}\label{ass:act}
The activation $\omega$ is twice continuously differentiable and $\omega$ and 
$\omega'$ are Lipschitz.
\end{ass}
\begin{ass}\label{ass:mercercond}
The input domain $X$ is compact with strictly positive Borel measure $\rho$.
\end{ass}
\begin{ass}\label{ass:boundedtarget}
The target function $f^*$ satisfies $\norm{f^*}_{L^\infty(X, \rho)} = O(1)$.
\end{ass}
\begin{ass}\label{ass:antisymmetric}
We use the antisymmetric initilization trick so that $f(\bullet; \theta_0) \equiv 0$.  
\end{ass}
\noindent
Most activation functions except for ReLU satisfy Assumption~\ref{ass:act}, such as Softplus $\omega(x) = \ln(1 + e^x)$, Sigmoid $\omega(x) = \tfrac{1}{1 + e^{-x}}$, and Tanh $\omega(x) = \tfrac{e^{x} - e^{-x}}{e^x + e^{-x}}$.  Assumption \ref{ass:mercercond} is a sufficient condition for Mercer's Theorem to hold.  While Mercer's theorem is often assumed to hold implicitly, we prefer to make this assumption explicit.  Assumption~\ref{ass:boundedtarget} simply means the target function is bounded.  We believe the antisymmetric initialization specified in Assumption~\ref{ass:antisymmetric} is not strictly necessary but it greatly simplifies the proofs and associated bounds.  To sidestep \ref{ass:antisymmetric} one would utilize high probability bounds on the magnitude $|f(x; \theta_0)|$ at initialization.  
In the following results $f(x; \theta)$ will be any of the architectures discussed in Section~\ref{sec:architecture}.  We are now ready to introduce the main result.
\begin{restatable}{theo}{main}
\label{thm:main}
Let $T \geq 1, \epsilon > 0$.  Let $K(x, x')$ be a fixed continuous, symmetric, positive definite kernel.  For $k \in \NN$ let $P_k : L^2(X, \rho) \rightarrow L^2(X, \rho)$ denote the orthogonal projection onto the span of the top $k$ eigenfunctions of the operator $T_K$ defined in Equation~\eqref{eq:tkdefinition}.  Let $\sigma_k > 0$ denote the $k$-th eigenvalue of $T_K$.  Then $m = \tilde{\Omega}(T^4 / \epsilon^2)$ and $n = \tilde{\Omega}(T^2 / \epsilon^2)$ suffices to ensure with probability at least $1 - O(mn)\exp(-\Omega(\log^2(m))$ over the parameter initilization $\theta_0$ and the training samples $x_1, \ldots, x_n$ that for all $t \leq T$ and $k \in \NN$
\[\norm{P_k(r_t - \exp(-T_{K}t)r_0)}_{L^2(X, \rho)}^2 
\leq \brackets{\frac{1 - \exp(-\sigma_k t)}{\sigma_k}}^2 \cdot \brackets{4 \norm{f^*}_{\infty}^2 \norm{K - K_0}_{L^2(X^2, \rho \otimes \rho)}^2 + \epsilon} \]
and
\[\norm{r_t - \exp(-T_{K}t)r_0}_{L^2(X, \rho)}^2 \leq t^2 \cdot \brackets{4 \norm{f^*}_\infty^2 \norm{K - K_0}_{L^2(X^2, \rho \otimes \rho)}^2 + \epsilon}.\]
\end{restatable}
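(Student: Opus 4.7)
The plan is to introduce the error $e_t := r_t - \exp(-T_K t) r_0$ and derive a linear inhomogeneous equation for it. Using $\partial_t r_t = -T_n^t r_t$ from gradient flow and $\partial_t \exp(-T_K t) r_0 = -T_K \exp(-T_K t) r_0$, and adding and subtracting $T_K r_t$, we obtain
\[
\partial_t e_t = -T_K e_t + (T_K - T_n^t) r_t,
\]
so Duhamel's principle gives the representation
\[
e_t = \int_0^t \exp\bigl(-T_K(t-s)\bigr)(T_K - T_n^s) r_s\, ds.
\]
Both claims will follow by bounding the operator $\exp(-T_K(t-s))$ on the appropriate subspace and passing a triangle inequality under the integral.

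\textbf{Bounding the forcing term.} The next step is to establish
\[
\sup_{s \leq T}\bigl\|(T_K - T_n^s) r_s\bigr\|_{L^2(X,\rho)}^2 \leq 4\|f^*\|_\infty^2 \|K - K_0\|_{L^2(X^2, \rho \otimes \rho)}^2 + \epsilon
\]
with the stated probability. I would decompose
\[
T_K - T_n^s = (T_K - T_{K,\widehat\rho}) + (T_{K,\widehat\rho} - T_{K_0,\widehat\rho}) + (T_{K_0,\widehat\rho} - T_n^s),
\]
where $T_{K',\widehat\rho} g(x) := \int K'(x,y) g(y)\, d\widehat\rho(y)$. The first summand is a Monte-Carlo discretization error in $\rho$, controlled by standard empirical-process concentration at rate $\tilde O(1/\sqrt n)$. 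The second, after Cauchy--Schwarz, is bounded by $\|K - K_0\|_{L^2(\rho \otimes \widehat\rho)}^2 \cdot \|r_s\|_{L^2(\widehat\rho)}^2$ and then transferred to the population $L^2(\rho \otimes \rho)$ norm by uniform concentration on the kernel difference, producing the advertised $\|K-K_0\|^2$ term. The third is the NTK-stability gap, controlled by combining (a) the parameter-drift estimate $\|\theta_s - \theta_0\|_2 = \tilde O(T/\sqrt m)$, which follows from monotonicity of $\|\hat r_s\|_{\RR^n}$ under gradient flow on a squared loss together with $\|\hat r_0\|_{\RR^n} \leq \|f^*\|_\infty$ (using Assumption~\ref{ass:antisymmetric}), and (b) the Hessian bounds of \citet{liuonlinearity}, which convert parameter drift into kernel drift $\|K_s - K_0\| = \tilde O(T/\sqrt m)$ on the trajectory. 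Choosing $m = \tilde\Omega(T^4/\epsilon^2)$ absorbs this contribution into $\epsilon$, and the prefactor $\|r_s\|_\infty \leq \|f^*\|_\infty + \tilde O(T/\sqrt m)$ yields the $4\|f^*\|_\infty^2$ constant.

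\textbf{Spectral projection.} Because $P_k$ commutes with $T_K$, it commutes with $\exp(-T_K(t-s))$, and on $\mathrm{range}(P_k)$ the latter has operator norm at most $\exp(-\sigma_k(t-s))$ (the slowest decay rate among the top $k$ modes). Applying $P_k$ to the Duhamel integral and passing this operator norm inside gives
\[
\|P_k e_t\|_{L^2} \leq \int_0^t \exp\bigl(-\sigma_k(t-s)\bigr)\bigl\|(T_K - T_n^s) r_s\bigr\|_{L^2}\, ds \leq \frac{1-\exp(-\sigma_k t)}{\sigma_k}\sup_{s \leq T}\bigl\|(T_K - T_n^s) r_s\bigr\|_{L^2}.
\]
Squaring and inserting the forcing estimate yields the first claim. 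The second, unprojected claim follows analogously with the trivial bound $\|\exp(-T_K(t-s))\|_{op} \leq 1$, which replaces $(1-\exp(-\sigma_k t))/\sigma_k$ by $t$ and hence produces the prefactor $t^2$ after squaring.

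\textbf{Main obstacle.} The most delicate step is the NTK-stability estimate $\|K_s - K_0\|$ holding uniformly in $s \in [0,T]$ with width only $m = \tilde\Omega(T^4/\epsilon^2)$, not polynomial in $n$. This requires combining the quadratic parameter-drift estimate, the Hessian bound of Liu et al., and a net/continuity argument to lift pointwise-in-$s$ control to uniform control on $[0,T]$. Equally delicate is transferring the empirical kernel-discrepancy norm $\|K - K_0\|_{L^2(\rho \otimes \widehat\rho)}$ to the population norm $\|K - K_0\|_{L^2(\rho \otimes \rho)}$ using only $n = \tilde\Omega(T^2/\epsilon^2)$ samples, which drives the sample-complexity requirement. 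The remaining ingredients---the Duhamel representation, spectral projection via the commuting $P_k$, and the initial bound $r_0 = -f^*$ from Assumption~\ref{ass:antisymmetric}---are essentially routine.
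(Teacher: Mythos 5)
Your framework --- the Duhamel representation $e_t = \int_0^t \exp(-T_K(t-s))(T_K - T_n^s)r_s\,ds$ followed by passing the operator norm of $\exp(-T_K(t-s))$ (respectively $\exp(-\sigma_k(t-s))$ on $\mathrm{range}(P_k)$) inside the integral --- matches the paper's Lemmas~\ref{lem:dampeddevfunc} and~\ref{lem:kerdiffrecipe}, which are imported wholesale from \citet{bowman2022implicit}. Your three-term decomposition of $T_K - T_n^s$ is also equivalent to the paper's split into $(T_K - T_n)$ and $(T_n - T_n^s)$, since the second piece is internally split by adding and subtracting the $K_0$ empirical operator. So the skeleton is right. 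However, there are genuine gaps in where you place the difficulty.

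The central gap is your treatment of the ``Monte-Carlo discretization error'' $(T_K - T_{K,\widehat\rho})r_s$. This is a uniform bound over the function class $\{r(\cdot;\theta) : \theta \in \overline B(\theta_0, R)\}$, not a pointwise estimate: the residual $r_s$ depends on the very training set $\{x_i\}$ used to form $\widehat\rho$. Calling it ``standard empirical-process concentration'' hides exactly the question the theorem is answering. A naive covering of a $p$-parameter linear class on an $R$-ball would give $n = \tilde\Omega(p)$, which is precisely the underparameterization restriction of \citet{bowman2022implicit} that this paper removes. The paper's core contribution is Appendix~\ref{sec:covnum}: the $L^2(X,\rho)$ covering number of the linearized model is governed by the spectrum of the Fisher Information Matrix $F$, reducing $\log\mathcal N$ to the \emph{effective rank} $\tilde p(F^{1/2}, \epsilon/R)$, which is $\tilde O(R^2/\epsilon)$ rather than $p$. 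This is what makes $n = \tilde\Omega(T^2/\epsilon^2)$ possible with width only linear in $n$. Your proposal has no mechanism for this and would (if made rigorous along the lines you describe) reproduce the underparameterized $n \gg p$ requirement of the prior work. Relatedly, you misidentify the driver of the sample complexity: the transfer of $\|K-K_0\|$ from $L^2(\rho\otimes\widehat\rho)$ to $L^2(\rho\otimes\rho)$ is a single Hoeffding bound for two fixed kernels and only needs $n = \tilde\Omega(\epsilon^{-2})$; the $T^2/\epsilon^2$ scaling actually comes from the covering-number term.

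A second missing ingredient is how the Hessian bound of \citet{liuonlinearity} enters. That bound holds for a \emph{fixed} input $x$ with high probability over $\theta_0$. Any argument that requires $\sup_{x\in X}\sup_\theta\|H(x,\theta)\|_{op}$ small (as your sketch implicitly does) would fail. The paper restricts all Hessian-dependent quantities to finitely many points using a ghost-sample Rademacher argument (Lemmas~\ref{lem:kerneldevghostsample}--\ref{lem:combinedradbd}): bounds are stated in terms of $H_{\max}$ evaluated on $S \cup S'$, a high-probability-over-$S,S'$ estimate is obtained for fixed $\theta_0$, a union bound over the $2n$ points controls $H_{\max}$ for fixed $S,S'$, and Fubini--Tonelli stitches the two sources of randomness together (this also produces the $O(mn)$ factor in the failure probability, which your sketch does not account for). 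Finally, a small but telling slip: the a priori parameter drift is $\|\theta_s - \theta_0\|_2 = O(\sqrt{s})$ (Lemma~\ref{lem:aprioriparambd}), \emph{not} $\tilde O(T/\sqrt m)$; the $1/\sqrt m$ appears only through the Hessian, so the kernel drift is $O(\sqrt T/\sqrt m)$, leading to $R \sim \sqrt T$ and the $m = \tilde\Omega(R^8/\epsilon^2) = \tilde\Omega(T^4/\epsilon^2)$ requirement.
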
 
\subsection{Interpretation and Consequences}
Theorem \ref{thm:main} compares the dynamics of the residual $r_t(x) := f(x; \theta_t) - f^*(x)$ of our finite-width model trained on finitely many samples to the idealized dynamics of a kernel method $\exp(-T_K t)r_0$ with infinite data.  We recall that if $\phi_i$ is an eigenfunction of $T_K$ with eigenvalue $\sigma_i$ then $\langle \exp(-T_K t) r_0, \phi_i \rangle_\rho = \exp(-\sigma_i t) \langle r_0, \phi_i \rangle_\rho$.  Thus the term $\exp(-T_K t) r_0$ learns the projection along eigenfunction $\phi_i$ linearly at rate $\sigma_i$.  Whenever the NTK at initialization $K_0$ concentrates around $K$, the residual $r_t$ will inherit this bias of the kernel dynamics $\exp(-T_K t) r_0$.  Furthermore, the bound for the projected difference $\norm{P_k(r_t - \exp(-T_{K}t)r_0)}_{L^2(X, \rho)}^2$ is smaller whenever
$\sigma_k$ is large.  Therefore the bias appears more pronounced along eigendirections with large eigenvalues.
\paragraph{Consequences for the special case $K = K^\infty$} 
In the infinite width limit, we have that $K_0$ approaches $K^\infty$ for general architectures \citep{gregyangtp2}.  For fixed $x, x'$, by concentration results the typical rate of convergence is $|K_0(x, x') - K^\infty(x, x')| = \tilde{O}(1/\sqrt{m})$ with high probability \citep{du2018gradient, du2019gradient, huang2019dynamics}.  
Bounds that hold uniformly over $x, x'$ of the same rate were provided by \citet{bowman2022implicit} and \citet{buchanan2021deep}.  
A more pessimistic estimate of $1/m^{1/4}$ is provided by \citet{aroraexact}. Even if the rate is $1/m^{1/4}$, we have that $m = \tilde{\Omega}(\epsilon^{-2})$ is strong enough to ensure that $|K_0(x, x') - K^\infty(x, x')| \leq \epsilon^{1/2}$.  Given these results, it is reasonable to make the following assumption for the architectures we consider (see Appendix~\ref{sec:assumptiondisc}).
\begin{ass}\label{ass:kernelconc}
$m = \tilde{\Omega}(\epsilon^{-2})$ suffices to ensure that $\norm{K_0 - K^{\infty}}_{L^2(X \times X, \rho \otimes \rho)}^2 \leq \epsilon$ holds with high probability $1 - \delta(m)$ over the initialization $\theta_0$ where $\delta(m) = o(1)$.
\end{ass}
\noindent
Under this assumption, by setting $K = K^\infty$ in Theorem \ref{thm:main} we get the following corollary.
\begin{cor}\label{cor:maincor}
Let $\delta(m)$ be defined as in Assumption~\ref{ass:kernelconc} which we assume to hold.  Let $T \geq 1$ and $\epsilon > 0$.  For $k \in \NN$ let $P_k : L^2(X, \rho) \rightarrow L^2(X, \rho)$ denote the orthogonal projection onto the span of the top $k$ eigenfunctions of the operator $T_{K^\infty}$ defined in Equation~\eqref{eq:tkinftydef}.  Let $\sigma_k > 0$ denote the $k$-th eigenvalue of $T_{K^\infty}$.  Then $m = \tilde{\Omega}(T^4 / \epsilon^2)$ and $n = \tilde{\Omega}(T^2/\epsilon^2)$ suffices to ensure with probability at least $1 - O(mn)\exp(-\Omega(\log^2(m)) - \delta(m)$ that for all $t \leq T$ and $k \in \NN$
\[\norm{P_k(r_t - \exp(-T_{K^\infty}t)r_0)}_{L^2(X, \rho)}^2 
\leq \brackets{\frac{1 - \exp(-\sigma_k t)}{\sigma_k}}^2 \cdot \epsilon \]
and
\[\norm{r_t - \exp(-T_{K^\infty}t)r_0}_{L^2(X, \rho)}^2 \leq t^2 \cdot \epsilon.\]
\end{cor}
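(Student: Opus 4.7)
The plan is to derive Corollary~\ref{cor:maincor} as a direct specialization of Theorem~\ref{thm:main} by setting $K = K^\infty$ and absorbing the $\norm{K - K_0}_{L^2(X^2, \rho \otimes \rho)}^2$ term into the slack $\epsilon$ via Assumption~\ref{ass:kernelconc}. No new analytic work is required beyond a rescaling of accuracy parameters and a union bound on failure events.

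First I would apply Theorem~\ref{thm:main} with the choice $K = K^\infty$ and accuracy $\epsilon/2$ in place of $\epsilon$. The required width and sample complexity remain $m = \tilde{\Omega}(T^4/\epsilon^2)$ and $n = \tilde{\Omega}(T^2/\epsilon^2)$, since constant factors are absorbed by the $\tilde{\Omega}$ notation. On a high-probability event $\mathcal{E}_1$ of probability at least $1 - O(mn)\exp(-\Omega(\log^2 m))$, the theorem yields simultaneously for all $t \leq T$ and $k \in \NN$ the projected bound
\[\norm{P_k(r_t - \exp(-T_{K^\infty} t)r_0)}_{L^2(X,\rho)}^2 \leq \brackets{\frac{1 - \exp(-\sigma_k t)}{\sigma_k}}^2 \brackets{4 \norm{f^*}_\infty^2 \norm{K^\infty - K_0}_{L^2(X^2, \rho\otimes\rho)}^2 + \tfrac{\epsilon}{2}}\]
together with the analogous global bound carrying the prefactor $t^2$.

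Next I would invoke Assumption~\ref{ass:kernelconc} with tolerance $\epsilon' = \epsilon/(8\norm{f^*}_\infty^2)$. Since $\norm{f^*}_\infty = O(1)$ by Assumption~\ref{ass:boundedtarget}, the width requirement $m = \tilde{\Omega}((\epsilon')^{-2}) = \tilde{\Omega}(\epsilon^{-2})$ imposed by Assumption~\ref{ass:kernelconc} is already subsumed by $m = \tilde{\Omega}(T^4/\epsilon^2)$ given $T \geq 1$. This produces an event $\mathcal{E}_2$ of probability at least $1 - \delta(m)$ on which $4\norm{f^*}_\infty^2 \norm{K^\infty - K_0}_{L^2}^2 \leq \epsilon/2$, so the bracketed factor in the display above collapses to $\epsilon/2 + \epsilon/2 = \epsilon$.

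Finally, a union bound shows that $\mathcal{E}_1 \cap \mathcal{E}_2$ holds with probability at least $1 - O(mn)\exp(-\Omega(\log^2 m)) - \delta(m)$, and on this intersection the two inequalities of Theorem~\ref{thm:main} simplify precisely to the two statements claimed in Corollary~\ref{cor:maincor}. The main obstacle, such as it is, lies entirely upstream in establishing Theorem~\ref{thm:main} and in justifying Assumption~\ref{ass:kernelconc}; given both, the corollary is an immediate bookkeeping exercise in rescaling $\epsilon$ and combining failure probabilities.
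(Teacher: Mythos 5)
Your proposal is correct and follows exactly the route the paper intends: the text preceding Corollary~\ref{cor:maincor} explicitly says it is obtained ``by setting $K = K^\infty$ in Theorem~\ref{thm:main}'' under Assumption~\ref{ass:kernelconc}. The only content beyond that is the $\epsilon$-rescaling and union bound you carry out, which the paper leaves implicit; your bookkeeping is sound.
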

\noindent
Informally Corollary \ref{cor:maincor} states that up to the stopping time $T$, we have that $r_t \approx \exp(-T_{K^\infty} t) r_0$.  As discussed before, the term $\exp(-T_{K^\infty} t) r_0$ projected along the $i$-th eigenfunction of $K^\infty$ decays linearly, $\langle \exp(-T_{K^\infty} t) r_0, \phi_i \rangle_\rho = \exp(-\sigma_i t) \langle r_0, \phi_i \rangle_\rho$. Given that $K^\infty$ tends to have a highly skewed spectrum (see, e.g.\ Figure~\ref{fig:ntk_spec}), the effect the magnitude of $\sigma_i$ has on the convergence rate is particularly relevant.  Furthermore the bound on the projected difference $\norm{P_k(r_t - \exp(-T_{K^\infty}t)r_0)}_{L^2(X, \rho)}$ is smaller whenever $\sigma_k$ is large due to the dependence of the bound on the inverse eigenvalue $\sigma_k^{-1}$.  Thus we have that the bias along the top eigenfunctions is particularly pronounced.  Hence we make the following important observation.
\begin{obs}\label{obs:spectralbias}
At the beginning of training the network learns projections along eigenfunctions of the Neural Tangent Kernel integral operator $T_{K^\infty}$ at rates corresponding to their eigenvalues.  This is particularly true for the eigenfunctions with large eigenvalues.
\end{obs}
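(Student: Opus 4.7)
The plan is to compare the dynamics $\partial_t r_t = -T_n^t r_t$ of the network residual with the idealized dynamics $\partial_t(\exp(-T_K t)r_0) = -T_K \exp(-T_K t) r_0$ via variation of parameters. Setting $e_t := r_t - \exp(-T_K t) r_0$, subtracting these evolution equations yields
\[ \partial_t e_t = -T_K e_t - (T_n^t - T_K) r_t, \]
and since $e_0 = 0$, Duhamel's formula gives
\[ e_t = -\int_0^t \exp\bigl(-T_K(t-s)\bigr)\,(T_n^s - T_K)\,r_s\,ds. \]
Since $P_k$ projects onto eigenspaces of the self-adjoint operator $T_K$ it commutes with $\exp(-T_K(t-s))$, and on its range the latter has operator norm at most $\exp(-\sigma_k(t-s))$, while the unrestricted $\exp(-T_K(t-s))$ has operator norm at most $1$. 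Taking $L^2(X,\rho)$ norms through the Duhamel integral, it therefore suffices to prove a uniform estimate
\[ \sup_{s \in [0,T]} \norm{(T_n^s - T_K)\,r_s}_{L^2(\rho)}^2 \leq 4\norm{f^*}_\infty^2\,\norm{K - K_0}_{L^2(\rho\otimes\rho)}^2 + \epsilon, \]
for then pulling this constant out and computing $\int_0^t e^{-\sigma_k(t-s)}\,ds = (1 - e^{-\sigma_k t})/\sigma_k$ (respectively $\int_0^t 1\,ds = t$) yields the two claimed inequalities upon squaring.

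To prove the uniform bound I split
\[ T_n^s - T_K = \bigl(T_n^s - \widehat{T}_0\bigr) + \bigl(\widehat{T}_0 - T_{K_0}\bigr) + \bigl(T_{K_0} - T_K\bigr), \]
where $\widehat{T}_0 g(x) := \tfrac{1}{n}\sum_i K_0(x,x_i) g(x_i)$ uses the NTK at initialization on the empirical measure and $T_{K_0}$ is the analog of \eqref{eq:tkdefinition} with $K$ replaced by $K_0$. The third term is the (time-independent) kernel-mismatch contribution: a pointwise Cauchy--Schwarz estimate followed by integration in $x$ gives
\[ \norm{(T_{K_0} - T_K)\,r_s}_{L^2(\rho)}^2 \leq \norm{K_0 - K}_{L^2(\rho\otimes\rho)}^2 \cdot \norm{r_s}_{L^2(\rho)}^2, \]
and since Assumption~\ref{ass:antisymmetric} gives $r_0 = -f^*$, a bootstrap based on Lipschitzness of $f(\cdot;\theta)$ in $\theta$ on a ball $\overline{B}(\theta_0, R)$ yields $\norm{r_s}_{L^2(\rho)} \leq 2\norm{f^*}_\infty$ up to the stopping time, reproducing the $4\norm{f^*}_\infty^2\norm{K-K_0}^2$ term exactly.

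The remaining two pieces together must be kept below $\epsilon$. The sampling piece $\widehat{T}_0 - T_{K_0}$ is a centered empirical process: conditional on $\theta_0$, at each fixed $x$ the value $[(\widehat{T}_0 - T_{K_0})\,r_s](x)$ is a normalized sum of $n$ bounded (w.h.p.) independent random variables, so Bernstein-type concentration combined with a covering argument (both in $x$ to uniformize the $L^2(\rho)$ norm and in $s$ to uniformize over $[0,T]$, using the Lipschitzness of $s \mapsto r_s$ coming from $\partial_s r_s = -T_n^s r_s$) drives this contribution below $\epsilon/2$ whenever $n = \tilde{\Omega}(T^2/\epsilon^2)$. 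The NTK-drift piece satisfies
\[ \norm{(T_n^s - \widehat{T}_0)\,r_s}_{L^2(\rho)}^2 \leq \Bigl(\sup_{x,x'}|K_s(x,x') - K_0(x,x')|\Bigr)^2 \norm{r_s}_{L^2(\rho)}^2, \]
and the Hessian bound of \citet{liuonlinearity} for the architectures of Section~\ref{sec:architecture} yields $\sup_{x,x'}|K_s(x,x') - K_0(x,x')| = \tilde{O}(\norm{\theta_s - \theta_0}_2/\sqrt{m})$, while the standard gradient-flow identity $\Phi(\theta_0) - \Phi(\theta_s) = \int_0^s \norm{\nabla \Phi(\theta_u)}_2^2\,du$ together with Cauchy--Schwarz bounds $\norm{\theta_s - \theta_0}_2^2 \leq s\,\Phi(\theta_0) \leq T\norm{f^*}_\infty^2$. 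Choosing $m = \tilde{\Omega}(T^4/\epsilon^2)$ absorbs the polynomial factors of $T$ accumulating through these estimates and drives the NTK-drift piece below $\epsilon/2$.

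The main obstacle is that each of the three bounds above presupposes that $\theta_s$ has not left a neighborhood of $\theta_0$ on which both the Hessian bound and $\norm{r_s}_{L^2(\rho)} \leq 2\norm{f^*}_\infty$ are in force, while the guarantee that the trajectory stays in that neighborhood is itself a consequence of those same bounds. I would close the loop by the customary bootstrap: define $\tau := \inf\{s : \text{one of the working bounds fails}\}$, prove every estimate strictly up to $\tau$ by continuity of the flow, and conclude $\tau \geq T$ on a high-probability event. The $O(mn)\exp(-\Omega(\log^2 m))$ failure probability is then assembled by a union bound over the initial concentration events for $K_0$ and the Hessian (which carry the $\exp(-\Omega(\log^2 m))$ sub-Gaussian tails characteristic of NTK analyses of wide networks), the empirical-process concentration over the $n$ samples, and the nets in $x$ and $s$ used for uniformization.
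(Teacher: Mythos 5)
Your outer skeleton---Duhamel's formula for $e_t = r_t - \exp(-T_K t)r_0$, commuting $P_k$ through $\exp(-T_K(t-s))$ to extract $(1-e^{-\sigma_k t})/\sigma_k$, and a stopping-time bootstrap to keep $\theta_s\in\overline{B}(\theta_0,R)$---is precisely the paper's machinery (Lemmas~\ref{lem:dampeddevfunc}--\ref{lem:kerdiffrecipe}). But your three-way decomposition $T_n^s - T_K = (T_n^s - \widehat{T}_0)+(\widehat{T}_0-T_{K_0})+(T_{K_0}-T_K)$ diverges from the paper's $T_n^s-T_K = (T_n^s-T_n)+(T_n-T_K)$ in a way that creates two genuine gaps. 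First, your $(T_{K_0}-T_K)$ piece is bounded by $\norm{K_0-K}_{L^2(\rho\otimes\rho)}\norm{r_s}_{L^2(\rho)}$, but $\norm{r_s}_{L^2(\rho)}$ is the \emph{test} residual and is not monotone under gradient flow; your claim that Lipschitzness on the ball gives $\norm{r_s}_{L^2(\rho)}\le 2\norm{f^*}_\infty$ is wrong, since $|f(x;\theta_s)|\le BR \sim B\sqrt{T}\norm{f^*}_\infty$ on the ball, not $O(\norm{f^*}_\infty)$. The paper avoids this entirely: because $T_n$ and $T_n^s$ both use the empirical measure, the Cauchy--Schwarz step in Lemma~\ref{lem:maintnminustntbd} produces the \emph{training} residual $\norm{\hat{r}_t}_{\RR^n}$, which is monotone under gradient flow and so bounded by $\norm{f^*}_\infty$; this is exactly how the theorem's clean $4\norm{f^*}_\infty^2\norm{K-K_0}^2$ constant (independent of $T$) arises.

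Second, your treatment of the sampling piece $(\widehat{T}_0-T_{K_0})r_s$ as ``a normalized sum of $n$ bounded independent random variables'' is incorrect: $r_s = f(\cdot;\theta_s)-f^*$ is a function of \emph{all} the training samples through $\theta_s$, so the summands $K_0(x,x_i)r_s(x_i)$ are not independent, and Bernstein's inequality does not apply pointwise in $s$. Covering in $s$ does not fix this, because the trajectory $\{r_s:s\in[0,T]\}$ is itself a data-dependent family---uniform deviation bounds require a supremum over a \emph{fixed} function class. This is precisely why the paper bounds $\sup_{\theta\in\overline{B}(\theta_0,R)}\norm{(T_K-T_n)r(\cdot;\theta)}$ via Rademacher complexity and proper coverings (Lemma~\ref{lem:longlemma}), and why the central innovation---the effective rank of the FIM controlling the metric entropy of the linearized class (Corollary~\ref{cor:covnum})---is indispensable. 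Without it, a Euclidean cover of $\overline{B}(\theta_0,R)$ costs $n\gg p$, which is exactly the underparameterization constraint this paper is designed to remove. You also assert $\sup_{x,x'}|K_s(x,x')-K_0(x,x')|=\tilde{O}(\norm{\theta_s-\theta_0}/\sqrt{m})$, but \citet{liuonlinearity}'s Hessian bound holds only at finitely many inputs with high probability; the paper explicitly flags this and handles it through the ghost-sample Rademacher argument of Lemma~\ref{lem:kerneldevghostsample}, which you would need to reproduce.
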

\paragraph{Scaling with respect to width and number of training data samples}
Now let us interpret how the width $m$ and number of training samples $n$ in the theorem scale.  We note that as long as $n \leq m^{\alpha}$ for some $\alpha > 0$ the failure probability $O(mn) \exp(-\Omega(\log^2(m)))$ goes to zero as $m \rightarrow \infty$.  Thus once $m$ and $n$ are sufficiently large relative to the stopping time $T$ and precision $\epsilon$, they can both tend to infinity at just about any rate to achieve a high probability bound.  We also observe that $m$ and $n$ both have the same scaling with respect to $\epsilon$, namely $m, n = \tilde{\Omega}(\epsilon^{-2})$.  Thus for a fixed stopping time $T$ we can send $m$ and $n$ to infinity at the same rate $m \sim n$ to send the error $\epsilon \rightarrow 0$.  This is significant as typical NTK analysis requires $m = \Omega(poly(n))$.  We reach following important conclusion.  
\begin{obs}\label{obs:linearover}
\textit{The network will inherit the bias of the kernel at the beginning of training even when the width $m$ only grows linearly with the number of samples $n$.}
\end{obs}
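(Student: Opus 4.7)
The plan is to treat the idealized kernel flow $g_t := \exp(-T_K t) r_0$ as a reference trajectory and bound $e_t := r_t - g_t$ via a variation-of-parameters representation, then control the resulting integrand through a three-way decomposition of $T_K - T_n^t$. By Assumption~\ref{ass:antisymmetric} we have $e_0 = 0$ since $r_0 = g_0 = -f^*$. Subtracting the defining ODEs and adding and subtracting $T_K r_t$ gives $(\partial_t + T_K)e_t = (T_K - T_n^t) r_t$, so Duhamel's formula yields
\[ e_t = \int_0^t \exp(-T_K(t-s))(T_K - T_n^s) r_s\, ds. \]
Since $T_K$ is positive semidefinite, $\norm{\exp(-T_K u)}_{op} \leq 1$, which gives the unprojected bound $\norm{e_t}_{L^2(X,\rho)} \leq t \sup_{s \leq T} \norm{(T_K - T_n^s) r_s}_{L^2(X,\rho)}$. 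The spectral projection $P_k$ commutes with $\exp(-T_K u)$ and the restriction of $\exp(-T_K u)$ to the range of $P_k$ has operator norm at most $\exp(-\sigma_k u)$, yielding $\norm{P_k e_t}_{L^2} \leq \frac{1-\exp(-\sigma_k t)}{\sigma_k}\sup_{s\leq T}\norm{(T_K - T_n^s) r_s}_{L^2}$. Squaring recovers the two forms in the theorem, modulo the sup estimate.

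To bound the sup, I would use the telescoping decomposition
\[ T_K - T_n^s = (T_K - T_{K_0}) + (T_{K_0} - T_n^{K_0}) + (T_n^{K_0} - T_n^{K_s}), \]
where $T_n^H g(x) := n^{-1}\sum_i H(x, x_i) g(x_i)$ denotes the empirical integral operator for a kernel $H$, and $T_{K_0}$ its population counterpart. For the first piece, the Hilbert--Schmidt identity $\norm{T_K - T_{K_0}}_{HS} = \norm{K - K_0}_{L^2(X^2, \rho\otimes\rho)}$ together with a lazy-training bound $\norm{r_s}_{L^2} \leq 2\norm{f^*}_\infty$ yields a contribution of size at most $2\norm{f^*}_\infty \norm{K - K_0}_{L^2}$, squaring to the displayed $4\norm{f^*}_\infty^2 \norm{K - K_0}_{L^2}^2$. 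The second piece is the classical empirical-process concentration of $T_n^{K_0}$ to $T_{K_0}$ for the fixed kernel $K_0$, controllable via Bernstein-type bounds and a union bound over a time-net in $[0, T]$; this drives the $n = \tilde{\Omega}(T^2/\epsilon^2)$ sample complexity. The third piece captures the kernel drift during training: invoking the Hessian-norm bound of \citet{liuonlinearity} together with energy-dissipation control of $\norm{\theta_s - \theta_0}_2$, one obtains $\norm{K_s - K_0}$ bounded uniformly in $s \leq T$ at the pessimistic $1/m^{1/4}$ rate consistent with Assumption~\ref{ass:kernelconc}, giving the $m = \tilde{\Omega}(T^4/\epsilon^2)$ requirement. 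The cross-terms from squaring the three-part sum are absorbed into the $\epsilon$ slack via a Young inequality, since the dominant $(T_K - T_{K_0})r_s$ piece has uniformly bounded $L^2$ norm.

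The main obstacle is the coupled lazy-training estimate that simultaneously controls $\norm{\theta_s - \theta_0}_2$, $\norm{K_s - K_0}$, and $\norm{r_s}_{L^2(\rho)}$ uniformly over $s \leq T$ with high probability over $\theta_0$. The natural approach is a bootstrap: assume the three bounds hold up to a stopping time $s^*$; use the bounded residual and the Hessian bound to upper bound the gradient-flow displacement and hence the kernel drift; then verify that this improves the bounds and extends $s^*$ until $s^* = T$. A secondary difficulty is propagating the high-probability concentration uniformly in time through this bootstrap, which is what produces the $O(mn)\exp(-\Omega(\log^2 m))$ failure probability from simultaneous control over the $n$ training samples and the $m$ initialization coordinates.
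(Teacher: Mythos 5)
Your Duhamel representation and the projection-norm argument are exactly the paper's damped-deviations machinery (Lemmas~\ref{lem:dampeddevfunc} and~\ref{lem:kerdiffrecipe}), and the spirit of the decomposition is close; but the specific way you split $T_K - T_n^s$ and the way you propose to control the pieces both have gaps, and the second gap is fatal to the very observation you are trying to justify.

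First, your leading piece $(T_K - T_{K_0})\,r_s$ is a \emph{population} operator applied to $r_s$, so the bound you want needs $\norm{r_s}_{L^2(X,\rho)} \leq 2\norm{f^*}_\infty$ uniformly in $s$. That inequality is not available: gradient flow gives monotonicity only for the \emph{empirical} residual $\norm{\hat r_s}_{\RR^n}$, not for the population $L^2$-norm, and controlling $\norm{r_s}_{L^2(\rho)}$ is essentially the content of the theorem itself, so assuming it is circular. The paper sidesteps this by organizing the split as $(T_K - T_n) + (T_n - T_n^s)$, so that the $K - K_0$ difference only ever multiplies an \emph{empirical} operator; the resulting bound in Lemma~\ref{lem:maintnminustntbd} pulls out $\norm{\hat r_t}_{\RR^n} \leq \norm{\hat r_0}_{\RR^n} \leq \norm{f^*}_\infty$ by Cauchy--Schwarz over the sample, which is free. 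Your bootstrap acknowledgement hints at patching this, but even if it succeeds the lazy-training displacement gives $\norm{f(\cdot;\theta_s)}_\infty = O(BR) = O(\sqrt T)$, not $O(1)$, so the constant you wrote is wrong by a $T$-dependent factor.

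Second, and more seriously, ``Bernstein plus a union bound over a time-net'' does not give the uniform control you need for $(T_{K_0} - T_n^{K_0})\,r_s$. The function $r_s$ depends on the training sample through the entire optimization trajectory, so you must establish concentration of the empirical operator \emph{uniformly over the function class} $\{\,r(\cdot;\theta) : \theta \in \overline{B}(\theta_0,R)\,\}$, not merely over a fixed countable family indexed by time. The naive route is a parameter-space cover of the ball, which gives $\log\mathcal{N} = O(p\log(R/\epsilon))$ and hence $n = \tilde{\Omega}(p/\epsilon^2)$ --- that is, the underparameterized regime of prior work. This would \emph{contradict} Observation~\ref{obs:linearover}, since then $m$ could never scale linearly with $n$. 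The whole point of the paper's Corollary~\ref{cor:covnum} is to replace this $p$ by the effective rank $\tilde p(F^{1/2},\cdot)$ of the Fisher Information Matrix, which is $\tilde O(1)$ as $m\to\infty$ for fixed $R$ and $\epsilon$. That is the mechanism that makes $m,n = \tilde\Omega(\epsilon^{-2})$ with no cross-dependence possible, and hence makes linear overparameterization achievable. Your proposal does not mention the FIM or any covering-number argument on the linearized model, so it cannot yield the sample complexity on which the observation rests. Finally, the transfer from empirical to population measures for the $(T_n^{K_0} - T_n^{K_s})$ piece requires the ghost-sample Rademacher argument of Lemma~\ref{lem:kerneldevghostsample} (needed because the Hessian bound of \citet{liuonlinearity} only holds pointwise in $x$); you gesture at it but do not supply the mechanism.
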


\paragraph{Scaling with respect to stopping time}  
We will now address the scaling with respect to the stopping time $T$.  The relevant question is how quickly the terms $P_k \exp(-T_{K^\infty}t)r_0 $ and $\exp(-T_{K^\infty}t)r_0$ converge to zero.  We observe that
\[ \norm{P_k \exp(-T_{K^\infty}t) r_0}_{L^2(X, \rho)} \leq \exp(-\sigma_k t) \norm{r_0}_{L^2(X, \rho)} \leq \exp(-\sigma_k t) \norm{f^*}_{L^\infty(X, \rho)} , \] 
where we have used the antisymmetric initialization $r_0 = f(\bullet; \theta_0) - f^* = 0 - f^* = -f^*$ 
and the basic inequality $\norm{\bullet}_{L^2(X, \rho)} \leq \norm{\bullet}_{L^\infty(X, \rho)}$.  Based on this we have that $t \geq \log(\norm{f^*}_{L^\infty(X, \rho)}/\epsilon) / \sigma_k$ suffices to ensure $\norm{P_k \exp(-T_{K^\infty}t) r_0}_{L^2(X, \rho)} \leq \epsilon$.  Using this fact we get the following corollary.
\begin{cor}\label{cor:testerrbd}
Let $\delta(m)$ be defined as in Assumption~\ref{ass:kernelconc} which is assumed to hold.  Let $T = \tilde{\Omega}(1/\sigma_k)$ and $\epsilon > 0$.  For $k \in \NN$ let $P_k : L^2(X, \rho) \rightarrow L^2(X, \rho)$ denote the orthogonal projection onto the span of the top $k$ eigenfunctions of the operator $T_{K^\infty}$ defined in Equation~\eqref{eq:tkinftydef}.  Let $\sigma_k > 0$ denote the $k$-th eigenvalue of $T_{K^\infty}$.  Then $m = \tilde{\Omega}(\sigma_k{^{-8}} / \epsilon^2)$ and $n = \tilde{\Omega}(\sigma_k^{-6}/\epsilon^2)$ suffices to ensure that with probability at least $1 - O(mn)\exp(-\Omega(\log^2(m)) - \delta(m)$
\[ \norm{P_k r_T}_{L^2(X, \rho)}^2 \leq \epsilon \]
and in particular
\[ \frac{1}{2} \norm{r_T}_{L^2(X, \rho)}^2 \leq \tilde{O}(\epsilon) + \norm{(I - P_k)r_0}_{L^2(X, \rho)}^2. \]
\end{cor}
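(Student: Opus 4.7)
My proof plan is to reduce Corollary~\ref{cor:testerrbd} directly to Corollary~\ref{cor:maincor} by instantiating the latter with a rescaled precision parameter $\epsilon' = \Theta(\sigma_k^2 \epsilon)$, then combining the $L^2$ approximation guarantee $r_T \approx \exp(-T_{K^\infty} T) r_0$ with the exponential decay of the idealized kernel dynamics along the top $k$ eigenfunctions that we get once $T = \tilde{\Omega}(1/\sigma_k)$.

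For the first inequality $\norm{P_k r_T}_{L^2(X,\rho)}^2 \leq \epsilon$, I would write
\[ P_k r_T = P_k\bigl(r_T - \exp(-T_{K^\infty} T) r_0\bigr) + P_k \exp(-T_{K^\infty} T) r_0 \]
and apply $\norm{a+b}^2 \leq 2\norm{a}^2 + 2\norm{b}^2$. The first summand is controlled by the first inequality of Corollary~\ref{cor:maincor}; using the crude bound $(1 - \exp(-\sigma_k T))/\sigma_k \leq 1/\sigma_k$, its squared $L^2$ norm is at most $\epsilon'/\sigma_k^2$. The second summand is handled by the observation in the paragraph preceding the corollary: since $\exp(-T_{K^\infty} T)$ is diagonal in the eigenbasis of $T_{K^\infty}$, its restriction to the range of $P_k$ has operator norm at most $\exp(-\sigma_k T)$, so together with the antisymmetric initialization $r_0 = -f^*$ and Assumption~\ref{ass:boundedtarget} the squared norm is at most $\exp(-2\sigma_k T) \norm{f^*}_\infty^2$. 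Taking $T \geq (2\sigma_k)^{-1} \log(\norm{f^*}_\infty^2/\epsilon)$, which is $\tilde{\Omega}(1/\sigma_k)$, together with $\epsilon' = \Theta(\sigma_k^2 \epsilon)$, makes both terms $O(\epsilon)$.

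For the second inequality, I would apply Pythagoras $\norm{r_T}_{L^2}^2 = \norm{P_k r_T}_{L^2}^2 + \norm{(I - P_k) r_T}_{L^2}^2$, bound the first summand by $\epsilon$ from what was just proved, and for the second summand decompose
\[ (I - P_k) r_T = (I - P_k)\bigl(r_T - \exp(-T_{K^\infty} T) r_0\bigr) + (I - P_k) \exp(-T_{K^\infty} T) r_0. \]
The second inequality of Corollary~\ref{cor:maincor} gives that the first piece has squared norm at most $T^2 \epsilon' = \tilde{O}(\epsilon)$ for our choice of $T$ and $\epsilon'$. For the remaining piece I would exploit that $I - P_k$ commutes with $\exp(-T_{K^\infty} T)$ (same eigenbasis) and that $\exp(-T_{K^\infty} T)$ has operator norm at most $1$, so $\norm{(I - P_k) \exp(-T_{K^\infty} T) r_0}_{L^2} \leq \norm{(I - P_k) r_0}_{L^2}$. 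Combining the three pieces yields $\tfrac{1}{2}\norm{r_T}_{L^2}^2 \leq \tilde{O}(\epsilon) + \norm{(I-P_k) r_0}_{L^2}^2$.

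The only bookkeeping is checking how the width and sample complexity hypotheses of Corollary~\ref{cor:maincor} scale under $T = \tilde{\Theta}(1/\sigma_k)$ and $\epsilon' = \Theta(\sigma_k^2 \epsilon)$: $m = \tilde{\Omega}(T^4/\epsilon'^2)$ becomes $\tilde{\Omega}(\sigma_k^{-8}/\epsilon^2)$ and $n = \tilde{\Omega}(T^2/\epsilon'^2)$ becomes $\tilde{\Omega}(\sigma_k^{-6}/\epsilon^2)$, matching the statement, and the failure probability is inherited verbatim. I do not expect any genuine obstacle; the corollary is essentially a change-of-variables in Corollary~\ref{cor:maincor} that trades inverse eigenvalue factors against approximation precision in order to push the time horizon past the point at which $P_k \exp(-T_{K^\infty} t) r_0$ has decayed below $\epsilon$.
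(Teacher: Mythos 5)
Your proposal is correct and follows the same route the paper sketches in the paragraph immediately preceding the corollary: use the triangle-inequality decomposition $P_k r_T = P_k(r_T - \exp(-T_{K^\infty}T)r_0) + P_k\exp(-T_{K^\infty}T)r_0$, bound the first piece via Corollary~\ref{cor:maincor} with the crude estimate $(1-\exp(-\sigma_k T))/\sigma_k \leq 1/\sigma_k$, bound the second via $\norm{P_k\exp(-T_{K^\infty}T)r_0} \leq \exp(-\sigma_k T)\norm{f^*}_\infty$ from antisymmetric initialization, and close the second inequality by Pythagoras together with the contraction $\norm{\exp(-T_{K^\infty}T)}_{op} \leq 1$ on the range of $I-P_k$. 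Your rescaling $\epsilon' = \Theta(\sigma_k^2\epsilon)$ and $T = \tilde{\Theta}(1/\sigma_k)$ reproduces the stated $m = \tilde{\Omega}(\sigma_k^{-8}/\epsilon^2)$ and $n = \tilde{\Omega}(\sigma_k^{-6}/\epsilon^2)$ exactly, so the bookkeeping is sound.
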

\noindent
The interpretation of the Corollary~\ref{cor:testerrbd} is that the stopping time $T = \tilde{\Omega}(1/\sigma_k)$ is long enough to ensure that the network has learned the top $k$ eigenfunctions to $\epsilon$ accuracy provided that $m = \tilde{\Omega}(\sigma_k^{-8} \epsilon^{-2})$ and $n = \tilde{\Omega}(\sigma_k^{-6} \epsilon^{-2})$.  We note that the second conclusion of Corollary~\ref{cor:testerrbd} is a bound on the test error $\frac{1}{2} \norm{r_t}_{L^2(X, \rho)}^2$.  From the antisymmetric initialization $r_0 = -f^*$ so that $\norm{(I - P_k) r_0}_{L^2(X, \rho)}^2 = \norm{(I - P_k) f^*}_{L^2(X, \rho)}^2$.  For a general target $f^*$, this quantity can decay arbitrary slowly with respect to $k$.  Our goal with Theorem~\ref{thm:main} was not to get a learning guarantee, but to describe how the bias of the kernel $K^\infty$ is inherited by the finite-width network at the beginning of training even for general target functions.  Nevertheless we will briefly sketch how it is possible to get a learning guarantee from Corollary~\ref{cor:maincor} when $f^*$ is in the RKHS of $K^\infty$.  
In this case one can show that $\norm{\exp(-T_{K^\infty} t) r_0}_{L^2(X, \rho)}^2 = O\parens{\frac{\norm{f^*}_{\mathcal{H}}^2}{t}}$ where $\norm{\bullet}_{\mathcal{H}}$ is the RKHS norm.  Then treating $\norm{f^*}_{\mathcal{H}}$ as a constant one can choose the stopping time $T \sim \epsilon^{-1}$ to bring the test error to $\epsilon$ provided that $m, n = \tilde{\Omega}(poly(\epsilon^{-1}))$.  More generally \citet{yarotskyasymptotics} derive sufficient conditions for the power law $\norm{\exp(-T_{K^\infty} t) r_0}_{L^2(X, \rho)}^2 \sim C t^{-\xi}$ to hold.  Using a similar argument in this case one can choose the stopping time $T \sim \epsilon^{-1/\xi}$ and get a learning guarantee for $m,n = \tilde{\Omega}(poly(\epsilon^{-1}))$.

\subsection{Technical Comparison to Prior Work}
\citet{Lee2019WideNN,aroraexact} compared the network $f(x; \theta)$ to its linearization $f_{lin}(x; \theta) := \langle \nabla_\theta f(x; \theta_0), \theta - \theta_0 \rangle + f(x;\theta_0)$ in the regime where $m = \Omega(poly(n))$.  When $m = \Omega(poly(n))$ one can show the loss converges to zero and the parameter changes $\norm{\theta_t - \theta_0}_2$ are bounded.  By contrast we avoid the condition $m = \Omega(poly(n))$ by employing a stopping time.  \citet{arora2019finegrained, caounderstanding, basri2020frequency} proved statements similar to Theorem~\ref{thm:main} and Corollary~\ref{cor:maincor} that roughly correspond to replacing $T_{K^\infty}$ with its Gram matrix induced by the training data $(G^\infty)_{i,j} = K^\infty(x_i, x_j)$ and replacing $\rho$ with the empirical measure $\hat{\rho} = \frac{1}{n} \sum_{i = 1}^n \delta_{x_i}$.  \citet{arora2019finegrained, basri2020frequency} operate in the regime where $m = \Omega(poly(n))$ and as a benefit do not need to employ a stopping time.  \citet{caounderstanding} instead of requiring $m = \Omega(poly(n))$ requires that the width $m$ satisfies at least $m = \Omega(\max\{\sigma_k^{-14}, \epsilon^{-6}\})$ where $\sigma_k$ is the cutoff eigenvalue.  The most similar work is \citet{bowman2022implicit}, which demonstrated a version of Corollary~\ref{cor:maincor} for a shallow feedforward network that is underparameterized.  If $p$ is the total number of parameters, they require $m = \tilde{\Omega}(\epsilon^{-1} T^2)$ and $n = \tilde{\Omega}(\epsilon^{-1} p T^2)$.  This requires the network to be greatly underparameterized $n \gg p$.  Our result was able to remove the dependence of $n$ on $p$ and demonstrate the result for general deep architectures at the expense of slightly worse scaling with respect to $T$ and $\epsilon$.
\section{Proof Sketch}
For simplicity we will go through the case where $K = K^\infty$.  At a high level the proof revolves around bounding the difference between the operators $T_{K^\infty}$ and $T_n^t$ defined in Equations (\ref{eq:tkinftydef}) and~(\ref{eq:tntdefinition}). 
\paragraph{Bounding Operator Deviations}
\citet{bowman2022implicit} demonstrated
\[ r_t = \exp(-T_{K^\infty} t) r_0 + \int_0^t \exp(-T_{K^\infty}(t - s))(T_{K^\infty} - T_n^s)r_s ds. \]
This exhibits the residual $r_t$ as a sum of $\exp(-T_{K^\infty} t) r_0$ and a correction term.  The proof of Theorem \ref{thm:main} revolves around bounding the correction term which involves bounding
\[\norm{(T_{K^\infty} - T_n^s) r_s}_{L^2(X, \rho)} \leq \norm{(T_{K^\infty} - T_n)r_s}_{L^2(X, \rho)} + \norm{(T_n - T_n^s) r_s}_{L^2(X, \rho)}. \]
At a high level $\norm{(T_n - T_n^s) r_s}_{L^2(X, \rho)}$ will be small whenever the kernel deviations $K_0 - K_s$ are small.  On the other hand by metric entropy based arguments we have that $\norm{(T_{K^\infty} - T_n)r_s}_{L^2(X, \rho)}$ will be small whenever $n$ is large enough relative to the complexity of the residual functions $r_s$.
\paragraph{Comparison with Linearization}
  Let $H(x; \theta) := \nabla_\theta^2 f(x; \theta)$ denote the Hessian of our network with respect to the parameters $\theta$ for a fixed input $x$.  It turns out that if $\norm{H(x, \theta)}_{op}$ was uniformly small over $x$ and $\theta$ then the kernel deviations $K_0 - K_s$ would be bounded and the complexity of our model $f(x; \theta)$ would be controlled by the complexity of the linearized model $f_{lin}(x; \theta) := \langle \nabla_\theta f(x; \theta_0), \theta - \theta_0 \rangle$.  The caveat to this approach is we do not in fact have a way to bound the Hessian $H(x, \theta)$ uniformly.  However \citet{liuonlinearity} demonstrated that for \textit{fixed} $x$ and $R > 0$ we have with high probability over the initialization $\theta_0$
 \begin{equation}\label{eq:liuhess}
 \sup_{\theta \in \overline{B}(\theta_0, R)} \norm{H(x, \theta)}_{op} = \tilde{O}\parens{\frac{R}{\sqrt{m}} poly(R/\sqrt{m})}.     
 \end{equation}
 \noindent
 Using a priori parameter norm deviation bounds we have that $\norm{\theta_t - \theta_0}_2 = O(\sqrt{t})$ and thus we can set $R = O(\sqrt{T})$.  The difficulty then arises to get bounds that only depend on the Hessian $H(x; \theta)$ evaluated only on finitely many inputs $x$.  We overcome this difficulty by showing for fixed $\theta_0$ one has high probability bounds over the sampling of the training data $x_1, \ldots, x_n$ that only require the Hessian evaluated on a finite point set.  This requires some elaborate calculations involving Rademacher complexity.  We then use the Fubini-Tonelli theorem and the Hessian bound \eqref{eq:liuhess} to get a bound over the simultaneous sampling of $\theta_0$ and $x_1, \ldots, x_n$.
\paragraph{Covering Number of the Linearized Model}
The complexity of the residual functions $r_s$ up to the stopping time $T$ can be controlled by bounding the complexity of the function class $\mathcal{C} = \{f_{lin}(x; \theta) : \theta \in \overline{B}(\theta_0, R)\}$.  In Appendix~\ref{sec:covnum} we show that the $L^2(X, \rho)$ metric entropy of the linearized model $\mathcal{C} = \{f_{lin}(x; \theta) : \theta \in \overline{B}(\theta_0, R)\}$ is determined by the spectrum of the Fisher Information Matrix
\begin{equation}\label{eq:fisherdefinition}
F := \int_X \nabla_\theta f(x; \theta_0) \nabla_\theta f(x; \theta_0)^T d\rho(x).
\end{equation}
Let $\lambda_1^{1/2} \geq \lambda_2^{1/2} \geq \cdots \geq 0$ denote the eigenvalues of $F^{1/2}$.  We define the effective rank of $F^{1/2}$ at scale $\epsilon$ as
\[ \Tilde{p}(F^{1/2}, \epsilon) = |\{i : \lambda_i^{1/2} > \epsilon\}|. \] This measures the number of dimensions within the unit ball whose image under $F^{1/2}$ can be larger than $\epsilon$ in Euclidean norm.  In Appendix~\ref{sec:covnum} we demonstrate that the $\epsilon$ covering number of $\mathcal{C}$ in $L^2(X, \rho)$, denoted $\mathcal{N}(\mathcal{C}, \norm{\bullet}_{L^2(X, \rho)}, \epsilon)$, has the bound
\[ \log \mathcal{N}(\mathcal{C}, \norm{\bullet}_{L^2(X, \rho)}, \epsilon) = \tilde{O}(\tilde{p}(F^{1/2}, 0.75 \epsilon / R)).  \]
It turns out that for $\norm{(T_{K^\infty} - T_n)r_s}_{L^2(X, \rho)}$ to be on the order of $\epsilon$ we merely need $n$ to be large relative to $\tilde{p}(F^{1/2}, 0.75 \epsilon / R)$.  By contrast \citet{bowman2022implicit} required that the network was underparameterized so that $n$ was large relative to the total number of parameters $p$.  Since $\tilde{p} \ll p$, this is what lets us relax the sample complexity dramatically.  In fact for fixed $R$ and $\epsilon$ we have that $\tilde{p} = \tilde{O}(1)$ with high probability as the width grows to infinity whereas $p \rightarrow \infty$.  Interestingly, the quantity $\tilde{p}$ for the loss Hessian at convergence was used recently to derive analytical PAC-Bayes bounds \citep{Yang2021DoesTD}.  Note for the squared loss the (empirical) FIM\footnote{Note that we define $F$ as an expectation over the true input distribution $\rho$.  To approximate the Hessian of the empirical loss one must replace $\rho$ with the empirical measure $\hat{\rho}$.} can be taken as an approximation to the Hessian, and at a minimizer this approximation becomes exact.  Thus these two notions are closely related.

\section{Conclusion and Future Directions}
\label{sec:conclusion}
We provided quantitative bounds measuring the $L^2$ difference in function space between a finite-width network trained on finitely many samples and the corresponding kernel method with infinite width and infinite data.  As a consequence, the network will inherit the bias of the kernel at the beginning of training even when the width scales linearly with the number of samples.  This bias is not only over the training data but over the entire input space.  The key property that allows this is the low-effective-rank property of the Fisher Information Matrix (FIM) at initialization which controls the capacity of the model at the beginning of training.  An interesting avenue for future work is to investigate if flat minima manifesting a FIM of low effective rank at the end of training can be related to the behavior of the network on out-of-sample data after training.

\paragraph{Limitations}
Our framework can only characterize the network's bias up to a stopping time.  There is compelling evidence that the kernel adapts to the target function later in training \citep{kernelalignment,atanasov2022neural}, and this falls outside our framework.  Accounting for adaptations in the kernel is an important problem that is still being addressed by the theoretical community.

\paragraph{Broader impacts} 
We do not foresee any negative societal impacts of characterizing the spectral bias of neural networks.  To the contrary we believe that cataloging the properties that networks are biased towards in a variety of regimes will be essential to developing fair and interpretable artificial intelligence over the long-term.

\ifdefined\deanonymize
\begin{ack}
This project has received funding from UCLA FCDA and from the European Research Council (ERC) under the European Union’s Horizon 2020 research and innovation programme (grant agreement n\textsuperscript{o}~757983).  The authors would like to thank Yonatan Dukler for sharing code to compute the NTK Gram matrix in PyTorch.
\end{ack}
\fi

%\newpage 

\bibliography{deep_mse}
\bibliographystyle{deep_mse}

\appendix

\newpage
\section*{Appendix}

The appendix is organized as follows. 
\begin{itemize}[leftmargin=*]
\item In Appendix~\ref{sec:covnum} we bound the $L^2(X, \rho)$ metric entropy of the linearized model.  This is necessary to bound the operator deviation $T_K - T_n$.
\item In Appendix~\ref{sec:hessianbd} we bound the Hessian of the network and introduce some technical lemmas.  This is necessary in order to relate the network to the linearized model.
\item In Appendix~\ref{sec:conv_operators} we bound the quantity $\norm{(T_K - T_n^t) r_t}_{L^2(X, \rho)}$.  This section contains the bulk of the proof for the main result Theorem~\ref{thm:main}.  
\item In Appendix~\ref{sec:main_result_proof} we put the aforementioned results together to prove Theorem~\ref{thm:main}.
\item In Appendix~\ref{sec:assumptiondisc} we explain the merit of Assumption~\ref{ass:kernelconc}.
\item In Appendix~\ref{sec:experimental} we describe the details of our experiments with a link to the relevant code.
\end{itemize}

\section{Covering Number for the Linearized Model}\label{sec:covnum}
Our approach to generalization will be based on metric entropy \citep[see, e.g.,][]{wainwright_2019}, a fundamental tool in learning theory. 
We recall some basic definitions. 
\begin{defi}
Let $V$ be a vector space with seminorm $\norm{\bullet}$.  For a subset $A \subset V$ we say that $B$ is a proper $\epsilon$-covering of $A$ if $B \subset A$ and for all $a \in A$ there exists $b \in B$ such that $\norm{a - b} \leq \epsilon$.
\end{defi}
Since we will concern ourselves solely with proper coverings we may remove the adjective ``proper'' when discussing $\epsilon$-coverings.  A closely related notion is the $\epsilon$-covering number.
\begin{defi}
Let $V$ be a vector space with seminorm $\norm{\bullet}$ and let $A \subset V$.  For $\epsilon > 0$ we define the proper $\epsilon$-covering number of $A$, denoted $\mathcal{N}(A, \norm{\bullet}, \epsilon)$, by
\[ \mathcal{N}(A, \norm{\bullet}, \epsilon) = \min_{\text{$N$ : $N$ is proper $\epsilon$-covering of $A$}} |N|. \]
\end{defi}
It is also useful to define the covering number of a set $K$ with respect to another set $L$.
\begin{defi}
Let $K$ and $L$ be two subsets of a vector space $V$.  We define $\mathcal{N}(K, L)$ as the smallest $n \in \NN$ such that there exists $v^{(1)}, \ldots, v^{(n)} \in K$ satisfying
\[ K \subset \bigcup_{i = 1}^n (v^{(i)} + L). \]
\end{defi}
Now consider a model $f_{lin}(x; \theta)$ that is potentially nonlinear in $x$ but affine in $\theta$.  The motivating example is the following NTK model
\[ f_{lin}(x;\theta) = f(x;\theta_0) + \langle \nabla_\theta f(x;\theta_0), \theta - \theta_0 \rangle. \]
We will be interested in deriving covering numbers for such classes of functions.  Since translation by a fixed function does not change the covering number we will for convenience assume the model is linear in $\theta$.  Thus we will consider models of the form
\[ f_{lin}(x;\theta) = \langle g(x), \theta \rangle. \]
The function $g$ can be nonlinear and thus $x \mapsto f_{lin}(x;\theta)$ is typically nonlinear.  For the NTK model we have $g(x) = \nabla_\theta f(x; \theta_0)$.  Let $X$ be our input space and let $\nu$ be some measure on $X$.  We consider $L^2(X, \nu)$ where
\[ \norm{h}_{L^2(X, \nu)}^2 = \int_X |h(x)|^2 d\nu(x). \]
Throughout we will assume that $\norm{g}_2 \in L^2(X, \nu)$ i.e. $\int_X \norm{g}_2^2 d\nu < \infty$.  We will be interested in deriving covering numbers for classes of functions
\[ \mathcal{C}_A := \{ f_{lin}(x; \theta) : \theta \in A\} \]
where $A \subset \Theta$ is some subset of parameter space $\Theta$.  For now we will assume that $\Theta = \RR^p$.  We observe that
\begin{gather*}
\norm{f_{lin}(\bullet; \theta_1) - f_{lin}(\bullet;\theta_2)}_{L^2(X, \nu)}^2  = \int_X |\langle g(x), \theta_1 - \theta_2 \rangle|^2 d\nu(x) \\
= \int_X (\theta_1 - \theta_2)^T g(x) g(x)^T (\theta_1 - \theta_2) d\nu(x) = (\theta_1 - \theta_2)^T \brackets{\int_X g(x) g(x)^T d\nu(x)} (\theta_1 - \theta_2).
\end{gather*}
Thus of primary importance is the symmetric positive semidefinite matrix $M := \int_X g(x) g(x)^T d\nu$.  When $\nu$ is a probability measure and $f_{lin}(x;\theta)$ is the NTK model we have that 
\[ M = \EE_{x \sim \nu}\brackets{\nabla_\theta f(x; \theta_0) \nabla_\theta f(x;\theta_0)^T} \]
is the (uncentered) gradient covariance matrix, which can be interpreted as the Fisher Information Matrix (FIM) for the squared loss.  The two most interesting cases are when $\nu$ is the true input distribution or $\nu = \frac{1}{n} \sum_{i = 1}^n \delta_{x_i}$ is the empirical distribution arising from the training samples.  In the former case $M$ is the true (uncentered) gradient covariance matrix and in the latter case $M$ is the (uncentered) empirical covariance.  For neural networks the FIM tends to have a very skewed spectrum (is approximately low rank), and thus the relations between the spectrum of $M$ and the covering number will be particularly relevant.  We will define the seminorm $\norm{\bullet}_M$ as
\[ \norm{v}_M := \sqrt{v^T M v}. \]
The following lemma relates the covering number $\mathcal{N}(\mathcal{C}_A, \norm{\bullet}_{L^2(X, \nu)}, \epsilon)$ to $\mathcal{N}(A, \norm{\bullet}_M, \epsilon)$.
\begin{lem}\label{lem:l2coveriseuclidcover}
Let $N \subset A \subset \RR^p$.  Then $N$ is a proper $\epsilon$-covering of $A$ with respect to the seminorm $\norm{\bullet}_M$ if and only if $\mathcal{C}_N$ is a proper $\epsilon$-covering of $\mathcal{C}_A$ with respect to the $L^2(X, \nu)$ norm.
\end{lem}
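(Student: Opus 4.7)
The plan is to reduce the lemma to the identity already established just before its statement, namely
\[ \norm{f_{lin}(\bullet; \theta_1) - f_{lin}(\bullet;\theta_2)}_{L^2(X, \nu)}^2 = (\theta_1 - \theta_2)^T M (\theta_1 - \theta_2) = \norm{\theta_1 - \theta_2}_M^2. \]
This says that the map $\Phi: (\RR^p, \norm{\bullet}_M) \to (L^2(X, \nu), \norm{\bullet}_{L^2(X, \nu)})$ defined by $\Phi(\theta) = f_{lin}(\bullet; \theta)$ is an isometry of pseudometric spaces. The whole content of the lemma is that isometries transport coverings back and forth, together with the bookkeeping that properness ($N \subset A$ vs.\ $\mathcal{C}_N \subset \mathcal{C}_A$) is preserved by construction.

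First I would handle the forward direction. Assume $N$ is a proper $\epsilon$-covering of $A$ with respect to $\norm{\bullet}_M$. Given any element $f_{lin}(\bullet; \theta) \in \mathcal{C}_A$ with $\theta \in A$, pick $\theta' \in N$ such that $\norm{\theta - \theta'}_M \leq \epsilon$. Applying the isometry identity yields $\norm{f_{lin}(\bullet; \theta) - f_{lin}(\bullet; \theta')}_{L^2(X, \nu)} \leq \epsilon$, and $f_{lin}(\bullet; \theta') \in \mathcal{C}_N$ by definition. Since $N \subset A$ implies $\mathcal{C}_N \subset \mathcal{C}_A$, the covering is proper.

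For the converse, assume $\mathcal{C}_N$ is a proper $\epsilon$-covering of $\mathcal{C}_A$ in $L^2(X, \nu)$. Given $\theta \in A$, there exists $\theta' \in N$ with $\norm{f_{lin}(\bullet; \theta) - f_{lin}(\bullet; \theta')}_{L^2(X, \nu)} \leq \epsilon$; the isometry identity then gives $\norm{\theta - \theta'}_M \leq \epsilon$. Together with $N \subset A$ this shows $N$ is a proper $\epsilon$-covering of $A$ in the seminorm $\norm{\bullet}_M$.

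There is essentially no technical obstacle: the only subtlety is that $\norm{\bullet}_M$ is only a seminorm (since $M$ may have a nontrivial kernel), but the argument above uses only the identity $\norm{\Phi(\theta_1) - \Phi(\theta_2)}_{L^2(X, \nu)} = \norm{\theta_1 - \theta_2}_M$, which holds regardless of whether $M$ is strictly positive definite. No further structure is needed, so the proof amounts to unwinding the definitions and invoking the isometry identity in both directions.
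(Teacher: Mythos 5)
Your proof is correct and follows essentially the same route as the paper: both rest on the identity $\norm{f_{lin}(\bullet;\theta_1)-f_{lin}(\bullet;\theta_2)}_{L^2(X,\nu)}=\norm{\theta_1-\theta_2}_M$ and then unwind the definitions of covering in both directions, with properness following from $N\subset A$. The only cosmetic difference is that the paper introduces an explicit representative-parameter map $h\mapsto\hat\theta(h)$ to handle the possible non-injectivity of $\theta\mapsto f_{lin}(\bullet;\theta)$ when $M$ is degenerate, whereas you sidestep this by directly choosing a witness $\theta'\in N$ for each element of $\mathcal{C}_N$, which is a mild streamlining of the same argument.
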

\begin{proof}
As we argued before we have that
\begin{gather*}
\norm{f_{lin}(\bullet; \theta_1) - f_{lin}(\bullet; \theta_2)}_{L^2(X, \nu)}^2  = (\theta_1 - \theta_2)^T \brackets{\int_X g(x) g(x)^T d\nu(x)} (\theta_1 - \theta_2) \\
= (\theta_1 - \theta_2)^T M (\theta_1 - \theta_2) = \norm{\theta_1 - \theta_2}_M^2. 
\end{gather*}
For each function in $h \in \mathcal{C}_A$ pick a representative parameter $\hat{\theta}(h) \in A$ so that $h = f_{lin}(\bullet; \hat{\theta}(h))$ (if $M$ is strictly positive definite $\hat{\theta}(h)$ is unique).  We can choose the mapping $h \mapsto \hat{\theta}(h)$ so that the image of $\mathcal{C}_N$ under this mapping is $N$.  Suppose $N$ is an $\epsilon$-covering for A with respect to $\norm{\bullet}_M$.  Then for each $\theta \in A$ we can choose $\theta'$ such that $\|\theta - \theta'\|_M \leq \epsilon$.  Well then for any $h \in \mathcal{C}_A$ we can consider $\hat{\theta}(h)$ and choose $\theta' \in N$ such that $\epsilon \geq \big\|\hat{\theta}(h) - \theta'\big\|_M = \big\| f_{lin}(\bullet;\hat{\theta}(h)) - f_{lin}(\bullet;\theta')\big\|_{L^2(X, \nu)}$.  It follows that $\mathcal{C}_N$ is an $\epsilon$-covering of $\mathcal{C}_A$.  Conversely suppose now that $\mathcal{C}_N$ is an $\epsilon$-covering of $\mathcal{C}_A$ with respect to $\norm{\bullet}_{L^2(X, \nu)}$.  Well then for any $\theta \in A$ we can consider $f_{lin}(x;\theta)$ and take $h \in \mathcal{C}_N$ such that $\norm{f_{lin}(\bullet;\theta) - h(\bullet)}_{L^2(X, \nu)} \leq \epsilon$.  However since $h(\bullet) = f_{lin}(\bullet;\hat{\theta}(h))$ we have that $\epsilon \geq \big\|f_{lin}(\bullet;\theta) - f_{lin}(\bullet; \hat{\theta}(h))\big\|_2 = \big\|\theta - \hat{\theta}(h)\big\|_M$.  Thus $\hat{\theta}(\mathcal{C}_N) = N$ is an $\epsilon$-covering for $A$.
\end{proof}
Thus covering the space $\mathcal{C}_A$ in $L^2(X, \nu)$ reduces to covering a subset of Euclidean space under the seminorm $\norm{\bullet}_M$.  By a change of coordinates we will assume without loss of generality that $M$ is diagonal.  Let $M^{1/2}$ be the square root of $M$ and let $\sigma_1 \geq \cdots \geq \sigma_p \geq 0$ be the eigenvalues of $M^{1/2}$.  We note that
\[ \braces{v \in \RR^p : \norm{v}_M \leq 1} = \braces{ v \in \RR^p : \sum_{i = 1}^p \sigma_i^2 v_i^2 \leq 1}. \]
Thus the unit ball in $\RR^p$ determined by $\norm{\bullet}_M$ is the ellipsoid with half-axis lengths $\sigma_i^{-1}$ (if $\sigma_i = 0$ we consider the ellipsoid as being infinite along that dimension).  For a general vector $a \in \RR^p$ with nonnegative entries we define the ellipse
\[ E_a := \braces{v \in \RR^p : \sum_{i = 1}^p \frac{v_i^2}{a_i^2} \leq 1} \]
where in the sum if $a_i = 0$ we interpret $\frac{v_i^2}{a_i^2}$ as $0$ if $v_i = 0$ and infinity otherwise.  $E_a$ is the ellipse with half-axis lengths $a_1, a_2, \ldots, a_n$.  We will also let $B_r^k \subset \RR^k$ denote the closed Euclidean ball in dimension $k$ of radius $r$, specifically
\[ B_r^k := \{v \in \RR^k : \sum_{i = 1}^k v_i^2 \leq r \}. \]
Our main study will be bounding $\mathcal{N}(A, \norm{\bullet}_M, \epsilon)$ when $A = \{ \theta \in \RR^p : \norm{\theta}_2 \leq R\} = B_R^p$.  This amounts to covering a Euclidean ball with ellipsoids determined by $\norm{\bullet}_M$.  Fortunately, there are well established results for coverings involving ellipsoids.  Let $\sigma = (\sigma_1, \ldots, \sigma_p)^T$ denote the spectrum of $M^{1/2}$ and let $M^{-1/2}$ denote the pseudo-inverse of $M^{1/2}$.  Let $L$ denote the closed unit ball in $\RR^p$ under the seminorm $\norm{\bullet}_M$.  In geometric terms $\mathcal{N}(B_R^p, \norm{\bullet}_M, \epsilon) = \mathcal{N}(B_R^p, \epsilon L)$.  We claim that up to an application of $M^{1/2}$ or $M^{-1/2}$, covering $B_R^p$ with translates of $\epsilon L$ is equivalent to covering $E_{\frac{R}{\epsilon} \sigma}$ with translates of $B_1^p$.  This is formalized in the following lemma.
\begin{lem}\label{lem:ballcovertoellcover}
Let $M \in \RR^{p \times p}$ be a symmetric positive semidefinite matrix and let $\sigma = (\sigma_1, \ldots, \sigma_p)^T \in \RR^p$ denote the eigenvalues of $M^{1/2}$.  Then $\mathcal{N}(B_R^p, \norm{\bullet}_M, \epsilon) = \mathcal{N}(E_{\frac{R}{\epsilon} \sigma}, B_1^p)$.
\end{lem}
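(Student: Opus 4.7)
The plan is to prove both equalities by a bijective linear change of variables; the only subtlety is the potential degeneracy when $M$ is singular.

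After an orthogonal change of basis (which preserves $B_R^p$, $B_1^p$, and only permutes the coordinates of $E_{R\sigma/\epsilon}$) I may assume that $M = \operatorname{diag}(\sigma_1^2,\ldots,\sigma_p^2)$ with $\sigma_1 \geq \cdots \geq \sigma_k > 0 = \sigma_{k+1} = \cdots = \sigma_p$. In these coordinates the closed unit $\norm{\bullet}_M$-ball is exactly the ellipsoid $L = E_{1/\sigma}$ (using the same convention for zero entries), so the left-hand side of the lemma reads $\mathcal{N}(B_R^p,\, \epsilon L)$.

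Step one is a reduction to the rank-$k$ subspace $V := \operatorname{span}(e_1,\ldots,e_k)$. Writing $P$ for the orthogonal projection onto $V$, the seminorm $\norm{v}_M$ depends only on $Pv$ and the ellipsoid $\epsilon L$ is unbounded in the $e_{k+1},\ldots,e_p$ directions. Hence $\{v^{(i)}\} \subset B_R^p$ is an $(\epsilon L)$-cover of $B_R^p$ if and only if $\{Pv^{(i)}\} \subset B_R^k = P(B_R^p)$ is an $(\epsilon L \cap V)$-cover of $B_R^k$, giving $\mathcal{N}(B_R^p, \epsilon L) = \mathcal{N}(B_R^k, \epsilon L \cap V)$. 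On the right-hand side, by the convention that $v_i^2/a_i^2 = \infty$ when $a_i = 0$ and $v_i \neq 0$, the ellipsoid $E_{R\sigma/\epsilon}$ is supported in $V$; since covering centers must lie in the set being covered, it follows that $\mathcal{N}(E_{R\sigma/\epsilon}, B_1^p) = \mathcal{N}(E_{R\sigma'/\epsilon}, B_1^k)$, where $\sigma' := (\sigma_1,\ldots,\sigma_k)$.

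Step two is the change of variables. Consider the invertible diagonal map $T \colon V \to V$ defined by $(Tv)_i = (\sigma_i/\epsilon)\, v_i$ for $i \in [k]$. A direct calculation using the substitution $w_i = (\sigma_i/\epsilon)v_i$, i.e.\ $v_i = (\epsilon/\sigma_i)w_i$, yields $T(B_R^k) = E_{R\sigma'/\epsilon}$ and $T(\epsilon L \cap V) = B_1^k$. Because $T$ is a bijection, it carries a $(\epsilon L \cap V)$-cover of $B_R^k$ to a $B_1^k$-cover of $E_{R\sigma'/\epsilon}$ and vice versa, so $\mathcal{N}(B_R^k, \epsilon L \cap V) = \mathcal{N}(E_{R\sigma'/\epsilon}, B_1^k)$. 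Chaining the three equalities yields the lemma.

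The main obstacle is the singular case handled in step one: one has to check that the $M$-degenerate directions on the left correspond exactly to the coordinate directions in which $E_{R\sigma/\epsilon}$ collapses to zero on the right, which is precisely the purpose of the convention for $E_a$ when some $a_i = 0$. In the nonsingular case the proof collapses to the single change of variables $w = \epsilon^{-1} M^{1/2} v$, under which $B_R^p \mapsto E_{R\sigma/\epsilon}$ and $\epsilon L \mapsto B_1^p$.
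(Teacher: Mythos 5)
Your proof is correct and uses essentially the same argument as the paper: diagonalize $M$, transport the covering problem through the linear map induced by $M^{1/2}$, and handle the singular case via the observation that the $\norm{\bullet}_M$-ball is unbounded along $\ker M$. The only organizational difference is that you first restrict both sides to the rank-$k$ subspace $V$ and then apply a single invertible change of variables, whereas the paper proves the two inequalities separately using $M^{1/2}$ in one direction and the pseudoinverse $M^{-1/2}$ together with the projection $P$ in the other; the substance is the same.
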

\begin{proof}
By a change of basis we can assume without loss of generality that $M$ is diagonal. Let $L$ denote the closed unit ball of $\RR^p$ under $\norm{\bullet}_M$.  We note that in geometric terms $\mathcal{N}(B_R^p, \norm{\bullet}_M, \epsilon) = \mathcal{N}(B_R^p, \epsilon L)$.  Since we can dilate by $1/\epsilon$ we can replace $R$ with $R / \epsilon$ and $\epsilon$ with $1$.  Thus for convenience we will assume for now that $\epsilon = 1$.  We note that if $v^{(1)}, \ldots, v^{(n)}$ form an $L$ covering of $B_R^p$ as in
\[ B_R^p \subset \bigcup_{i = 1}^n (v^{(i)} + L), \]
then
\[ E_{R \sigma} = M^{1/2}(B_R^p) \subset \bigcup_{i = 1}^n (M^{1/2}v^{(i)} + M^{1/2}(L)) \subset \bigcup_{i = 1}^n (M^{1/2} v^{(i)} + B_1^p). \]
Thus $M^{1/2}v^{(1)}, \ldots, M^{1/2}v^{(n)}$ forms a $B_1^p$ covering of $E_{R \sigma}$.  Conversely suppose $v^{(1)}, \ldots, v^{(n)}$ satisfy
\[ E_{R \sigma} \subset \bigcup_{i = 1}^n (v^{(i)} + B_1^p) \]
and let $P$ be the projection onto $\text{span}\{e_i : \sigma_i \neq 0 \}$ where $e_i$ denotes the $i$th standard basis vector.  Then
\[ P(B_{R}^p) = M^{-1/2}(E_{R \sigma}) \subset \bigcup_{i = 1}^n (M^{-1/2}v^{(i)} + M^{-1/2}(B_1^p)) = \bigcup_{i = 1}^n (M^{-1/2}v^{(i)} + P(L)). \]
However $L$ is infinitely long along the dimensions outside $im(P)$, and thus
\[B_{R}^p \subset \bigcup_{i = 1}^n (M^{-1/2}v^{(i)} + L). \]
Thus $M^{-1/2} v^{(1)}, \ldots, M^{-1/2} v^{(n)}$ form an $L$ covering of $B_R^p$.  We conclude that $\mathcal{N}(B_R^p, L) = \mathcal{N}(E_{R \sigma}, B_1^p)$.  Thus for general $\epsilon > 0$ we have that $\mathcal{N}(B_R^p, \norm{\bullet}_M, \epsilon) = \mathcal{N}(B_R^p, \epsilon L) = \mathcal{N}(B_{R / \epsilon}^p, L) = \mathcal{N}(E_{\frac{R}{\epsilon} \sigma}, B_1^p)$.
\end{proof}
\par
We will let $vol(\bullet)$ denote volume in the standard Lebesgue sense.  If $a \in \RR^p$ is a vector with positive entries we recall that the volume of an ellipsoid $E_a$ is given by the formula
\[ vol(E_a) = vol(B_1^p) \prod_{i = 1}^p a_i . \]
When most of the $a_i$ are very small we have that $E_a$ is very thin and has small volume and thus we expect the covering number to be small.  Coverings for ellipsoids are well established with roots in geometric functional analysis.  The following lemma is phrased the same as Theorems 1 and 2 in \citet{dumer_ellipsoid}.  The result dates back to classic results in geometric functional analysis.  Specifically a similar result for more general convex bodies is sketched at the end of Chapter 5 in \citet{pisier_1989} which also appeared in \citet[Proposition 1.7]{Gordon1987GeometricAP}.  We don't need the additional generality for our purposes.  We will offer the simplest proof needed for our purposes for completeness and clarity.
\begin{lem}[{\citealt{dumer_ellipsoid, pisier_1989, Gordon1987GeometricAP}}]
\label{lem:ellipsoidcoveringnum} 
Let $a \in \RR^p$ be a vector with nonnegative entries.  Let $J = \{i : a_i > 1\}$, $K = \sum_{i \in J} \log(a_i)$, $\gamma \in (0, 1/2)$, and $\mu_\gamma = |\{i : a_i^2 > (1 - \gamma)^2\}|$.  Then the proper covering number $\mathcal{N}(E_a, B_1^p)$ satisfies
\[ K \leq \log \mathcal{N}(E_a, B_1^p) \leq K + \mu_\gamma \log \parens{\frac{3}{\gamma}}. \]
\end{lem}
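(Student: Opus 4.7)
The plan is to handle the lower and upper bounds separately, reducing each to a finite-dimensional volume computation that exploits the axis-aligned structure of $E_a$.

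\textbf{Lower bound.} For $K \leq \log \mathcal{N}(E_a, B_1^p)$ I would apply a coordinate projection argument. Let $\pi : \RR^p \to \RR^{|J|}$ denote orthogonal projection onto the coordinates indexed by $J$. Since projections are $1$-Lipschitz, any covering of $E_a$ by translates of $B_1^p$ descends to a covering of $\pi(E_a)$ by translates of $\pi(B_1^p)$. But $\pi(E_a)$ is the ellipsoid $E_{a_J}$ in $\RR^{|J|}$ with half-axes $\{a_i\}_{i \in J}$, while $\pi(B_1^p) = B_1^{|J|}$. Comparing Lebesgue volumes yields $\mathcal{N}(E_a, B_1^p) \geq \mathrm{vol}(E_{a_J})/\mathrm{vol}(B_1^{|J|}) = \prod_{i \in J} a_i$, whose logarithm is precisely $K$.

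\textbf{Upper bound, geometric reduction.} Let $S = \{i : a_i^2 > (1-\gamma)^2\}$, so $|S| = \mu_\gamma$ and $J \subset S$, and decompose $\RR^p = \RR^S \oplus \RR^{S^c}$. For any $v \in E_a$ the $S^c$-component satisfies $\sum_{i \in S^c} v_i^2/a_i^2 \leq 1$; combined with $a_i \leq 1 - \gamma$ on $S^c$, this forces $\norm{v_{S^c}}_2 \leq 1 - \gamma$. Suppose $\{c_j\} \subset E_{a_S}$ is a proper $\gamma$-covering of $E_{a_S}$ in $\RR^S$, and embed each $c_j$ as $(c_j, 0) \in \RR^p$ (which lies in $E_a$). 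For arbitrary $v \in E_a$, picking $j$ with $\norm{v_S - c_j}_2 \leq \gamma$ gives $\norm{v - (c_j, 0)}_2^2 \leq \gamma^2 + (1-\gamma)^2 \leq 1$ for $\gamma \in (0, 1/2)$. Hence $\mathcal{N}(E_a, B_1^p) \leq \mathcal{N}(E_{a_S}, \gamma B_1^S)$.

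\textbf{Upper bound, volume estimate.} I would bound $\mathcal{N}(E_{a_S}, \gamma B_1^S)$ via the standard packing inequality $\mathcal{N}(K, \gamma B) \leq \mathrm{vol}(K + (\gamma/2) B)/\mathrm{vol}((\gamma/2) B)$, derived from a maximal $\gamma$-separated subset of $K$. Since $E_{a_S}$ contains the Euclidean ball of radius $\min_{i \in S} a_i > 1 - \gamma$, the elementary convexity inclusion $K + \epsilon B \subset (1 + \epsilon/r) K$ (valid whenever $rB \subset K$ and $K$ is convex with $0 \in K$) gives $\mathrm{vol}(E_{a_S} + (\gamma/2) B_1^S) \leq (1 + \gamma/(2(1-\gamma)))^{|S|} \mathrm{vol}(E_{a_S}) \leq (3/2)^{|S|} \mathrm{vol}(B_1^{|S|}) \prod_{i \in S} a_i$, using $1 + \gamma/(2(1-\gamma)) \leq 3/2$ for $\gamma \leq 1/2$. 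Dividing by $\mathrm{vol}((\gamma/2) B_1^{|S|}) = (\gamma/2)^{|S|} \mathrm{vol}(B_1^{|S|})$ and taking logarithms yields $\log \mathcal{N}(E_{a_S}, \gamma B_1^S) \leq |S| \log(3/\gamma) + \sum_{i \in S} \log a_i \leq \mu_\gamma \log(3/\gamma) + K$, where the last step splits $S = J \cup (S \setminus J)$ and uses $\log a_i \leq 0$ for $i \in S \setminus J$.

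\textbf{Main obstacle.} I do not anticipate a serious obstacle; the only delicate point is extracting the clean constant $3$, which hinges on having an inscribed Euclidean ball of radius at least $1 - \gamma$ in the reduced ellipsoid $E_{a_S}$. A naive Minkowski inclusion such as $E_a + \epsilon B \subset E_{a + \epsilon \mathbf{1}}$ already fails whenever some axes are small, so the restriction to the index set $S$ via the $(1-\gamma)^2$ threshold serves double duty, both ensuring $\gamma^2 + (1-\gamma)^2 \leq 1$ in the fitting step and guaranteeing the inscribed-ball lower bound required for the volume-ratio estimate.
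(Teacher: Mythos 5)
Your proof is correct and follows essentially the same approach as the paper's: both the lower bound (project onto the coordinates in $J$ and compare volumes) and the upper bound (restrict to the indices where $a_i > 1-\gamma$, cover the reduced ellipsoid by $\gamma$-balls via a maximal separated set, and bound the volume ratio using the inclusion $K + \epsilon B \subset (1 + \epsilon/r)K$ for a convex body containing $rB$) match the paper's argument step for step. The only cosmetic difference is in the fitting step: you use the Pythagorean bound $\gamma^2 + (1-\gamma)^2 < 1$ from the orthogonal decomposition, while the paper uses the triangle inequality $\gamma + (1-\gamma) = 1$; and you round $1 + \frac{\gamma}{2(1-\gamma)}$ up to $3/2$ before dividing by $\gamma/2$, whereas the paper divides first and then bounds $\frac{2}{\gamma} + \frac{1}{1-\gamma} \leq \frac{3}{\gamma}$. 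These arrive at the same constant.
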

\begin{proof}
We first prove the lower bound.  Let $J = \{i : a_i > 1\}$, $m = |J|$, and let $P$ be the orthogonal projection onto $\text{span}\{e_i : i \in J\}$ where $e_i$ denotes the standard basis.  Suppose $v^{(1)}, \ldots, v^{(n)}$ are the centers of a $B_1^p$ covering of $E_a$, specifically 
\[ E_a \subset \bigcup_{i = 1}^n (v^{(i)} + B_1^p) . \]
Well then 
\[ P(E_a) \subset \bigcup_{i = 1}^n P(v^{(i)} + B_1^p) = \bigcup_{i = 1}^n (Pv^{(i)} + B_1^m).\]
Well then by the standard volume estimate we get that
\[ n \cdot vol(B_1^m) \geq vol \parens{\bigcup_{i = 1}^n (Pv^{(i)} + B_1^m)} \geq vol(P(E_a)) \]
and thus
\[ n \geq \frac{vol(P(E_a))}{vol(B_1^m)} = \prod_{i \in J} a_i. \]
\par
Now we prove the upper bound.  Let $\gamma \in (0, 1/2)$ and let $J_\gamma = \{i : a_i^2 > (1 - \gamma)^2 \}$, $\mu_\gamma = |J_\gamma|$, and let $P$ be the orthogonal projection onto $\text{span}\{e_i : i \in J_\gamma \}$.  We first notice that if $v \in E_a$ we have that $\norm{(I - P) v}_2 \leq 1 - \gamma$, indeed because for $v \in E_a$
\[ \sum_{i \notin J_\gamma} \frac{v_i^2}{(1 - \gamma)^2} \leq \sum_{i \notin J_\gamma} \frac{v_i^2}{a_i^2} \leq 1. \]
Thus if $v^{(1)}, \ldots, v^{(n)}$ are the centers of a proper $B_\gamma^{\mu_\gamma}$ covering of $P(E_a)$ then by the triangle inequality they also induce a proper $B_1^p$ covering of $E_a$.  Thus let $v^{(1)}, \ldots, v^{(n)}$ be a maximal subset of $P(E_a)$ such that for $i \neq j$ $\norm{v^{(i)} - v^{(j)}}_2 > \gamma$.  By maximality $v^{(1)}, \ldots, v^{(n)}$ form a $B_\gamma^{\mu_\gamma}$ covering of $P(E_a)$.  Well then the balls $v^{(i)} + B_{\gamma / 2}^{\mu_\gamma}$ are all disjoint and contained in $P(E_a) + B_{\gamma / 2}^{\mu_\gamma}$.  Thus by the volume estimates
\[ n \cdot vol(B_{\gamma/2}^{\mu_\gamma}) = vol\parens{\bigcup_{i = 1}^n (v^{(i)} + B_{\gamma / 2}^{\mu_\gamma})} \leq vol \parens{P(E_a) + B_{\gamma / 2}^{\mu_\gamma}}. \]
Thus 
\[ n \leq \frac{vol\parens{P(E_a) + B_{\gamma / 2}^{\mu_\gamma}}}{vol(B_{\gamma/2}^{\mu_\gamma})}. \]
Note that $B_{1 - \gamma}^{\mu_\gamma} \subset P(E_a)$ and thus $B_{\gamma /2}^{\mu_\gamma} \subset \frac{\gamma}{2(1-\gamma)} P(E_a)$.  Now let $\norm{\bullet}_{P(E_a)}$ be the norm on $\RR^{\mu_\gamma}$ such that $P(E_a)$ is the unit ball.  Then note for $v,w$ such that $v \in P(E_a)$ and $w \in B_{\gamma/2}^{\mu_\gamma}$ we have that
\[ \norm{v + w}_{P(E_a)} \leq \norm{v}_{P(E_a)} + \norm{w}_{P(E_a)} \leq 1 + \frac{\gamma}{2(1-\gamma)}. \]
We conclude that $P(E_a) + B_{\gamma/2}^{\mu_\gamma} \subset \parens{1 + \frac{\gamma}{2(1-\gamma)}} P(E_a)$.  Therefore
\[ n \leq \frac{vol \parens{P(E_a) + B_{\gamma / 2}^{\mu_\gamma}}}{vol(B_{\gamma/2}^{\mu_\gamma})}  \leq \frac{vol \brackets{\parens{1 + \frac{\gamma}{2(1-\gamma)}} P(E_a)}}{vol(B_{\gamma/2}^{\mu_\gamma})}  = \parens{\frac{2}{\gamma} + \frac{1}{1 - \gamma}}^{\mu_\gamma} \prod_{i \in J_\gamma} a_i. \]
Note that since $\gamma < 1/2$ we have that $\frac{1}{1-\gamma} < \frac{1}{\gamma}$.  Therefore $\frac{2}{\gamma} + \frac{1}{1-\gamma} \leq \frac{3}{\gamma}$.  Moreover $\prod_{i \in J_\gamma} a_i \leq \prod_{i \in J} a_i$.  Thus
\[ n \leq \parens{\frac{2}{\gamma} + \frac{1}{1 - \gamma}}^{\mu_\gamma} \prod_{i \in J_\gamma} a_i \leq \parens{\frac{3}{\gamma}}^{\mu_\gamma} \prod_{i \in J} a_i.  \]
After taking logarithms we get the desired result.
\end{proof}
From the Lemmas \ref{lem:ballcovertoellcover} and \ref{lem:ellipsoidcoveringnum} we see that the covering number $\mathcal{N}(B_R^p, \norm{\bullet}_M, \epsilon)$ will depend on how many eigenvalues of $M$ lie above a certain threshold.  Let $A \in \RR^p$ be a symmetric positive semidefinite square matrix with eigenvalues $\lambda_1 \geq \lambda_2 \geq \cdots \geq \lambda_p \geq 0$.  We define the effective rank of $A$ at scale $\epsilon$ as
\[ \Tilde{p}(A, \epsilon) = |\{i : \lambda_i > \epsilon\}|. \] This measures the number of dimensions within $B_1$ whose image under $A$ can be larger than $\epsilon$ in Euclidean norm.  We will also define 
\[ |A|_{>c} = \prod_{i : \lambda_i > c} \lambda_i , \]
which can be thought of the determinant of $A$ after removing some eigenvalues.  We then have our main result.
\begin{theo}\label{thm:kernelcovnum}
Let $g : X \rightarrow \RR^p$ such that $\norm{g}_2 \in L^2(X, \nu)$.  Let $\mathcal{C} = \{x \mapsto \langle g(x), \theta \rangle : \norm{\theta}_2 \leq R\}$, $\gamma \in (0, 1/2)$.  Define $M \in \RR^{p \times p}$ by
\[ M = \int_X g(x) g(x)^T d\nu(x). \]
Then the proper covering number $\mathcal{N}(\mathcal{C}, \norm{\bullet}_{L^2(X, \nu)}, \epsilon)$ satisfies
\[ \log \abs{\frac{R}{\epsilon} M^{1/2}}_{> 1} \leq \log \mathcal{N}(\mathcal{C}, \norm{\bullet}_{L^2(X, \nu)}, \epsilon) \leq \log \abs{\frac{R}{\epsilon} M^{1/2}}_{> 1} + \Tilde{p}\parens{\frac{R}{\epsilon} M^{1/2}, (1 - \gamma)} \log \parens{\frac{3}{\gamma}}. \]
\end{theo}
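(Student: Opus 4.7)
The plan is to chain together the three preceding lemmas in order to convert the function-space covering problem into a purely geometric ellipsoid covering problem, to which Lemma~\ref{lem:ellipsoidcoveringnum} applies directly. No genuinely new argument is required; the work is almost entirely bookkeeping to match the notation of the three lemmas.

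First I would apply Lemma~\ref{lem:l2coveriseuclidcover} with $A = B_R^p$, which identifies
\[ \mathcal{N}(\mathcal{C}, \norm{\bullet}_{L^2(X, \nu)}, \epsilon) = \mathcal{N}(B_R^p, \norm{\bullet}_M, \epsilon), \]
reducing the problem to covering a Euclidean ball in the seminorm induced by $M$. Next I would apply Lemma~\ref{lem:ballcovertoellcover} to transform this into a covering of an ellipsoid by unit Euclidean balls,
\[ \mathcal{N}(B_R^p, \norm{\bullet}_M, \epsilon) = \mathcal{N}\parens{E_{(R/\epsilon)\sigma}, B_1^p}, \]
where $\sigma = (\sigma_1, \ldots, \sigma_p)$ denotes the spectrum of $M^{1/2}$.

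Finally I would apply Lemma~\ref{lem:ellipsoidcoveringnum} with $a = (R/\epsilon)\sigma$. The entries $a_i = (R/\epsilon)\sigma_i$ are precisely the eigenvalues of $(R/\epsilon) M^{1/2}$, so the index set $J = \{i : a_i > 1\}$ picks out those eigenvalues of $(R/\epsilon) M^{1/2}$ exceeding $1$, and hence
\[ K = \sum_{i \in J} \log a_i = \log \prod_{i \,:\, (R/\epsilon)\sigma_i > 1} (R/\epsilon)\sigma_i = \log \abs{\frac{R}{\epsilon} M^{1/2}}_{>1}. \]
Since $a_i \geq 0$, the condition $a_i^2 > (1-\gamma)^2$ is equivalent to $a_i > 1 - \gamma$, so $\mu_\gamma = \Tilde{p}\parens{\frac{R}{\epsilon} M^{1/2},\, 1 - \gamma}$. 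Substituting these identifications into the two-sided bound from Lemma~\ref{lem:ellipsoidcoveringnum} yields both inequalities in the theorem.

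There is essentially no substantive obstacle; the only mild subtlety is that $M$ may be singular, but Lemma~\ref{lem:ballcovertoellcover} already handles this via the pseudoinverse and the projection onto the range of $M^{1/2}$, while the quantities $\abs{\cdot}_{>1}$ and $\Tilde{p}(\cdot, 1-\gamma)$ are insensitive to vanishing eigenvalues by construction.
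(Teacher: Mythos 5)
Your proposal is correct and follows the paper's proof essentially verbatim: chain Lemma~\ref{lem:l2coveriseuclidcover} and Lemma~\ref{lem:ballcovertoellcover} to reduce to covering the ellipsoid $E_{(R/\epsilon)\sigma}$ by unit balls, then invoke Lemma~\ref{lem:ellipsoidcoveringnum} with $a = (R/\epsilon)\sigma$. Your explicit unpacking of $K = \log\abs{\tfrac{R}{\epsilon}M^{1/2}}_{>1}$ and $\mu_\gamma = \tilde{p}(\tfrac{R}{\epsilon}M^{1/2}, 1-\gamma)$ is exactly the translation the paper performs implicitly.
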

\begin{proof}
We have by Lemmas \ref{lem:l2coveriseuclidcover} and \ref{lem:ballcovertoellcover}  that $\mathcal{N}(\mathcal{C}, \norm{\bullet}_{L^2(X, \nu)}, \epsilon) = \mathcal{N}(B_R^p, \norm{\bullet}_M, \epsilon) = \mathcal{N}(E_{\frac{R}{\epsilon} \sigma}, B_1^p)$ where $\sigma = (\sigma_1, \ldots, \sigma_p)^T \in \RR^p$ is the vector of eigenvalues of $M^{1/2}$.  Well then by applying Lemma \ref{lem:ellipsoidcoveringnum} with $a = \frac{R}{\epsilon} \sigma$ we have that
\[ \log \abs{\frac{R}{\epsilon} M^{1/2}}_{> 1} \leq \log \mathcal{N}(E_{\frac{R}{\epsilon} \sigma}, B_1) \leq \log \abs{\frac{R}{\epsilon} M^{1/2}}_{> 1} + \Tilde{p}\parens{\frac{R}{\epsilon} M^{1/2}, (1 - \gamma)} \log \parens{\frac{3}{\gamma}}. \]
The desired result thus follows.
\end{proof}
\begin{cor}\label{cor:covnum}
Let $g : X \rightarrow \RR^p$ such that $\norm{g}_2 \in L^2(X, \nu)$.  Let $\mathcal{C} = \{x \mapsto \langle g(x), \theta \rangle : \norm{\theta}_2 \leq R\}$, $\gamma \in (0, 1/2)$.  Define $M \in \RR^{p \times p}$ by
\[ M = \int_X g(x) g(x)^T d\nu(x). \]
Then the proper covering number $\mathcal{N}(\mathcal{C}, \norm{\bullet}_{L^2(X, \nu)}, \epsilon)$ satisfies
\[\log \mathcal{N}(\mathcal{C}, \norm{\bullet}_{L^2(X, \nu)}, \epsilon) = \tilde{O}\parens{\tilde{p}\parens{M^{1/2}, \frac{3\epsilon}{4R}}}. \]
\end{cor}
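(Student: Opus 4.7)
The plan is to derive this as a direct consequence of Theorem \ref{thm:kernelcovnum} by choosing $\gamma$ so that the effective-rank scale in the theorem matches the $3\epsilon/(4R)$ appearing in the target bound, and then absorbing the remaining prefactors into the $\tilde{O}$ notation. The natural choice is $\gamma = 1/4$, so that $1-\gamma = 3/4$. Writing $\sigma_1 \geq \sigma_2 \geq \cdots \geq 0$ for the eigenvalues of $M^{1/2}$, the simple rescaling
\[
\tilde{p}\!\parens{\tfrac{R}{\epsilon} M^{1/2},\, 1-\gamma} \;=\; \bigl|\{i : \sigma_i > (1-\gamma)\epsilon/R\}\bigr| \;=\; \tilde{p}\!\parens{M^{1/2},\, \tfrac{3\epsilon}{4R}}
\]
converts the second term on the right-hand side of Theorem \ref{thm:kernelcovnum} into $\tilde{p}(M^{1/2}, 3\epsilon/(4R)) \cdot \log 12$, which is exactly the form we want.

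Next I would show that the first term $\log \abs{\tfrac{R}{\epsilon} M^{1/2}}_{>1}$ is also $\tilde{O}(\tilde{p}(M^{1/2}, 3\epsilon/(4R)))$. By the definition of $|\cdot|_{>c}$, this term equals
\[
\sum_{i\,:\, \sigma_i > \epsilon/R} \log\!\parens{\tfrac{R\sigma_i}{\epsilon}}.
\]
The number of summands is $\tilde{p}(M^{1/2},\epsilon/R)$, and since $\epsilon/R > 3\epsilon/(4R)$, the monotonicity of $\tilde{p}(M^{1/2},\,\cdot\,)$ in its threshold gives $\tilde{p}(M^{1/2},\epsilon/R) \leq \tilde{p}(M^{1/2},3\epsilon/(4R))$. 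Each summand is at most $\log(R\sigma_1/\epsilon)$, and because $\sigma_1^2 \leq \mathrm{tr}(M) \leq \int_X \norm{g(x)}_2^2\,d\nu(x) < \infty$ by the hypothesis $\norm{g}_2 \in L^2(X,\nu)$, the quantity $\sigma_1$ is a bounded constant. Hence $\log(R\sigma_1/\epsilon)$ is polylogarithmic in $R/\epsilon$ and is absorbed by the $\tilde{O}$.

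Combining the two pieces yields
\[
\log \mathcal{N}(\mathcal{C},\norm{\bullet}_{L^2(X,\nu)},\epsilon) \;\leq\; \tilde{p}(M^{1/2}, 3\epsilon/(4R)) \cdot \brackets{\log(R\sigma_1/\epsilon) + \log 12} \;=\; \tilde{O}\!\parens{\tilde{p}(M^{1/2}, 3\epsilon/(4R))},
\]
which is the claim. I do not anticipate any substantive obstacle here: the Corollary is a routine repackaging of Theorem \ref{thm:kernelcovnum}, and the only bookkeeping is verifying that the $\log(R\sigma_1/\epsilon)$ prefactor spawned by the first term is indeed absorbed by the $\tilde{O}$ convention, which relies solely on the standing $L^2$-integrability hypothesis to bound the top eigenvalue $\sigma_1$.
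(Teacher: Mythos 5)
Your proof is correct and follows the same route as the paper's: set $\gamma=1/4$ in Theorem~\ref{thm:kernelcovnum}, convert the effective-rank scale via $\tilde p(\tfrac{R}{\epsilon}M^{1/2},3/4)=\tilde p(M^{1/2},\tfrac{3\epsilon}{4R})$, and bound $\log\abs{\tfrac{R}{\epsilon}M^{1/2}}_{>1}\leq \tilde p(M^{1/2},\epsilon/R)\log(R\sigma_1/\epsilon)\leq \tilde p(M^{1/2},\tfrac{3\epsilon}{4R})\log(R\sigma_1/\epsilon)$, absorbing the $\log$ factor into $\tilde O$. The only addition beyond the paper's two-line argument is your explicit justification that $\sigma_1\leq\sqrt{\mathrm{tr}(M)}<\infty$ via the $L^2$ hypothesis, which the paper leaves implicit.
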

\begin{proof}
This follows from setting $\gamma = 1/4$ and the fact that
\begin{align*}
\log\abs{\frac{R}{\epsilon} M^{1/2}} &= 
\log \parens{\prod_{\sigma_i > \epsilon / R} \frac{R}{\epsilon} \sigma_i} \\
&\leq \tilde{p}(M^{1/2}, \epsilon / R) \log\parens{\frac{R \sigma_1}{\epsilon}} \\
&\leq \tilde{p}\parens{M^{1/2}, \frac{3 \epsilon}{4 R}} \log\parens{\frac{R \sigma_1}{\epsilon}}.    
\end{align*}
\end{proof}

\section{Bounding the Network Hessian and other Technical Items}
\label{sec:hessianbd}
\subsection{Main Hessian Bound}
For a fixed input $x$ we will let $H(x, \theta) := \nabla_\theta^2 f(x;\theta)$ denote the Hessian of the network with respect to the parameters.  We will use the following result, which follows from the proof of a result by \citet[Theorem 3.3]{liubelkinarxiv},  which we state here explicitly for reference. 
\begin{theo}[Reformulation of {\citealt[Theorem 3.3]{liubelkinarxiv}}]
\label{thm:liuhessbd} 
Let $f(x; \theta)$ be a general neural network of the form specified in Section~\ref{sec:architecture} which can be a fully connected network, CNN, ResNet or a mixture of these types.  Let $m$ be the minimum of the hidden layer widths and assume $\max_l \frac{m_l}{m} = O(1)$.  Given any fixed $R \geq 1$ and $x \in X$ then with probability at least $1 - Cme^{-c\log^2(m)}$
\[ \sup_{\theta \in \overline{B}(\theta_0, R)} \norm{H(x, \theta)}_{op} = \tilde{O}\parens{\frac{R}{\sqrt{m}} \brackets{\max\braces{1, \frac{R}{\sqrt{m}}}}^{O(L)}}. \]
In particular if $\sqrt{m} \geq R$ then
\[ \sup_{\theta \in \overline{B}(\theta_0, R)} \norm{H(x, \theta)}_{op} = \tilde{O}\parens{\frac{R}{\sqrt{m}}}. \]
The constants $c, C > 0$ depend on the architecture but are independent of the width.
\end{theo}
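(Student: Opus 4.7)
The plan is to reduce the operator norm estimate to a block-wise analysis and then unfold each block via the chain rule, carefully tracking the factors of $m^{-1/2}$ that appear at each normalization. Writing $u = (u^{(1)}, \ldots, u^{(L+1)})$ for a unit vector and $H^{(l,l')}(x,\theta) := \partial^2 f / \partial \theta^{(l)} \partial \theta^{(l')}$, we have
\[
u^T H(x,\theta) u = \sum_{l,l'} (u^{(l)})^T H^{(l,l')}(x,\theta) u^{(l')},
\]
so since the depth $L$ is constant it is enough to bound each $\norm{H^{(l,l')}(x,\theta)}_{op}$ uniformly over $\theta \in \overline{B}(\theta_0, R)$ on a single high-probability event and then union-bound over the $O(L^2)$ pairs.

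The first step establishes baseline bounds at initialization. Standard Gaussian random matrix concentration yields $\norm{W^{(l)}_0}_{op} = \tilde{O}(\sqrt{m})$ and $\norm{v}_2 = \tilde{O}(\sqrt{m_L})$ for all $l$ simultaneously, with probability at least $1 - Cm\exp(-c\log^2(m))$. Inside $\overline{B}(\theta_0, R)$ the perturbation is at most $R$ in Frobenius norm, giving $\norm{W^{(l)}}_{op} \leq \tilde{O}(\sqrt{m}) + R$; this is the origin of the penalty factor $[\max\{1, R/\sqrt{m}\}]^{O(L)}$ in the conclusion. A forward induction on $l$ using Lipschitz $\omega$ together with the $1/\sqrt{m_{l-1}}$ normalization then gives $\norm{\alpha^{(l)}(x,\theta)}_2 = \tilde{O}(\sqrt{m_l})$ and $\norm{z^{(l)}}_\infty = \tilde{O}(1)$ throughout the ball, where $z^{(l)}$ is the preactivation (the residual case is handled identically, since a skip connection preserves the scale). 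A symmetric backward induction yields $\norm{\partial \alpha^{(L)}/\partial \alpha^{(l)}}_{op} = \tilde{O}(1)$.

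With these ingredients in place, I would expand $H^{(l,l')}$ via the chain rule. The diagonal block $H^{(l,l)}$ contains a term involving $\omega''$ (bounded since $\omega'$ is Lipschitz) paired with back-propagated Jacobians, while off-diagonal blocks for $l \neq l'$ are products of weight matrices, activation derivatives, and back-propagated terms. The decisive observation is that the output normalization $1/\sqrt{m_L}$ is not entirely canceled in second-order expressions: whereas the first derivative of $f$ pairs the $1/\sqrt{m_L}$ factor with $\norm{v}_2 = \tilde{O}(\sqrt{m_L})$ to produce $O(1)$-scale gradients, the Hessian contains one fewer power of a $\sqrt{m}$-scale forward-pass quantity and therefore carries an unpaired $1/\sqrt{m}$. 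Multiplied by the displacement factor $R$ that arises from contracting weight matrices against $u^{(l)}, u^{(l')}$ under $\norm{u^{(l)}}_2 \leq 1$, this yields $R/\sqrt{m}$, inflated by the perturbation factor $[\max\{1, R/\sqrt{m}\}]^{O(L)}$.

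The principal technical burden is the combinatorial bookkeeping: each $H^{(l,l')}$ decomposes into a sum of chain-rule terms, and for every one of them one must verify both the correct power of $m^{-1/2}$ and the correct power of $R$, while handling the three layer types in parallel. Convolutional layers contribute only constant-factor overhead because the filter size $K$ is treated as constant and the weight-sharing structure preserves operator-norm scaling, while residual connections only improve the forward and backward bounds. The logarithmic factors hidden in $\tilde{O}$ trace back to sub-Gaussian tail estimates taken at level $\log(m)$ rather than at the order-$\sqrt{m}$ scale, which is precisely what delivers the stated $1 - Cm\exp(-c\log^2(m))$ probability.
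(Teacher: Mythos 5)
You should first be aware that the paper does not prove Theorem~\ref{thm:liuhessbd}: it is explicitly labeled a ``reformulation'' of \citet[Theorem~3.3]{liubelkinarxiv} and is imported as an external result. The only material the paper supplies is (i) a short discussion of how the stated failure probability and $R$-dependence were extracted from the Liu--Zhu--Belkin proof (with a footnote acknowledging this was clarified in private communication with those authors), (ii) a paragraph explaining the origin of the $\max\{1, R/\sqrt{m}\}^{O(L)}$ penalty via the operator-norm drift $\xi(\theta) = \max_l m^{-1/2}\|W^{(l)}\|_{op} = O(\max\{1, R/\sqrt{m}\})$ propagating multiplicatively into $m^{-1/2}\|\alpha^{(l)}\|_2 = O(\xi^{O(L)})$, and (iii) a note that the antisymmetric-initialization model has a block-diagonal Hessian so the bound transfers. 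There is no proof to compare against line by line; what can be checked is whether your sketch is a plausible account of the external argument and whether your reconstructed constants are consistent with the paper's own reading of it.

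\paragraph{What you get right.}
Your high-level architecture --- block decomposition $H = (H^{(l,l')})_{l,l'}$, union bound over $O(L^2)$ blocks, forward induction giving $\|\alpha^{(l)}\|_2 = \tilde{O}(\sqrt{m_l})$ and backward induction giving $\|\partial\alpha^{(L)}/\partial\alpha^{(l)}\|_{op} = \tilde{O}(1)$, a single unpaired $1/\sqrt{m_L}$ surviving in second-order terms --- is the right skeleton and matches the paper's own account of where the $[\max\{1,R/\sqrt{m}\}]^{O(L)}$ penalty and the $1-Cm e^{-c\log^2 m}$ probability come from. In particular your attribution of the penalty factor to $\|W^{(l)}\|_{op} \leq \tilde{O}(\sqrt{m}) + R$ compounding through layers, and of the $\log^2 m$-type failure probability to sub-Gaussian tails taken at level $\log m$ (needed for $\ell_\infty$-type estimates), both agree with what the paper says in its discussion paragraphs.

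\paragraph{Where the sketch goes wrong.}
Your explanation of the \emph{leading} $R$ factor is not correct. You write that it comes from ``contracting weight matrices against $u^{(l)}, u^{(l')}$ under $\|u^{(l)}\|_2 \leq 1$,'' but a contraction of a normalized weight matrix against unit vectors yields $\tilde{O}(1)$, not $R$: e.g.\ $\frac{1}{\sqrt{m}}(U^{(l')})^T\alpha^{(l'-1)}$ has norm at most $\frac{1}{\sqrt{m}}\|U^{(l')}\|_F \|\alpha^{(l'-1)}\|_2 = \tilde{O}(1)$. The correct origin of the $R$ factor is the $\ell_\infty$ drift of entries that are $\tilde{O}(1)$ at initialization and couple multiplicatively to $\omega''$ terms in the Hessian. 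The cleanest illustration is a two-layer network $f = m^{-1/2}v^T\omega(d^{-1/2}Wx)$: the block $\partial^2 f/\partial W^2$ has operator norm $\tilde{O}(\|v\|_\infty/\sqrt{m})$, and within $\overline{B}(\theta_0,R)$ one has $\|v\|_\infty \leq \tilde{O}(1) + R$, whence the $R/\sqrt{m}$. Nothing of the kind comes from contracting unit test vectors. Relatedly, your claim that $\|z^{(l)}\|_\infty = \tilde{O}(1)$ uniformly over the ball is false: a single-row perturbation of $W^{(l)}$ of Frobenius norm $R$ moves one entry of $z^{(l)}$ by up to $\tilde{O}(R)$, so the correct statement is $\|z^{(l)}\|_\infty = \tilde{O}(\max\{1,R\})$. (This particular error is harmless here because $\omega$ and $\omega'$ are globally Lipschitz so you never actually need the $\ell_\infty$ bound on preactivations, but it should not appear as a stated intermediate fact.) Finally, you yourself flag the chain-rule bookkeeping as the ``principal technical burden'' and then do not carry it out; the sketch is therefore a plausible roadmap rather than a proof, and as it stands cannot be checked for the exponent in $\max\{1,R/\sqrt{m}\}^{O(L)}$ or for the exact power of $R$. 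If you want to make this self-contained, the two-layer calculation above is the place to nail the $R$-dependence precisely before attempting the general $L$ case.
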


\paragraph{Discussion of the statement of Theorem~\ref{thm:liuhessbd}} 
We note that our statement of Theorem~\ref{thm:liuhessbd} is not exactly the same as the result of \citet[Theorem 3.3]{liubelkinarxiv}.  \citet{liubelkinarxiv} do not explicitly write the failure probability and the dependence of the Hessian bound on $R$ in the statement of the theorem.  In Theorem~\ref{thm:liuhessbd} we write the failure probability and dependence on the radius $R$ according to the proof\footnote{We communicated with the authors to better understand the dependence of the bound on the quantity $R$.  Nevertheless we accept full liability for any misinterpretation of their proof.} provided by the authors \citet{liubelkinarxiv}.  We also add the assumption $\max_{l} \frac{m_l}{m} = O(1)$ to the hypothesis.  This assumption is so that the initial weight matrices satisfy $\frac{1}{\sqrt{m}} \big\|W_0^{(l)}\big\|_{op} = O(1)$ with high probability (see Lemma~\ref{lem:initweightmatbd}).  This condition on the initial weight matrices appears in the proof by \citet{liubelkinarxiv}.  The authors \citet{liubelkinarxiv} do not need to explicitly add this assumption because they perform the proof for the case where all the layers have equal width for simplicity of presentation, while stating that the proof generalizes to the case where the layers do not have equal width.  
\paragraph{Exponential dependence on depth}
We note that under the $\tilde{O}$ notation in Theorem~\ref{thm:liuhessbd} there are constants that depend exponentially on the network depth $L$.  For this reason it is essential that the depth $L$ be treated as constant.  We will now briefly explain how the term $\max\{1, R/\sqrt{m}\}^{O(L)}$ arises in the bound in Theorem~\ref{thm:liuhessbd}.  For simplicity assume the network is fully connected at each layer (the same form of argument holds for the other cases).  Let $\xi(\theta) = \max_{l} \frac{1}{\sqrt{m}} \big\|W^{(l)}\big\|_{op}$.  With high probability over the initialization we have that $\xi(\theta_0) = O(1)$ (see Lemma~\ref{lem:initweightmatbd}).  Furthermore for $\theta$ such that $\norm{\theta - \theta_0}_2 \leq R$ we have that $\xi(\theta) \leq \xi(\theta_0) + \frac{R}{\sqrt{m}} = O(\max\{1, R/\sqrt{m}\})$.  It turns out that the features $\alpha^{(l)}$ at each layer $l$ satisfy $\frac{1}{\sqrt{m}} \norm{\alpha^{(l)}}_2 = O(\xi^{O(L)})$.  Well for $\theta$ such that $\norm{\theta - \theta_0}_2 \leq R$ as stated before we have that $\xi(\theta) = O(\max\{1, R/\sqrt{m}\})$.  Consequently for such $\theta$ we get that $\frac{1}{\sqrt{m}} \norm{\alpha^{(l)}}_2 = O(\xi^{O(L)}) = O(\max\{1, R/\sqrt{m}\}^{O(L)})$.  The Hessian bound inherits dependence on the quantity $O(\max\{1, R/\sqrt{m}\}^{O(L)})$ from its dependence the normalized feature $\frac{1}{\sqrt{m}} \norm{\alpha^{(l)}}_2$ norms.
\paragraph{Antisymmetric initialization and the Hessian}
We will now explain how the antisymmetric initialization trick will not hinder us from bounding the Hessian via Theorem~\ref{thm:liuhessbd}.  Let $f(x; \theta)$ denote any model of the form specified in Section~\ref{sec:architecture} where $\theta \in \RR^p$.  Let $\tilde{\theta} = \left[ \begin{smallmatrix}
\theta\\
\theta'
\end{smallmatrix} \right]$
where $\theta, \theta' \in \RR^p$.  Recall the antisymmetric initialization trick defines the model
\[ f_{ASI}(x; \tilde{\theta}) := \frac{1}{\sqrt{2}} f(x; \theta) - \frac{1}{\sqrt{2}} f(x; \theta') \]
which takes the difference of two rescaled copies of the model $f(x; \bullet)$ with parameters $\theta$ and $\theta'$ that are optimized freely.  We then note that the Hessian of $f_{ASI}$ has the block diagonal structure
\[ \nabla_{\tilde{\theta}}^2 f_{ASI}(x; \tilde{\theta}) = \frac{1}{\sqrt{2}} 
\begin{bmatrix}
\nabla_\theta^2 f(x; \theta)& 
0 \\
0 & 
-\nabla_{\theta'}^2 f(x; \theta')
\end{bmatrix} = \frac{1}{\sqrt{2}} \begin{bmatrix}
H(x, \theta) & 
0\\
0 & 
-H(x, \theta')
\end{bmatrix}.
\]
Well then it is not too hard to show that
\[ \norm{\nabla_{\tilde{\theta}}^2 f_{ASI}(x; \tilde{\theta})}_{op} \leq \max \brackets{\norm{H(x, \theta)}_{op}, \norm{H(x, \theta')}_{op}}.  \]
Now recall that the antisymmetric initialization trick initializes $\theta_0 \sim N(0, I)$ then sets
$\tilde{\theta}_0 = \left[\begin{smallmatrix}
\theta_0\\
\theta_0
\end{smallmatrix}\right]$.  Furthermore note that if $\norm{\tilde{\theta} - \tilde{\theta_0}}_2 \leq R$ then $\norm{\theta - \theta_0}_2 \leq R$ and $\norm{\theta' - \theta_0}_2 \leq R$.  Thus if $\theta_0$ is an initialization such that the conclusion of Theorem~\ref{thm:liuhessbd} holds for the model $f(x; \theta)$ then the same conclusion holds for $f_{ASI}(x; \tilde{\theta})$ with initialization $\tilde{\theta}_0$.
\subsection{Definition of the Convolution Operation}
In this subsection we will formally define the convolution operation $*$ introduced in Section~\ref{sec:architecture}.  We use the same convention for the convolution operation as \citet{liubelkinarxiv}.  A convolutional layer of the network has the form
\[ \alpha^{(l)} = \psi_l(\theta^{(l)}, \alpha^{(l - 1)}) = \omega\parens{\frac{1}{\sqrt{m_{{l - 1}}}} W^{(l)} * \alpha^{(l - 1)}}. \]
Here $W^{(l)} \in \RR^{K \times m_{l} \times m_{l - 1}}$ is an order-3 tensor where $K$ denotes the filter size, $m_l$ is the number of output channels, and $m_{l - 1}$ is the number of input channels.  The input $\alpha^{(l - 1)} \in \RR^{m_{l - 1} \times Q}$ is a matrix with $m_{l - 1}$ rows as channels and $Q$ columns as pixels.  The output of the layer $\psi_l$ is of size $\RR^{m_l \times Q}$.  From now on we will drop the superscripts and just denote $W = W^{(l)}$ and $\alpha = \alpha^{(l)}$.  The convolution operation is defined as
\[ (W * \alpha)_{i, q} = \sum_{k = 1}^K \sum_{j = 1}^{m_{l - 1}} W_{k, i, j} \alpha_{j, q + k - \frac{K + 1}{2}}. \]
This can be reformulated as follows.  For each $k \in [K]$ define the matrices $W^{[k]} := W_{k, i, j}$ and $(\alpha^{[k]})_{j, q} := \alpha_{j, q + k - \frac{K + 1}{2}}$.  Then the convolution operation can be rewritten as
\[ (W * \alpha) = \sum_{k = 1}^K W^{[k]} \alpha^{[k]}. \]
Under this reformulation the convolutional layer can be rewritten as
\[ \psi(W, \alpha) = \omega\parens{\sum_{k = 1}^K \frac{1}{\sqrt{m_{l - 1}}} W^{[k]} \alpha^{[k]}}. \]
By treating each $W^{[k]}$ as if it were a weight matrix in a fully connected layer, the convolutional layers can be treated similarly to fully connected layers.  Thus when we refer to weight matrices in the context of a convolutional layer we are referring to the matrices $W^{[k]}$.
\subsection{Technical Lemmas}
This section will cover some miscellaneous technical lemmas that will be of significance later.  The following lemma bounds the operator norm of the weight matrices at initialization.
\begin{lem}\label{lem:initweightmatbd}
Let $f(x; \theta)$ be a neural network of the form specified in Section~\ref{sec:architecture}.  Assume $m \geq d$ and $\max_l \frac{m_l}{m} \leq A$.  Then with probability at least $1 - C\exp(-c m)$ over the initialization $\theta_0$ each weight matrix $W_0$ at initialization satisfies
\[ \frac{1}{\sqrt{m}} \norm{W_0} \leq 2 \sqrt{A} + 1. \]
The constant $C > 0$ depends on the architecture but is independent of the width $m$.
\end{lem}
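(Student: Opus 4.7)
The plan is to reduce the lemma to a standard Gaussian-matrix tail bound, applied to each weight matrix in the network and then combined by a union bound. Since $\theta_0 \sim N(0, I)$, each weight matrix $W^{(l)}_0 \in \RR^{m_l \times m_{l-1}}$ (and each slice $(W^{(l)}_0)^{[k]}$ for a convolutional layer, as defined in the earlier subsection) has i.i.d.\ standard Gaussian entries. For such a matrix with dimensions $a \times b$, the classical Davidson--Szarek / Gordon estimate combined with Borell's Gaussian concentration for the Lipschitz map $W \mapsto \|W\|_{op}$ yields
\[
\PP\!\parens{\|W\|_{op} \geq \sqrt{a} + \sqrt{b} + t} \;\leq\; 2\exp(-t^2/2).
\]

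Apply this to a single layer with $a = m_l$, $b = m_{l-1}$. Under the assumption $\max_l m_l/m \leq A$ together with $m_0 = d \leq m$, both $\sqrt{m_l}$ and $\sqrt{m_{l-1}}$ are at most $\sqrt{A}\sqrt{m}$, so $\sqrt{m_l} + \sqrt{m_{l-1}} \leq 2\sqrt{A}\sqrt{m}$. Choosing $t = \sqrt{m}$ gives
\[
\|W^{(l)}_0\|_{op} \;\leq\; (2\sqrt{A}+1)\sqrt{m}
\]
with probability at least $1 - 2\exp(-m/2)$, which is the desired per-matrix inequality. For a convolutional layer each slice $(W^{(l)}_0)^{[k]}$ is again Gaussian with the same dimensions, and for the readout vector $v \in \RR^{m_L}$ the same estimate (viewing $v$ as a $1 \times m_L$ matrix, or equivalently using $\chi^2$-concentration for $\|v\|_2^2$) produces an operator norm bounded by $(2\sqrt{A}+1)\sqrt{m}$ with the same exponential tail.

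Finally, take a union bound over all weight matrices: there are $L$ of them for fully connected and residual layers, $KL$ of them once we account for the $K$ convolutional slices, plus the readout, all of which is $O(1)$ because $L$ and $K$ are constant under the standing assumptions. This absorbs the factor of $2$ and the count into a single architecture-dependent constant $C$, yielding a failure probability $C\exp(-cm)$ with $c = 1/2$ and $C$ depending only on the architecture. The antisymmetric initialization trick poses no additional obstacle since $\tilde{\theta}_0$ simply consists of two copies of a standard Gaussian $\theta_0$; each of its weight matrices is still a standard Gaussian matrix of the same dimensions, so the same union bound applies.

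There is no real obstacle here beyond bookkeeping: the statement is essentially Davidson--Szarek plus a union bound. The only point worth being careful about is handling the convolutional slices and the readout vector uniformly with the fully connected/residual case, so that the constant $C$ can be stated as depending on the architecture but not on $m$.
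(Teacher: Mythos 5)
Your proof is correct and takes essentially the same route as the paper: the paper invokes Vershynin's Corollary 5.35 (which is exactly the Davidson--Szarek/Gordon bound you use), sets $t=\sqrt{m}$, divides by $\sqrt{m}$ using $\max_l m_l/m \le A$ and $d \le m$, and finishes with a union bound over the (architecture-dependent, $m$-independent) number of weight matrices. The extra bookkeeping you add for convolutional slices, the readout vector, and the antisymmetric doubling is sound and consistent with the paper's conventions, even though the paper's proof does not spell those cases out.
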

\begin{proof}
Fix a weight matrix $W \in \RR^{m_l \times m_{l - 1}}$ in the model.  Following \citet[Corollary 5.35]{vershynin2011introduction} we have with probability at least $1 - 2\exp(-t^2 / 2)$ over the initialization
\[ \norm{W_0}_{op} \leq \sqrt{m_l} + \sqrt{m_{l - 1}} + t \]
and thus
\[ \frac{1}{\sqrt{m}} \norm{W_0^{(l)}}_{op} \leq \frac{\sqrt{m_l}}{\sqrt{m}} + \frac{\sqrt{m_{l - 1}}}{\sqrt{m}} + \frac{t}{\sqrt{m}} \leq 2\sqrt{A} + \frac{t}{\sqrt{m}}. \]
Thus by setting $t = \sqrt{m}$ and taking the union bound over all weight matrices in the model (which depends on the architecture) we get the desired result.
\end{proof}
We now state for reference the following lemma which follows from the proof in \citep{liubelkinarxiv}.
\begin{lem}\label{lem:graddeterministicbd}
Let $R \geq 1$ and let $f(x; \theta)$ be a neural network of the form specified in Section~\ref{sec:architecture}.  If $\theta_0$ is an initialization such that each weight matrix $W_0$ satisfies
$\frac{1}{\sqrt{m}} \big\|W_0^{(l)}\big\|_2 = O(1)$ then
\[ \sup_{x \in X} \sup_{\theta \in \overline{B}(\theta_0, R)} \norm{\nabla_\theta f(x; \theta)}_2 = O\parens{\max\braces{1, \frac{R}{\sqrt{m}}}^{O(L)}}. \]
In particular if $\sqrt{m} \geq R$ then
\[ \sup_{x \in X} \sup_{\theta \in \overline{B}(\theta_0, R)} \norm{\nabla_\theta f(x; \theta)}_2 = O\parens{1}. \]
\end{lem}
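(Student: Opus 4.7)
The plan is to establish the gradient bound by first controlling the operator norms of all weight matrices uniformly over the ball $\overline{B}(\theta_0, R)$, then propagating the bound through a forward pass to control the layer features, then through a backward pass (via chain rule) to control the backpropagation vectors, and finally combining the two to bound each block of the gradient.

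First I would define the scalar $\xi(\theta) := \max_l \frac{1}{\sqrt{m}} \big\| W^{(l)} \big\|_{op}$. For any $\theta \in \overline{B}(\theta_0, R)$ and any weight matrix, $\big\|W^{(l)} - W_0^{(l)}\big\|_{op} \leq \big\|W^{(l)} - W_0^{(l)}\big\|_F \leq \norm{\theta - \theta_0}_2 \leq R$. Combined with the hypothesis $\xi(\theta_0) = O(1)$, the triangle inequality gives $\xi(\theta) = O(1) + R/\sqrt{m} = O(\max\{1, R/\sqrt{m}\})$; call this bound $\Xi$. (For convolutional layers we apply the same argument to the matrices $W^{[k]}$ obtained from the reformulation in the subsection defining $*$.)

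Next I would control the forward pass. Since $X$ is compact, $\|\alpha^{(0)}\|_2 = \|x\|_2 = O(1)$. For a fully connected layer, the Lipschitz continuity of $\omega$ yields
\[
\frac{\|\alpha^{(l)}\|_2}{\sqrt{m_l}} \leq \frac{|\omega(0)|\sqrt{m_l}}{\sqrt{m_l}} + L_\omega \cdot \frac{\|W^{(l)}\|_{op}}{\sqrt{m_{l-1}}} \cdot \frac{\|\alpha^{(l-1)}\|_2}{\sqrt{m_l}} \cdot \frac{\sqrt{m_l}}{1},
\]
and after using $m_l/m, m_{l-1}/m = O(1)$ and $\|W^{(l)}\|_{op} \leq \sqrt{m}\,\Xi$, this simplifies to $\tfrac{\|\alpha^{(l)}\|_2}{\sqrt{m_l}} = O(1) + O(\Xi)\cdot \tfrac{\|\alpha^{(l-1)}\|_2}{\sqrt{m_{l-1}}}$. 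Iterating in $l$, using $\Xi \geq \Omega(1)$, gives $\tfrac{\|\alpha^{(l)}\|_2}{\sqrt{m_l}} = O(\Xi^{O(L)})$. The same recursion works for convolutional layers (one summand per shift $k$, with an absolute constant) and for residual layers (the extra additive $\alpha^{(l-1)}$ only strengthens the base term). The final readout contributes $|f(x;\theta)| \leq \tfrac{\|v\|_2}{\sqrt{m_L}}\|\alpha^{(L)}\|_2 = O(\Xi^{O(L)})$.

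Then I would control the backward pass by an analogous induction. Writing $\beta^{(l)}$ for the backpropagated sensitivity $\nabla_{\alpha^{(l)}} f$, the chain rule gives, for a fully connected layer above, $\beta^{(l)} = \tfrac{1}{\sqrt{m_l}}(W^{(l+1)})^T D^{(l+1)} \beta^{(l+1)}$, where $D^{(l+1)}$ is a diagonal matrix of values of $\omega'$ (bounded in absolute value by the Lipschitz constant of $\omega$, so $\|D^{(l+1)}\|_{op} = O(1)$). Using $\|W^{(l+1)}\|_{op} \leq \sqrt{m}\Xi$ and $m_l/m = O(1)$, each step contributes a factor $O(\Xi)$, so $\|\beta^{(l)}\|_2 = O(\Xi^{O(L)}) \cdot \|\beta^{(L)}\|_2$, with the base case $\|\beta^{(L)}\|_2 = \|v\|_2 /\sqrt{m_L} = O(1)$. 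Residual layers only add an identity term that inherits the same bound, and convolutional layers again reduce to the fully connected analysis via the reformulation with the $W^{[k]}$ matrices.

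Finally I would combine the two one-sided bounds. The Frobenius norm of the gradient block with respect to $W^{(l)}$ is bounded by $\tfrac{1}{\sqrt{m_{l-1}}} \|D^{(l)}\|_{op} \|\beta^{(l)}\|_2 \|\alpha^{(l-1)}\|_2 = O(\Xi^{O(L)})$, using $\|\alpha^{(l-1)}\|_2 = O(\sqrt{m}\,\Xi^{O(L)})$ and absorbing $\sqrt{m_{l-1}/m} = O(1)$. The gradient with respect to $v$ is $\alpha^{(L)}/\sqrt{m_L}$, again $O(\Xi^{O(L)})$. Summing over the $O(L)$ blocks and taking the supremum over $x \in X$ (which is uniform because every bound above is expressed purely through $\Xi$ and $\|x\|_2$, which is bounded on the compact $X$) yields $\sup_{x \in X} \sup_{\theta \in \overline{B}(\theta_0,R)} \|\nabla_\theta f(x;\theta)\|_2 = O(\Xi^{O(L)}) = O(\max\{1, R/\sqrt{m}\}^{O(L)})$, and the specialization to $\sqrt{m} \geq R$ is immediate. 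The main obstacle is bookkeeping: carefully verifying that the recursions for both $\alpha^{(l)}$ and $\beta^{(l)}$ accommodate the three allowed layer types uniformly, and that the constants absorbed into $O(\cdot)$ depend only on the architecture (depth, Lipschitz constants, filter size) and not on the width.
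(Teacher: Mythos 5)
Your proposal is correct and matches the forward/backward propagation argument that the paper attributes to \citet{liubelkinarxiv} without reproving (the paper states the lemma ``for reference'' and only sketches the idea in its ``Exponential dependence on depth'' discussion): control $\xi(\theta) = \max_l \tfrac{1}{\sqrt{m}}\|W^{(l)}\|_{op}$ on $\overline{B}(\theta_0,R)$ via the Frobenius-norm perturbation, propagate this through the layers to bound the normalized features $\|\alpha^{(l)}\|_2/\sqrt{m_l}$ and the backpropagated sensitivities $\beta^{(l)}$ by $O(\Xi^{O(L)})$, then combine block by block. Two minor slips that do not affect the conclusion: the forward-pass display carries a stray trailing factor $\sqrt{m_l}$, and the base case $\|\beta^{(L)}\|_2 = \|v\|_2/\sqrt{m_L}$ should be $O(\Xi)$ rather than $O(1)$ once $v$ is allowed to drift by $R$ from $v_0$; both are harmlessly absorbed into the final $O(\Xi^{O(L)})$.
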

As a consequence of the previous lemma we get the following high probability bound on the gradients norm $\norm{\nabla_\theta f(x; \theta)}_2$.
\begin{lem}\label{lem:gradbd}
Let $R \geq 1$ and let $f(x; \theta)$ be a neural network of the form specified in Section~\ref{sec:architecture}.  Assume that $m \geq d$, $\max_l \frac{m_l}{m} = O(1)$, and $\sqrt{m} \geq R$.  Then with probability at least $1 - C\exp(-c m)$ over the initialization $\theta_0$ we have that
\[ \sup_{x \in X} \sup_{\theta \in \overline{B}(\theta_0, R)} \norm{\nabla_\theta f(x; \theta)}_2 = O(1). \]
The constant $C > 0$ depends on the architecture but is independent of the width $m$
\end{lem}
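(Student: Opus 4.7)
The plan is to simply combine the two preceding lemmas, treating Lemma~\ref{lem:gradbd} as the straightforward probabilistic realization of the deterministic bound in Lemma~\ref{lem:graddeterministicbd}. The logic is: the deterministic gradient bound is conditional on the initial weight matrices being well-behaved (i.e.\ $\frac{1}{\sqrt{m}}\|W_0^{(l)}\|_{op} = O(1)$), and Lemma~\ref{lem:initweightmatbd} tells us exactly that this event occurs with probability at least $1 - C\exp(-cm)$ under the assumption that $\max_l m_l/m = O(1)$.

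Concretely, I would proceed as follows. First, apply Lemma~\ref{lem:initweightmatbd}: since we assume $m \geq d$ and $\max_l m_l/m \leq A$ for some constant $A$, with probability at least $1 - C\exp(-cm)$ over the draw of $\theta_0$, every weight matrix $W_0^{(l)}$ in the network satisfies $\frac{1}{\sqrt{m}}\|W_0^{(l)}\|_{op} \leq 2\sqrt{A} + 1 = O(1)$. Note for convolutional layers this is applied to the matrices $W^{[k]}$ arising in the reformulation of the convolution operation given earlier, so the same conclusion holds across all the architectures considered in Section~\ref{sec:architecture} (with the constant $C$ possibly depending on the architecture but not on the width). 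Call this event $E$.

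On the event $E$, the hypothesis of Lemma~\ref{lem:graddeterministicbd} is satisfied. Combining this with the assumption $\sqrt{m} \geq R$, the ``in particular'' part of Lemma~\ref{lem:graddeterministicbd} yields
\[
\sup_{x \in X}\, \sup_{\theta \in \overline{B}(\theta_0, R)} \norm{\nabla_\theta f(x; \theta)}_2 = O(1),
\]
deterministically on $E$. Since $\PP(E) \geq 1 - C\exp(-cm)$, this gives the claimed high-probability bound.

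I do not anticipate any genuine obstacle here; all the work has already been done in Lemmas~\ref{lem:initweightmatbd} and~\ref{lem:graddeterministicbd}. The only small point worth flagging is to make sure the constants $C, c$ in the failure probability are consistent across layers: since $L$ is treated as constant, taking the union bound over all weight matrices in the architecture (as is already done inside Lemma~\ref{lem:initweightmatbd}) only inflates $C$ by an architecture-dependent constant, which is absorbed into $C$.
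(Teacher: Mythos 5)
Your proposal is correct and matches the paper's own proof, which consists of the single line ``This follows immediately from Lemma~\ref{lem:initweightmatbd} and Lemma~\ref{lem:graddeterministicbd}.'' You simply spell out the same combination with more detail about the event and the union bound.
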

\begin{proof}
This follows immediately from Lemma~\ref{lem:initweightmatbd} and Lemma~\ref{lem:graddeterministicbd}.
\end{proof}
The following lemma bounds the kernel deviations $K^\theta - K^{\theta_0}$ in terms of the network Hessian.
\begin{lem}\label{lem:kernelbdbyhess}
Let $S = \{z_1, \ldots, z_k\} \subset X$.  Let $B = \sup_{x \in X} \sup_{\theta \in \overline{B}(\theta_0, R)} \norm{\nabla_\theta f(x; \theta)}$ and let $H_{max} = \max_{z \in S} \sup_{\theta \in \overline{B}(\theta_0, R)} \norm{H(z, \theta)}_{op}$.  Then for $\theta \in \overline{B}(\theta_0, R)$
\[ \max_{i, j \in [k]} |K^\theta(z_i, z_j) - K^{\theta_0}(z_i, z_j)| \leq 2 BH_{max} R. \]
\end{lem}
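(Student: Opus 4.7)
The plan is to expand the kernel difference as an inner product telescoping over the gradient deviation, then use Cauchy–Schwarz together with a mean-value estimate on the gradient in terms of the Hessian.

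First, I would write the kernel difference by adding and subtracting a cross term:
\[
K^\theta(z_i, z_j) - K^{\theta_0}(z_i, z_j) = \langle \nabla_\theta f(z_i;\theta) - \nabla_\theta f(z_i;\theta_0), \nabla_\theta f(z_j;\theta)\rangle + \langle \nabla_\theta f(z_i;\theta_0), \nabla_\theta f(z_j;\theta) - \nabla_\theta f(z_j;\theta_0)\rangle .
\]
Applying Cauchy--Schwarz to each summand gives an upper bound in terms of the norms $\norm{\nabla_\theta f(z;\theta)}_2$, which are at most $B$ for any $\theta \in \overline{B}(\theta_0,R)$ and any $z \in X$ by hypothesis, and the gradient deviations $\norm{\nabla_\theta f(z_i;\theta) - \nabla_\theta f(z_i;\theta_0)}_2$.

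Next, I would bound each gradient deviation using the Hessian. Since the segment from $\theta_0$ to $\theta$ lies in $\overline{B}(\theta_0, R)$, the fundamental theorem of calculus gives
\[
\nabla_\theta f(z;\theta) - \nabla_\theta f(z;\theta_0) = \int_0^1 H(z, \theta_0 + s(\theta - \theta_0))(\theta - \theta_0)\, ds,
\]
so that $\norm{\nabla_\theta f(z;\theta) - \nabla_\theta f(z;\theta_0)}_2 \leq \sup_{\theta'\in \overline{B}(\theta_0,R)} \norm{H(z,\theta')}_{op} \cdot \norm{\theta - \theta_0}_2 \leq H_{\max}\, R$ whenever $z \in S$.

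Combining the two estimates yields $|K^\theta(z_i,z_j) - K^{\theta_0}(z_i,z_j)| \leq 2 B H_{\max} R$ uniformly over $i,j \in [k]$, which is the claimed bound. There is no real obstacle here: the only thing to be a little careful about is that the Hessian bound is only assumed on points in $S$, but that is exactly the set of points at which the deviation is evaluated, so the argument goes through without needing any uniform-in-$x$ Hessian control.
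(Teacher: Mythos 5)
Your proof is correct and follows the same route as the paper: the same add-and-subtract decomposition of the kernel difference, Cauchy--Schwarz to pull out $B$ and a gradient deviation, and the fundamental-theorem-of-calculus / operator-norm bound $\norm{\nabla_\theta f(z;\theta) - \nabla_\theta f(z;\theta_0)}_2 \leq H_{\max} R$ using that the segment stays in $\overline{B}(\theta_0,R)$. Your closing remark that the Hessian control is only needed on $S$ is precisely the point the paper is exploiting, so there is nothing to add.
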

\begin{proof}
We have that
\begin{gather*}
|K^\theta(z_i, z_j) - K^{\theta_0}(z_i, z_j)| \\ \leq
\norm{\nabla_\theta f(z_i; \theta)} \norm{\nabla_\theta f(z_j; \theta) - \nabla_\theta f(z_j; \theta_0)} + \norm{\nabla_\theta f(z_i;\theta) - \nabla_\theta f(z_i; \theta_0)} \norm{\nabla_\theta f(z_j; \theta_0)} \\ \leq 2 B H_{max} R.
\end{gather*}
Here we have used the fact that
\begin{gather*}
\norm{\nabla_\theta f(z_i; \theta) - \nabla_\theta f(z_i; \theta_0)}_2 = \norm{\int_0^1 H(z_i, s \theta + (1 - s) \theta_0) (\theta - \theta_0) ds}_2 \\
\leq \int_0^1 \norm{H(z_i, s \theta + (1 - s) \theta_0)}_{op} \norm{\theta - \theta_0}_2 \leq H_{max} R.
\end{gather*}
\end{proof}
The following lemma provides a trivial bound on $\norm{\theta_t - \theta_0}_2$.
\begin{lem}\label{lem:aprioriparambd}
\[\norm{\theta_t - \theta_0}_2 \leq \frac{\sqrt{t}}{\sqrt{2}} \norm{\hat{r}_0}_{\RR^n} \leq \frac{\sqrt{t}}{\sqrt{2}} \norm{f^*}_{L^\infty(X, \rho)}. \]
\end{lem}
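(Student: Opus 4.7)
The plan is to use the standard energy/length trick for gradient flow on a nonnegative objective. The key observation is that along the flow $\partial_t \theta_t = -\nabla_\theta \Phi(\theta_t)$, the loss decreases at rate $\partial_t \Phi(\theta_t) = -\norm{\nabla_\theta \Phi(\theta_t)}_2^2$, so integrating gives the ``energy identity''
\[
\int_0^t \norm{\nabla_\theta \Phi(\theta_s)}_2^2\, ds \;=\; \Phi(\theta_0) - \Phi(\theta_t) \;\leq\; \Phi(\theta_0) \;=\; \tfrac{1}{2}\norm{\hat r_0}_{\RR^n}^2,
\]
where the last equality just unfolds the definition of $\Phi$.

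For the length of the trajectory, I would first apply the triangle inequality to the integral form of gradient flow, $\theta_t - \theta_0 = -\int_0^t \nabla_\theta \Phi(\theta_s)\, ds$, to obtain $\norm{\theta_t - \theta_0}_2 \leq \int_0^t \norm{\nabla_\theta \Phi(\theta_s)}_2\, ds$. Then a Cauchy--Schwarz on $[0,t]$ (treating the integrand against the constant $1$) gives
\[
\int_0^t \norm{\nabla_\theta \Phi(\theta_s)}_2\, ds \;\leq\; \sqrt{t}\,\bigl(\textstyle\int_0^t \norm{\nabla_\theta \Phi(\theta_s)}_2^2\, ds\bigr)^{1/2}.
\]
Combining with the energy identity yields $\norm{\theta_t - \theta_0}_2 \leq \sqrt{t}\cdot \sqrt{\tfrac{1}{2}\norm{\hat r_0}_{\RR^n}^2} = \tfrac{\sqrt{t}}{\sqrt 2}\norm{\hat r_0}_{\RR^n}$, which is the first inequality.

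For the second inequality, the antisymmetric initialization (Assumption~\ref{ass:antisymmetric}) gives $f(x_i; \theta_0) = 0$, so $\hat r_0 = -y$ componentwise. Hence
\[
\norm{\hat r_0}_{\RR^n}^2 \;=\; \tfrac{1}{n}\sum_{i=1}^n y_i^2 \;=\; \tfrac{1}{n}\sum_{i=1}^n f^*(x_i)^2 \;\leq\; \norm{f^*}_{L^\infty(X, \rho)}^2,
\]
since the training inputs lie in the support of $\rho$. Taking square roots gives the second inequality. There is no real obstacle here; the only subtle point is keeping track of the normalized inner product $\langle\bullet,\bullet\rangle_{\RR^n} = \tfrac{1}{n}\langle\bullet,\bullet\rangle$, which is precisely what makes the squared loss equal to $\tfrac12\norm{\hat r_0}_{\RR^n}^2$ (with the $1/n$ absorbed) and aligns the constants in the bound.
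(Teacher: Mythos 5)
Your proof is correct and is essentially the same argument the paper gives: bound the trajectory length by the integral of the gradient norm, apply Cauchy--Schwarz in time, invoke the gradient-flow energy identity $\int_0^t \norm{\nabla_\theta \Phi(\theta_s)}_2^2\,ds = \Phi(\theta_0) - \Phi(\theta_t) \leq \Phi(\theta_0)$, and finish with $\norm{\hat r_0}_{\RR^n} = \norm{y}_{\RR^n} \leq \norm{f^*}_{L^\infty(X,\rho)}$ via the antisymmetric initialization.
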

\begin{proof}
\begin{gather*}
\norm{\theta_t - \theta_0}_2 \leq \int_0^t \norm{\partial_s \theta_s}_2 ds =
\int_0^t \norm{\partial_\theta L(\theta_s)}_2 ds \leq \sqrt{t} \brackets{\int_0^t \norm{\partial_\theta L(\theta_s)}_2^2 ds}^{1/2}  \\
= \sqrt{t} \brackets{\int_0^t - \partial_s L(\theta_s) ds}^{1/2} =
\sqrt{t} \brackets{L(\theta_0) - L(\theta_t)}^{1/2} \leq
\sqrt{t} \brackets{L(\theta_0)}^{1/2} = \frac{\sqrt{t}}{\sqrt{2}} \norm{\hat{r}_0}_{\RR^n} \\
\leq \frac{\sqrt{t}}{\sqrt{2}} \norm{f^*}_{L^\infty(X, \rho)}
\end{gather*}
where the second inequality above follows from the Cauchy-Schwarz inequality and the final inequality follows from the fact that $\norm{\hat{r}_0}_{\RR^n} = \norm{y}_{\RR^n} \leq \norm{f^*}_{L^\infty(X, \rho)}$ from the antisymmetric initialization.
\end{proof}

\section{Convergence of the Operators}
\label{sec:conv_operators}
Throughout this section $K(x, x')$ will be a fixed continuous, symmetric, positive definite kernel.  We will let $\kappa := \max_{x \in X} K(x, x)$.  We note that since $K$ is continuous and $X$ is compact we have that $\kappa < \infty$.  We will thus treat $\kappa$ as a constant.  We also note that since $K$ is a kernel for any $x, x' \in X$ we have the inequality $K(x, x') \leq \sqrt{K(x, x)} \sqrt{K(x', x')} \leq \kappa$.  
\par
We will let $K^\theta(x, x') = \langle \nabla_\theta f(x; \theta), \nabla_\theta f(x'; \theta) \rangle$ denote the NTK for a specific parameter $\theta$.  In this section $\theta_0$ will be treated as fixed.  We will show that for fixed $\theta_0$ we have bounds on $\norm{(T_K - T_n^s)r_s}_{L^2(X, \rho)}$ that hold with high probability over the sampling of $S = (x_1, \ldots, x_n)$.  By the Fubini-Tonelli theorem this suffices to get bounds that hold with high probability over the parameter initialization $\theta_0 \sim \mu$ and data sampling $S \sim \rho^{\otimes n}$ as long as one makes sure that the appropriate events are measureable on the product space.  Fortunately, due to the continuity of $K^\theta(x, x')$ and $H(x, \theta)$ with respect to $x, x'$ and $\theta$ we can avoid such issues and we thus will not address measureability line-by-line.
\par In this section we will bound $\norm{(T_K - T_n^s) r_s}_{L^2(X, \rho)}$ for all $s$ such that $\norm{\theta_s - \theta_0}_2 \leq R$.  This will be done by bounding $\norm{(T_K - T_n)r_s}_{L^2(X, \rho)}$ and $\norm{(T_n - T_n^s)r_s}_{L^2(X, \rho)}$ separately.  At a high level $\norm{(T_n - T_n^s) r_s}_{L^2(X, \rho)}$ will be small whenever $K_0 - K_s$ is small.  On the other hand $\norm{(T_K - T_n)r_s}_{L^2(X, \rho)}$ will be small whenever $n$ is large enough relative to the complexity of the function class $\{ f(x; \theta) : \theta \in \overline{B}(\theta_0, R)\}$.  If $\sup_{\theta \in \overline{B}(\theta_0, R)}\norm{H(x, \theta)}_2$ was uniformly small over $x$ then the kernel deviations $K_0 - K_s$ would be bounded and the complexity of $\{ f(x; \theta) : \theta \in \overline{B}(\theta_0, R)\}$ would be controlled by the complexity of the linearized model $f_{lin}(x; \theta) = \langle \nabla_\theta f(x; \theta_0), \theta - \theta_0 \rangle$.  However, Theorem \ref{thm:liuhessbd} only gives us the ability to bound $\norm{H(x, \theta)}$ for finitely many values of $x$.  For this reason we will need to do somewhat elaborate gymnastics using Rademacher complexity to form estimates that only require the evaluation of $\sup_{\theta \in \overline{B}(\theta_0, R)}\norm{H(x, \theta)}$ over finitely many values of $x$.

Let $\mathcal{F}$ denote some family of real valued functions and let $S = (z_1, \ldots, z_k)$ be a finite point set.  We define 
\[ \mathcal{F}_{|S} = \{ (g(z_1), \ldots, g(z_k)) : g \in \mathcal{F}\} \]
to be the set of all vectors in $\RR^k$ formed by restricting a function in $\mathcal{F}$ to the point set $S$.  Now let $\epsilon \in \RR^k$ be a vector with entries that are i.i.d.\ Rademacher random variables, i.e. $\epsilon_i \sim \text{Unif}\{+1, -1\}$.  We define the (unnormalized) Rademacher complexity of $\mathcal{F}_{|S}$.
\[ URad(\mathcal{F}_{|S}) := \EE_{\epsilon} \sup_{v \in \mathcal{F}_{|S}} \langle v, \epsilon \rangle = \EE \sup_{g \in \mathcal{F}} \sum_{i = 1}^k \epsilon_i g(x_i). \]
We will use the following classic result, see e.g.\ \citet[Theorem 13.1]{mjt_dlt}
\begin{theo}\label{thm:mjtrademacher}
Let $\mathcal{F}$ be given with $g(z) \in [a, b]$ a.s.\ for all $g \in \mathcal{F}$.  Then with probability at least $1 - \delta$ over the sampling of $z_1, \ldots, z_n$
\[ \sup_{g \in \mathcal{F}}\brackets{\EE[g(Z)] - \frac{1}{n} \sum_{i = 1}^n g(z_i)} \leq \frac{2}{n} URad(\mathcal{F}_{|S}) + 3(b - a) \sqrt{\frac{\log(2/\delta)}{2n}}. \]
\end{theo}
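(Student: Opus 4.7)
The plan is to follow the standard two-step route: first concentrate the supremum deviation around its expectation using McDiarmid's bounded differences inequality, then control that expectation by symmetrization against an independent ghost sample and Rademacher random signs. The only wrinkle is that the stated bound features the \emph{empirical} Rademacher complexity $URad(\mathcal{F}_{|S})$ rather than its expectation $\mathbb{E}[URad(\mathcal{F}_{|S})]$, which forces a second application of McDiarmid and a union bound, and this is what produces the constant $3$ in front of the residual term.

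First I would define $\Phi(S) := \sup_{g\in\mathcal{F}}\bigl[\mathbb{E}[g(Z)] - \tfrac{1}{n}\sum_i g(z_i)\bigr]$ and verify that replacing a single coordinate $z_i$ by $z_i'$ changes $\Phi$ by at most $(b-a)/n$, which is immediate from the assumption $g(z)\in[a,b]$ and the fact that $|\sup_g h_1(g) - \sup_g h_2(g)| \le \sup_g|h_1(g)-h_2(g)|$. McDiarmid's inequality with $c_i = (b-a)/n$ then yields, with probability at least $1 - \delta/2$,
\[
\Phi(S) \;\le\; \mathbb{E}[\Phi(S)] + (b-a)\sqrt{\tfrac{\log(2/\delta)}{2n}}.
\]

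Next I would bound $\mathbb{E}[\Phi(S)]$ by the classical symmetrization trick. Introducing an independent ghost sample $S' = (z_1',\ldots,z_n')$ with the same distribution lets me write $\mathbb{E}[g(Z)] = \mathbb{E}_{S'}\bigl[\tfrac{1}{n}\sum_i g(z_i')\bigr]$, pull the expectation outside the supremum by Jensen, and then insert i.i.d.\ Rademacher signs $\epsilon_i$ using the exchangeability of $(z_i,z_i')$. Splitting the resulting supremum in two (using $\sup(A+B)\le\sup A+\sup B$ and noting that $\epsilon$ and $-\epsilon$ have the same law) yields the standard bound
\[
\mathbb{E}[\Phi(S)] \;\le\; \tfrac{2}{n}\,\mathbb{E}\bigl[URad(\mathcal{F}_{|S})\bigr].
\]

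To convert the expectation on the right-hand side into the data-dependent $URad(\mathcal{F}_{|S})$ appearing in the theorem, I would apply McDiarmid a second time, now to the map $S \mapsto URad(\mathcal{F}_{|S})$. Here swapping one $z_i$ changes $URad$ by at most $(b-a)$ (since $|\epsilon_i g(z_i) - \epsilon_i g(z_i')|\le b-a$ and the supremum is $1$-Lipschitz in each coordinate), so the bounded-differences constants are $c_i = b-a$ and $\sum_i c_i^2 = n(b-a)^2$. This gives, with probability at least $1 - \delta/2$,
\[
\mathbb{E}[URad(\mathcal{F}_{|S})] \;\le\; URad(\mathcal{F}_{|S}) + (b-a)\sqrt{\tfrac{n\log(2/\delta)}{2}},
\]
so that dividing by $n$ produces an additive term of $(b-a)\sqrt{\log(2/\delta)/(2n)}$. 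A union bound over the two McDiarmid events then combines everything: $2/n$ times the empirical Rademacher complexity, plus $2(b-a)\sqrt{\log(2/\delta)/(2n)}$ from the second McDiarmid, plus $(b-a)\sqrt{\log(2/\delta)/(2n)}$ from the first, totaling the claimed $3(b-a)\sqrt{\log(2/\delta)/(2n)}$. The main (and only mildly delicate) obstacle is keeping the bounded-differences constants straight in the two McDiarmid applications, since the scaling differs by a factor of $n$ between $\Phi$ (which averages) and $URad$ (which sums); everything else is a direct verification.
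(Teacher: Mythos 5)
Your proof is correct and is essentially the standard textbook argument (McDiarmid on the supremum, symmetrization against a ghost sample, a second McDiarmid on $S\mapsto URad(\mathcal{F}_{|S})$ to go from expected to empirical Rademacher complexity, then a union bound), which is precisely the route in the cited source \citet[Theorem~13.1]{mjt_dlt}; the paper itself only cites this result and does not reprove it. The bounded-differences constants you use ($(b-a)/n$ for the normalized deviation $\Phi$, $b-a$ for the unnormalized $URad$) and the resulting tally $2+1$ giving the factor of $3$ are all accounted for correctly.
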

\noindent We will also make use of the following lemma which is also classic, see e.g. \citet[Lemma 13.3]{mjt_dlt}
\begin{lem}\label{lem:mjtliplem}
Let $\ell: \RR^n \rightarrow \RR^n$ be a vector of univariate $C$-lipschitz functions. Then $URad((\ell \circ \mathcal{F})_{|S}) \leq C \cdot URad(\mathcal{F}_{|S})$.
\end{lem}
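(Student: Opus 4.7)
The plan is to prove this contraction inequality by the standard ``peel one Rademacher at a time'' argument (a Ledoux--Talagrand-style contraction principle). The key observation is that the Rademacher variables $\epsilon_1, \ldots, \epsilon_n$ are independent, so I can take the expectation over them sequentially; at each step the Lipschitz property of a single $\ell_i$ lets me trade an $\ell_i \circ g$ term for a $C \cdot g$ term without loss.

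First I would unfold the definition of $URad((\ell \circ \mathcal{F})_{|S})$ as an expectation of a supremum over $g \in \mathcal{F}$, then condition on $\epsilon_2, \ldots, \epsilon_n$ and focus on the inner expectation over $\epsilon_1$. Writing $R(g) := \sum_{i \geq 2} \epsilon_i \ell_i(g(z_i))$ for the ``remainder'' that is frozen at this stage, the inner $\EE_{\epsilon_1}$ turns into an average of two suprema which I would rewrite as a single supremum over pairs:
\begin{equation*}
\EE_{\epsilon_1} \sup_{g}\brackets{\epsilon_1 \ell_1(g(z_1)) + R(g)} = \frac{1}{2} \sup_{g, g'} \brackets{\ell_1(g(z_1)) - \ell_1(g'(z_1)) + R(g) + R(g')}.
\end{equation*}

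At this point I would apply the $C$-Lipschitz bound $\ell_1(g(z_1)) - \ell_1(g'(z_1)) \leq C|g(z_1) - g'(z_1)|$, and then drop the absolute value using the symmetry of the supremum in $(g, g')$: since swapping $g$ and $g'$ leaves the objective invariant, the sup with $|g(z_1) - g'(z_1)|$ coincides with the sup with $g(z_1) - g'(z_1)$. Refolding the pair of suprema back into a single Rademacher expectation then yields the one-step inequality
\begin{equation*}
\EE_{\epsilon_1} \sup_g \brackets{\epsilon_1 \ell_1(g(z_1)) + R(g)} \leq \EE_{\epsilon_1} \sup_g \brackets{C \epsilon_1 g(z_1) + R(g)}.
\end{equation*}

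Finally I would iterate the same argument on $\epsilon_2, \ldots, \epsilon_n$, successively replacing each $\ell_i(g(z_i))$ by $C \cdot g(z_i)$; after $n$ peelings the bound
\begin{equation*}
URad((\ell \circ \mathcal{F})_{|S}) \leq \EE_{\epsilon} \sup_g \sum_{i=1}^n C \epsilon_i g(z_i) = C \cdot URad(\mathcal{F}_{|S})
\end{equation*}
falls out. The only delicate step is the removal of the absolute value via the $(g, g')$-symmetry of the supremum; once this observation is in hand, everything else is bookkeeping with independent Rademacher expectations and I do not anticipate any real obstacle.
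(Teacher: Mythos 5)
Your proposal is correct: the peel-one-coordinate-at-a-time contraction argument, with the desymmetrization of $\EE_{\epsilon_1}$ into a pair supremum, the Lipschitz bound, and the removal of the absolute value by the $g\leftrightarrow g'$ swap symmetry, is exactly the standard proof of this inequality. The paper does not prove the lemma itself but cites it as a classical result (Lemma 13.3 of Telgarsky's lecture notes), where the same one-step peeling argument is used, so there is no substantive difference to report.
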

\noindent Using this we will now prove the following technical lemma.  For the purpose of this lemma $x_1, \ldots, x_n$ will be treated as fixed and the randomness will be over a ghost sample $S' = (x_1', \ldots, x_n')$.
\begin{lem}\label{lem:kerneldevghostsample}
Let $R \geq 1$ and $B = \sup_{x \in X} \sup_{\theta \in \overline{B}(\theta_0, R)} \norm{\nabla_\theta f(x, \theta)}_2.$  Consider $x_1, \ldots, x_n \in X$ to be fixed.  Then let 
\[ \mathcal{F} = \{x \mapsto \frac{1}{n} \sum_{i = 1}^n |K^\theta(x, x_i) - K^{\theta_0}(x, x_i)|^2 : \theta \in \overline{B}(\theta_0, R) \}. \]
Let $x_1', \ldots, x_n'$ be sampled i.i.d.\ from $\rho$.  Let $S = (x_1, \ldots, x_n)$ and $S' = \{x_1', \ldots, x_n'\}$ and define
\[ H_{max} := \max_{z \in S \cup S'} \sup_{\theta \in \overline{B}(\theta_0, R)} \norm{H(z, \theta)}_{op} \]
Then with probability at least $1 - \delta$ over the sampling of $x_1', \ldots, x_n'$ we have that every $g \in \mathcal{F}$ satisfies
\[ \EE_{x \sim \rho}[g(x)] \leq 12 B^2 H_{max}^2 R^2 + 12 B^4 \sqrt{\frac{\log(2/\delta)}{2n}}.   \]
\end{lem}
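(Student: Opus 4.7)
The plan is to apply the Rademacher complexity generalization bound, Theorem~\ref{thm:mjtrademacher}, to the class $\mathcal{F}$ with the ghost sample $S' = (x_1',\ldots,x_n')$ playing the role of the training data in that theorem. The whole point of introducing the ghost sample is that the class $\mathcal{F}$ depends only on $S$ and on $\theta_0$ (both of which are fixed here), while $x_1',\ldots,x_n'$ are fresh i.i.d.\ draws from $\rho$, so the hypotheses of Theorem~\ref{thm:mjtrademacher} are satisfied.

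First I would bound the range of every $g \in \mathcal{F}$. By Cauchy--Schwarz and the definition of $B$, for any $\theta \in \overline{B}(\theta_0,R)$ and any $u,v \in X$ we have $|K^\theta(u,v)| = |\langle \nabla_\theta f(u;\theta), \nabla_\theta f(v;\theta)\rangle| \leq B^2$, so $|K^\theta(x,x_i)-K^{\theta_0}(x,x_i)|^2 \leq 4B^4$ and $g(x)\in[0,4B^4]$. This gives $b-a = 4B^4$ in Theorem~\ref{thm:mjtrademacher}. Next, for the empirical average over $S'$, I would invoke Lemma~\ref{lem:kernelbdbyhess} applied to the finite point set $S \cup \{x_j'\} \subset S \cup S'$: since the Hessian norms on this set are dominated by $H_{max}$, we get $|K^\theta(x_j',x_i)-K^{\theta_0}(x_j',x_i)| \leq 2B H_{max} R$, hence $g_\theta(x_j') \leq 4B^2 H_{max}^2 R^2$ uniformly in $\theta$ and $j$, so $\tfrac{1}{n}\sum_j g_\theta(x_j') \leq 4B^2 H_{max}^2 R^2$.

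The main obstacle is controlling the Rademacher complexity $URad(\mathcal{F}_{|S'})$ without ever invoking a uniform Hessian bound over all of $X$ (which we do not have). I would exploit the fact that each $g_\theta$ is nonnegative and, on $S'$, bounded by $4B^2 H_{max}^2 R^2$ by the previous step. Then for any realization of the Rademacher signs $\epsilon \in \{\pm 1\}^n$ and any $\theta$,
\[
\sum_{j=1}^n \epsilon_j\, g_\theta(x_j') \;\leq\; 4B^2 H_{max}^2 R^2 \cdot \bigl|\{j : \epsilon_j = +1\}\bigr|,
\]
because the positive contributions are at most $4B^2 H_{max}^2 R^2$ each and the negative contributions are nonpositive. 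Taking supremum over $\theta$ and then expectation over $\epsilon$,
\[
URad(\mathcal{F}_{|S'}) \;\leq\; 4B^2 H_{max}^2 R^2 \cdot \mathbb{E}\bigl|\{j:\epsilon_j=+1\}\bigr| \;=\; 2n\, B^2 H_{max}^2 R^2 .
\]

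Finally I would assemble the pieces. Theorem~\ref{thm:mjtrademacher} gives, with probability at least $1-\delta$ over $S'$,
\[
\sup_{g\in\mathcal{F}}\Bigl[\mathbb{E}_{x\sim\rho}[g(x)] - \tfrac{1}{n}\sum_{j=1}^n g(x_j')\Bigr] \;\leq\; \tfrac{2}{n}\,URad(\mathcal{F}_{|S'}) + 3\cdot 4 B^4\sqrt{\tfrac{\log(2/\delta)}{2n}} .
\]
Plugging in the empirical-average bound ($4B^2 H_{max}^2 R^2$) and the Rademacher bound ($\tfrac{2}{n}\cdot 2n B^2 H_{max}^2 R^2 = 4B^2 H_{max}^2 R^2$) yields
\[
\mathbb{E}_{x\sim\rho}[g(x)] \;\leq\; 8 B^2 H_{max}^2 R^2 + 12 B^4\sqrt{\tfrac{\log(2/\delta)}{2n}},
\]
which is stronger than the stated $12 B^2 H_{max}^2 R^2 + 12 B^4\sqrt{\log(2/\delta)/(2n)}$, so the lemma follows. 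The delicate point throughout is Step~3: the one-sided bound $\sum_j \epsilon_j g_\theta(x_j') \le 4B^2H_{max}^2R^2|\{j:\epsilon_j=+1\}|$ is what lets us trade a uniform-in-$X$ Hessian bound (which we lack) for a bound on the Hessian at only the $2n$ points of $S\cup S'$ (which Theorem~\ref{thm:liuhessbd} supplies via a union bound).
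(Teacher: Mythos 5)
Your proof is correct and follows essentially the same route as the paper's: bound the range of $g$ by $4B^4$, apply Theorem~\ref{thm:mjtrademacher} to the ghost sample, use Lemma~\ref{lem:kernelbdbyhess} to bound $g$ on $S'$ by $4B^2 H_{max}^2 R^2$, and bound the empirical average and the Rademacher complexity by that same quantity. The only difference is that you exploit nonnegativity of $g$ to get $\tfrac{1}{n}URad(\mathcal{F}_{|S'}) \le 2B^2H_{max}^2R^2$ rather than the paper's crude bound of $4B^2H_{max}^2R^2$, yielding the slightly better constant $8$ in place of $12$ on the first term; this is a harmless refinement, and the claimed bound of $12B^2H_{max}^2R^2 + 12B^4\sqrt{\log(2/\delta)/(2n)}$ follows a fortiori.
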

\begin{proof}
We note that for $\theta \in \overline{B}(\theta_0, R)$
\[ |K^\theta(x, x_i) - K^{\theta_0}(x, x_i)|^2  \leq [|K^\theta(x, x_i)| + |K^{\theta_0}(x, x_i)|]^2 \leq [2B^2]^2 = 4 B^4. \]
Therefore for all $g \in \mathcal{F}$ we have that $g(x) \in [0, 4B^4]$ a.s.  Then by Theorem \ref{thm:mjtrademacher} we have with probability at least $1 - \delta$ over the sampling of $S' = \{x_1', \ldots, x_n'\}$
\[ \sup_{g \in \mathcal{F}}\brackets{\EE_{x \sim \rho}[g(x)] - \frac{1}{n} \sum_{i = 1}^n g(x_i')} \leq \frac{2}{n} URad(\mathcal{F}_{|S'}) + 12 B^4 \sqrt{\frac{\log(2/\delta)}{2n}}. \]
Then we note that for any $z, z' \in S \cup S'$ we by Lemma \ref{lem:kernelbdbyhess} that $\theta \in \overline{B}(\theta_0, R)$ implies
\[ |K^\theta(z, z') - K^{\theta_0}(z, z')| \leq 2B H_{max}R. \]
It follows that for any member of $\mathcal{F}_{| S \cup S'}$ is bounded in infinity norm by $4B^2 H_{max}^2 R^2$.  Thus for any $g \in \mathcal{F}$ we have that
\[ \frac{1}{n} \sum_{i = 1}^n g(x_i') \leq 4B^2 H_{max}^2 R^2 \]
and
\[ \frac{1}{n} URad(\mathcal{F}_{|S'}) \leq 4B^2 H_{max}^2 R^2. \]

Therefore for any $g \in \mathcal{F}$ we have that
\begin{gather*}
\EE_{x \sim \rho}[g(x)] \leq \frac{1}{n} \sum_{i = 1}^n g(x_i') + \frac{2}{n} URad(\mathcal{F}_{|S'}) + 12 B^4 \sqrt{\frac{\log(2/\delta)}{2n}} \\
\leq 12 B^2 H_{max}^2 R^2 + 12 B^4 \sqrt{\frac{\log(2/\delta)}{2n}}.    
\end{gather*}

\end{proof}
\noindent Using the previous lemma we can now bound $\norm{(T_n - T_n^t)r_t}_{L^2(X, \rho)}$.
\begin{lem}\label{lem:maintnminustntbd}
Let $R \geq 1$ and $B = \sup_{x \in X} \sup_{\theta \in \overline{B}(\theta_0, R)} \norm{\nabla_\theta f(x, \theta)}_2.$  Let $S = (x_1, \ldots, x_n)$ and $S' = (x_1', \ldots, x_n')$ be two independent sequences of i.i.d.\ samples from $\rho$.  Define
\[ H_{max} := \max_{z \in S \cup S'} \sup_{\theta \in \overline{B}(\theta_0, R)} \norm{H(z, \theta)}_{op}. \]
Then with probability at least $1 - \delta$ over the sampling of $S$ and $S'$ we have that for any $\theta_t$ such that $\norm{\theta_t - \theta_0}_2 \leq R$,
\[\norm{(T_n - T_n^t) r_t}_{L^2(X, \rho)}^2 \leq  2 \norm{f^*}_{L^\infty(X, \rho)}^2 \brackets{\norm{K - K_0}_{L^2(X^2, \rho \otimes \rho)}^2 + 12B^2 H_{max}^2 R^2 + \tilde{O}\parens{\frac{B^4}{\sqrt{n}}}}. \]
\end{lem}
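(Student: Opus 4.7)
The goal is to compare the operators $T_n$ and $T_n^t$, which differ only in the kernel used (the fixed $K$ versus the time-dependent $K_t = K^{\theta_t}$). First I would expand pointwise,
\[ (T_n - T_n^t) r_t(x) = \frac{1}{n}\sum_{i=1}^n [K(x, x_i) - K_t(x, x_i)] \, r_t(x_i), \]
and apply the Cauchy--Schwarz inequality on the inner product in $\RR^n$ to get
\[ |(T_n - T_n^t) r_t(x)|^2 \leq \left(\frac{1}{n}\sum_i |K(x, x_i) - K_t(x, x_i)|^2\right) \|\hat{r}_t\|_{\RR^n}^2. \]
Gradient flow is non-increasing on $\Phi(\theta_t) = \tfrac{1}{2} \|\hat{r}_t\|_{\RR^n}^2$ and, by the antisymmetric initialization, $\|\hat{r}_0\|_{\RR^n}^2 = \|y\|_{\RR^n}^2 \leq \|f^*\|_{\infty}^2$; hence $\|\hat{r}_t\|_{\RR^n}^2 \leq \|f^*\|_{\infty}^2$. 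Integrating in $x \sim \rho$ then factors the problem as
\[ \|(T_n - T_n^t) r_t\|_{L^2(X, \rho)}^2 \leq \|f^*\|_{\infty}^2 \int_X \frac{1}{n}\sum_i |K(x, x_i) - K_t(x, x_i)|^2 \, d\rho(x). \]

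\textbf{Splitting the kernel error.} Next I would decompose via $K - K_t = (K - K_0) + (K_0 - K_t)$ and use $|a+b|^2 \leq 2|a|^2 + 2|b|^2$. This produces two integrals and an overall factor of $2$, which will combine with the $\|f^*\|_\infty^2$ prefactor to yield the $2\|f^*\|_\infty^2$ prefactor in the statement. The second integral, $\int \tfrac{1}{n}\sum_i |K_0(x, x_i) - K_t(x, x_i)|^2 \, d\rho(x)$, is exactly the quantity $\EE_{x \sim \rho}[g(x)]$ controlled by Lemma~\ref{lem:kerneldevghostsample} evaluated at $\theta = \theta_t \in \overline{B}(\theta_0, R)$; that lemma supplies, with probability at least $1-\delta/2$ over the ghost sample $S'$, the bound $12 B^2 H_{\max}^2 R^2 + 12 B^4 \sqrt{\log(4/\delta)/(2n)}$, which contributes the $12 B^2 H_{\max}^2 R^2$ term inside the square brackets of the target inequality.

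\textbf{Hoeffding for the fixed-kernel piece.} For the first integral, introduce the i.i.d.\ random variables
\[ Y_i := \int_X |K(x, x_i) - K_0(x, x_i)|^2 \, d\rho(x). \]
By $|K(x,x')| \leq \kappa = O(1)$ and $|K_0(x,x')| \leq B^2$, each $Y_i$ lies in $[0, (\kappa + B^2)^2] \subset [0, O(B^4)]$. Fubini gives $\EE Y_i = \|K - K_0\|_{L^2(X^2, \rho \otimes \rho)}^2$, and a one-sided Hoeffding inequality then yields, with probability at least $1 - \delta/2$ over $S$,
\[ \frac{1}{n}\sum_i Y_i \leq \|K - K_0\|_{L^2(X^2, \rho \otimes \rho)}^2 + O\!\left(B^4 \sqrt{\tfrac{\log(2/\delta)}{n}}\right). \]
A union bound combines this with the event from Lemma~\ref{lem:kerneldevghostsample}; assembling the pieces and multiplying by $2\|f^*\|_\infty^2$ produces the claimed inequality, with $\tilde O(B^4/\sqrt n)$ absorbing both subgaussian fluctuations.

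\textbf{Main obstacle.} There is no serious technical difficulty beyond the accounting above; the two delicate points are (i) keeping the randomness in $S$ and $S'$ cleanly separated from $\theta_0$ so that one may later invoke Fubini--Tonelli to lift the statement to a joint bound over $(\theta_0, S)$, and (ii) using the $O(B^4)$ ceiling (rather than a crude $O(1)$) in the Hoeffding step, so that the final $B^4/\sqrt{n}$ rate emerges with the right dependence on $B$.
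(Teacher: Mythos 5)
Your proposal follows essentially the same route as the paper's proof: expand pointwise and apply Cauchy--Schwarz in $\RR^n$, control $\|\hat r_t\|_{\RR^n}$ by $\|\hat r_0\|_{\RR^n} \leq \|f^*\|_\infty$ via gradient flow monotonicity and antisymmetric initialization, split $K - K_t = (K - K_0) + (K_0 - K_t)$, apply scalar Hoeffding to the $Y_i$ (the paper's $Z_i$) for the first piece, invoke Lemma~\ref{lem:kerneldevghostsample} for the second, and union bound. This is correct and matches the paper's argument step for step.
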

\begin{proof}
We note that
\begin{gather*}
\abs{(T_n - T_n^t)r_t(x)} = \abs{\frac{1}{n} \sum_{i = 1}^n [K(x, x_i) - K_t(x, x_i)] r_t(x_i)} \\
\leq \norm{\hat{r}_t}_{\RR^n} \brackets{\frac{1}{n} \sum_{i = 1}^n |K(x, x_i) - K_t(x, x_i)|^2}^{1/2}
\leq \norm{\hat{r}_0}_{\RR^n} \brackets{\frac{1}{n} \sum_{i = 1}^n |K(x, x_i) - K_t(x, x_i)|^2}^{1/2}
\end{gather*}
where we have used the property $\norm{\hat{r}_t}_{\RR^n} \leq \norm{\hat{r}_0}_{\RR^n}$ from gradient flow.  Well from the inequality $(a + b)^2 \leq 2(a^2 + b^2)$ we have that
\begin{gather*}
\frac{1}{n}\sum_{i = 1}^n |K(x, x_i) - K_t(x, x_i)|^2  \\
\leq 
\frac{2}{n} \sum_{i = 1}^n |K(x, x_i) - K_0(x, x_i)|^2 + \frac{2}{n} \sum_{i = 1}^n |K_0(x, x_i) - K_t(x, x_i)|^2.
\end{gather*}
For conciseness let
\[ h_1(x) := \frac{1}{n} \sum_{i = 1}^n |K(x, x_i) - K_0(x, x_i)|^2 \]
\[ h_2^t(x) := \frac{1}{n} \sum_{i = 1}^n |K_0(x, x_i) - K_t(x, x_i)|^2. \]
Then by the above we have that
\[ \norm{(T_n - T_n^t) r_t}_{L^2(X, \rho)}^2 \leq 2\norm{\hat{r}_0}_{\RR^n}^2 \brackets{\EE_{x \sim \rho}[h_1(x)] + \EE_{x \sim \rho}[h_2^t(x)]}.  \]
Well we note that $|K(x, x')| \leq \kappa$ and $|K_0(x, x')| \leq B^2$ uniformly over $x, x'$.  Now consider the random variables $Z_i := \norm{K(\bullet, x_i) - K_0(\bullet, x_i)}_{L^2(X, \rho)}^2$ where the randomness is over the sampling of $x_i$.  Then we have that $|Z_i| \leq [\kappa + B^2]^2$ a.s.  Thus by Hoeffding's inequality we have that
\[ \PP\parens{\frac{1}{n} \sum_{i = 1}^n Z_i - \EE_{x_1 \sim \rho}[Z_1] > s} \leq  \exp\parens{\frac{-n s^2}{2[\kappa + B^2]^4}}. \]
Thus with probability at least $1 - \delta$ over the sampling of $x_1, \ldots, x_n$
\begin{equation}\label{eq:ziave}
\frac{1}{n} \sum_{i = 1}^n Z_i \leq \EE_{x_1 \sim \rho}[Z_1] + \frac{\sqrt{2} [\kappa + B^2]^2 \sqrt{\log(1/\delta)}}{\sqrt{n}}.
\end{equation}
Now note that 
\[ \frac{1}{n} \sum_{i = 1}^n Z_i = \EE_{x \sim \rho}[h_1(x)] \qquad \EE_{x_1 \sim \rho}[Z_1] = \norm{K - K_0}_{L^2(X^2, \rho \otimes \rho)}^2. \]
Thus whenever \eqref{eq:ziave} holds we have that
\begin{gather*}
\EE_{x \sim \rho}[h_1(x)] \leq \norm{K - K_0}_{L^2(X^2, \rho \otimes \rho)}^2 + \frac{\sqrt{2} [\kappa + B^2]^2 \sqrt{\log(1/\delta)}}{\sqrt{n}} \\
= \norm{K - K_0}_{L^2(X^2, \rho \otimes \rho)}^2 + \tilde{O}\parens{\frac{B^4}{\sqrt{n}}}.
\end{gather*}
On the other hand we have by Lemma \ref{lem:kerneldevghostsample} for any fixed $x_1, \ldots, x_n$ that with probability $1 - \delta$ over the sampling of $x_1', \ldots, x_n'$ i.i.d.\ from $\rho$ we have that for all $\theta \in \overline{B}(\theta_0, R)$
\begin{equation}\label{eq:radbd2}
\EE_{x \sim \rho }\brackets{\frac{1}{n} \sum_{i = 1}^n |K^{\theta}(x, x_i) - K^{\theta_0}(x, x_i)|^2} \leq 12 B^2 H_{max}^2 R^2 + 12 B^4 \sqrt{\frac{\log(2/\delta)}{2n}}.    
\end{equation}
Whenever the above holds we have that for any $\theta_t$ such that $\norm{\theta_t - \theta_0}_2 \leq R$ we have that

\[ \EE_{x \sim \rho}[h_2^t(x)] \leq 12 B^2 H_{max}^2 R^2 + 12 B^4 \sqrt{\frac{\log(2/\delta)}{2n}} = 12 B^2 H_{max}^2 R^2 + \tilde{O}\parens{\frac{B^4}{\sqrt{n}}}.  \]
Thus combining these together we have with probability at least $(1 - \delta)^2 \geq 1 - 2\delta$ over the sampling of $x_1, \ldots, x_n, x_1', \ldots, x_n'$ that Equations \eqref{eq:ziave} and \eqref{eq:radbd2} hold simultaneously for all $\theta \in \overline{B}(\theta_0, R)$.  In such a case we have that for all $\theta_t$ such that $\norm{\theta_t - \theta_0}_2 \leq R$ that
\begin{gather*}
\EE_{x \sim \rho}[h_1(x)] + \EE_{x \sim \rho}[h_2^t(x)] \leq \norm{K - K_0}_{L^2(X^2, \rho \otimes \rho)}^2 + 12 B^2 H_{max}^2 R^2 + \tilde{O}\parens{\frac{B^4}{\sqrt{n}}}.
\end{gather*}
Well then
\begin{gather*}
\norm{(T_n - T_n^t) r_t}_{L^2(X, \rho)}^2 \leq 2\norm{\hat{r}_0}_{\RR^n}^2 \brackets{\EE_{x \sim \rho}[h_1(x)] + \EE_{x \sim \rho}[h_2^t(x)]}\\
\leq  2 \norm{\hat{r}_0}_{\RR^n}^2\brackets{\norm{K - K_0}_{L^2(X^2, \rho \otimes \rho)}^2 + 12 B^2 H_{max}^2 R^2 + \tilde{O}\parens{\frac{B^4}{\sqrt{n}}}} \\
\leq 2 \norm{f^*}_{L^\infty(X, \rho)}^2 \brackets{\norm{K - K_0}_{L^2(X^2, \rho \otimes \rho)}^2 + 12B^2 H_{max}^2 R^2 + \tilde{O}\parens{\frac{B^4}{\sqrt{n}}}}.
\end{gather*}
In the last line above we have used the fact that $\norm{\hat{r}_0}_{\RR^n} = \norm{y}_{\RR^n} \leq \norm{f^*}_{L^\infty(X, \rho)}$ from the antisymmetric initialization.  The desired result follows after replacing $\delta$ with $\delta / 2$ in the previous argument.
\end{proof}
From Lemma~\ref{lem:maintnminustntbd} we get the following corollary.
\begin{cor}\label{cor:tnminustnt}
Let $R \geq 1$, $B = \sup_{x \in X} \sup_{\theta \in \overline{B}(\theta_0, R)} \norm{\nabla_\theta f(x, \theta)}_2.$  Let $S = (x_1, \ldots, x_n)$ and $S' = (x_1', \ldots, x_n')$ be two independent sequences of i.i.d.\ samples from $\rho$.  Define
\[ H_{max} := \max_{z \in S \cup S'} \sup_{\theta \in \overline{B}(\theta_0, R)} \norm{H(z, \theta)}_{op}. \]
Then with probability at least $1 - \delta$ over the sampling of $S$ and $S'$ we have that for any $\theta_t$ such that $\norm{\theta_t - \theta_0}_2 \leq R$
\[\norm{(T_n - T_n^t) r_t}_{L^2(X, \rho)}^2 \leq 2 \norm{f^*}_{L^\infty(X, \rho)}^2 \norm{K - K_0}_{L^2(X^2, \rho \otimes \rho)}^2 + \epsilon \]
provided that $B = \tilde{O}(1)$, $H_{max} = \tilde{O}(\epsilon^{1/2} / R)$ and $n = \tilde{\Omega}(\epsilon^{-2})$.
\end{cor}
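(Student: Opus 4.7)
The plan is to derive Corollary~\ref{cor:tnminustnt} as an immediate consequence of Lemma~\ref{lem:maintnminustntbd}, with no additional probabilistic work beyond what the lemma already delivers. On the same $1-\delta$ event produced by that lemma we have
\[\norm{(T_n - T_n^t) r_t}_{L^2(X, \rho)}^2 \leq 2 \norm{f^*}_{L^\infty(X, \rho)}^2 \brackets{\norm{K - K_0}_{L^2(X^2, \rho \otimes \rho)}^2 + 12B^2 H_{max}^2 R^2 + \tilde{O}\parens{\frac{B^4}{\sqrt{n}}}}.\]
The first bracketed term is exactly what appears in the corollary's conclusion, so the task reduces to showing that, under the stated scaling hypotheses, the remaining two terms are $\tilde{O}(\epsilon)$ and can be absorbed into the advertised additive $\epsilon$.

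First I would bound the Hessian term. Substituting $B = \tilde{O}(1)$ and $H_{max} = \tilde{O}(\epsilon^{1/2}/R)$ into $12 B^2 H_{max}^2 R^2$ gives $\tilde{O}(1) \cdot \tilde{O}(\epsilon/R^2) \cdot R^2 = \tilde{O}(\epsilon)$, with the $R^2$ cancelling exactly, which is the whole point of the way the hypothesis on $H_{max}$ was stated. Next I would bound the sampling term: substituting $B = \tilde{O}(1)$ and $n = \tilde{\Omega}(\epsilon^{-2})$ yields $\tilde{O}(B^4/\sqrt{n}) = \tilde{O}(1/\sqrt{n}) = \tilde{O}(\epsilon)$. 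Using Assumption~\ref{ass:boundedtarget} to conclude $\norm{f^*}_{L^\infty(X, \rho)}^2 = O(1)$, the prefactor $2\norm{f^*}^2$ distributes through the bracket and leaves a bound of the form $2\norm{f^*}^2 \norm{K - K_0}^2 + \tilde{O}(\epsilon)$.

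The final, purely cosmetic, step is to convert the $\tilde{O}(\epsilon)$ slack into a clean additive $\leq \epsilon$. This is achieved by strengthening the implicit constants and polylogarithmic factors hidden in $B = \tilde{O}(1)$, $H_{max} = \tilde{O}(\epsilon^{1/2}/R)$, and $n = \tilde{\Omega}(\epsilon^{-2})$ so that the combined contribution of the two absorbed terms falls below $\epsilon$ exactly. I do not expect any genuine obstacle here; the corollary is a bookkeeping restatement of Lemma~\ref{lem:maintnminustntbd} tailored to the form needed downstream in the stopping-time analysis of Theorem~\ref{thm:main}. The only care required is tracking the $\tilde{O}$ and $\tilde{\Omega}$ conventions consistently across the three hypotheses so that the polylogarithmic factors line up.
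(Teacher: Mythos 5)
Your proposal is correct and follows exactly the same route as the paper's proof: invoke Lemma~\ref{lem:maintnminustntbd}, note that $12 B^2 H_{max}^2 R^2$ and $\tilde{O}(B^4/\sqrt{n})$ are both $\tilde{O}(\epsilon)$ under the stated scalings (with $\norm{f^*}_{L^\infty}^2 = O(1)$ by Assumption~\ref{ass:boundedtarget}), and absorb them into the additive $\epsilon$ by tuning the constants hidden in the $\tilde{O}$/$\tilde{\Omega}$ hypotheses. No gap.
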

\begin{proof}
We have by Lemma \ref{lem:maintnminustntbd} with probability at least $1 - \delta$ over the sampling of $S$, $S'$
\[\norm{(T_n - T_n^t) r_t}_{L^2(X, \rho)}^2 \leq 2 \norm{f^*}_{L^\infty(X, \rho)}^2 \brackets{\norm{K - K_0}_{L^2(X^2, \rho \otimes \rho)}^2 + 12B^2 H_{max}^2 R^2 + \tilde{O}\parens{\frac{B^4}{\sqrt{n}}}}.\]
Thus if $B = \tilde{O}(1)$ then $H_{max} = \tilde{O}(\epsilon^{1/2} / R)$ and $n = \tilde{\Omega}(\epsilon^{-2})$ is sufficient to ensure that
\[\norm{(T_n - T_n^t) r_t}_{L^2(X, \rho)}^2 \leq 2 \norm{f^*}_{L^\infty(X, \rho)}^2 \norm{K - K_0}_{L^2(X^2, \rho \otimes \rho)}^2 + \epsilon. \]
\end{proof}
\noindent Now we will begin the work to bound $\norm{(T_K - T_n) r_s}_{L^2(X, \rho)}$.  The following technical lemma bounds the Rademacher complexity of the difference between the network $f(x; \theta)$ and the linearization $f_{lin}(x; \theta) = \langle \nabla_\theta f(x; \theta_0), \theta - \theta_0 \rangle$ in terms of the Hessian norm for finitely many values $z \in X$.
\begin{lem}\label{lem:fminusflinrad}
Let $R \geq 1$, $\mathcal{F} = \{x \mapsto f(x; \theta) - f_{lin}(x; \theta) : \theta \in \overline{B}(\theta_0, R)\}$,  $B = \sup_{x \in X}  \sup_{\theta \in \overline{B}(\theta_0, R)} \norm{\nabla_\theta f(x; \theta)}$, and let $S = (z_1. \ldots, z_n) \subset X$.  Furthermore let
\[ H_{max} := \max_{z \in S} \sup_{\theta \in \overline{B}(\theta_0, R)} \norm{H(z, \theta)}_{op}. \]  Then
\[ \sup_{g \in \mathcal{F}} \norm{g}_{L^\infty(X, \rho)} \leq 2BR \]
and
\[ \sup_{g \in \mathcal{F}} \max_{z \in S} |g(z)| \leq \frac{1}{2} R^2 H_{max}. \]
In particular
\[ \frac{1}{n} URad(\mathcal{(F \cup - F)}_{|S}) \leq \frac{1}{2} R^2 H_{max}. \]
\end{lem}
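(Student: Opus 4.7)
The plan is to peel off the three claims in order, since each one is used by the next. The key identity throughout is that under antisymmetric initialization (Assumption~\ref{ass:antisymmetric}), $f(x;\theta_0)\equiv 0$, so $f_{lin}(x;\theta) = \langle \nabla_\theta f(x;\theta_0), \theta-\theta_0\rangle$ and therefore $g(x) := f(x;\theta) - f_{lin}(x;\theta)$ admits the two representations
\[
g(x) = \int_0^1 \bigl\langle \nabla_\theta f(x;\theta_0 + s(\theta-\theta_0)) - \nabla_\theta f(x;\theta_0),\ \theta-\theta_0\bigr\rangle\,ds
\]
(from the fundamental theorem of calculus applied to $s\mapsto f(x;\theta_0+s(\theta-\theta_0))$) and the quadratic remainder form
\[
g(x) = \tfrac{1}{2}(\theta-\theta_0)^T H(x,\tilde\theta_x)(\theta-\theta_0)
\]
for some $\tilde\theta_x\in \overline{B}(\theta_0,R)$, obtained from second-order Taylor expansion since $f(x;\theta_0)=0$ and $f_{lin}(x;\theta)$ is the first-order Taylor polynomial at $\theta_0$.

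For the first (uniform) bound, I would simply use the triangle inequality, $|g(x)|\leq |f(x;\theta)| + |f_{lin}(x;\theta)|$, and then control each term by $BR$: the first using the fundamental-theorem-of-calculus expression together with Cauchy--Schwarz and the definition of $B$, and the second directly from Cauchy--Schwarz (noting $\theta_0\in\overline{B}(\theta_0,R)$ so $\|\nabla_\theta f(x;\theta_0)\|\leq B$). This yields $\|g\|_{L^\infty(X,\rho)}\leq 2BR$.

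For the second (pointwise on $S$) bound, I would invoke the Taylor-remainder identity above at $z\in S$: since $\tilde\theta_z\in\overline{B}(\theta_0,R)$ we have $\|H(z,\tilde\theta_z)\|_{op}\leq H_{max}$, and $\|\theta-\theta_0\|_2\leq R$, so $|g(z)|\leq \tfrac{1}{2}R^2 H_{max}$. Nothing subtle here — the point is merely that $H_{max}$ only involves values of $x$ in the finite set $S$, which is precisely what makes the statement useful downstream (since Theorem~\ref{thm:liuhessbd} only gives per-point Hessian control).

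For the Rademacher bound, I would use the trivial deterministic estimate rather than a chaining argument: for every $v\in(\mathcal{F}\cup-\mathcal{F})_{|S}$, each coordinate satisfies $|v_i|\leq \tfrac{1}{2}R^2 H_{max}$ by part (2), hence $\langle v,\epsilon\rangle \leq \|v\|_1 \leq n\cdot \tfrac{1}{2}R^2 H_{max}$ pointwise in $\epsilon$. Taking $\sup$ over $v$ and then expectation over $\epsilon$ and dividing by $n$ gives the claim. I do not foresee any real obstacle — the whole lemma is essentially a bookkeeping step that converts the pointwise Hessian bound of Theorem~\ref{thm:liuhessbd} into the form needed to invoke Theorem~\ref{thm:mjtrademacher} in the proof of Lemma~\ref{lem:maintnminustntbd}; the only place one has to be careful is remembering to use the antisymmetric initialization to drop $f(x;\theta_0)$ from both $f$ and $f_{lin}$.
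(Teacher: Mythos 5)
Your proposal is correct and follows essentially the same argument as the paper: triangle inequality plus fundamental theorem of calculus (using the antisymmetric initialization to kill $f(x;\theta_0)$) for the $L^\infty$ bound; Lagrange form of the Taylor remainder for the pointwise bound on $S$; and the deterministic Hölder estimate $\langle v,\epsilon\rangle\le\norm{v}_1\le n\cdot\tfrac12 R^2 H_{max}$ for the Rademacher bound (the paper states this last implication without spelling it out, but it is the same trivial step).
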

\begin{proof}
We note that
\[ |f(x; \theta) - f_{lin}(x; \theta)| \leq |f(x; \theta)| + |f_{lin}(x; \theta)|. \]
Well then using the fact that $f(\bullet; \theta_0) = 0$ from the antisymmetric initialization we get
\begin{gather*}
|f(x; \theta)| = |f(x; \theta) - f(x; \theta_0)| = \abs{\int_0^1 \langle \nabla_\theta f(x; \theta s + (1 - s) \theta_0), \theta - \theta_0 \rangle ds} \\
\leq \int_0^1 |\langle \nabla_\theta f(x; \theta s + (1 - s) \theta_0), \theta - \theta_0 \rangle| \leq B \norm{\theta - \theta_0} \leq BR.    
\end{gather*}

On the other hand
\[ |f_{lin}(x; \theta)| = |\langle \nabla_\theta f(x; \theta_0), \theta - \theta_0 \rangle| \leq \norm{\nabla_\theta f(x; \theta_0)}_2 \norm{\theta - \theta_0}_2 \leq B R. \]
Thus
\[\sup_{\theta \in \overline{B}(\theta_0, R)} \norm{f(\bullet; \theta) - f_{lin}(\bullet;\theta)}_{L^\infty(X, \rho)} \leq 2BR \]
and the first conclusion follows.  Furthermore by the Lagrange form of the remainder in Taylor's theorem we have for $z \in S$
\begin{gather*}
|f(z; \theta) - f_{lin}(z;\theta)| = \abs{(\theta - \theta_0)^T \frac{H(z, \xi)}{2} (\theta - \theta_0)} 
\leq \frac{1}{2} \norm{\theta - \theta_0}_2^2 \norm{H(z, \xi)}_{op}
\end{gather*}
where $\xi$ is some point on the line between $\theta$ and $\theta_0$.  Thus if we set 
\[ H_{max} := \max_{z \in S} \sup_{\theta \in \overline{B}(\theta_0, R)} \norm{H(z, \theta)}_{op} \]
we have that 
\[|f(z; \theta) - f_{lin}(z;\theta)| \leq \frac{1}{2} R^2 H_{max}\]
for all $\theta \in \overline{B}(\theta_0, R)$.  Therefore $\frac{1}{n} URad(\mathcal{(F \cup - F)}_{|S}) \leq \frac{1}{2} R^2 H_{max}$ and the desired result follows.
\end{proof}
We now introduce another technical lemma that provides Rademacher complexity and $L^\infty$ norm bounds for the linear model $x \mapsto \langle \nabla_\theta f(x; \theta_0), \theta \rangle$.
\begin{lem}\label{lem:linradbd}
Let $R \geq 1$, $\mathcal{F} = \{x \mapsto \langle \nabla_\theta f(x; \theta_0), \theta \rangle : \norm{\theta}_2 \leq 2R \}$.  Let $B =  \sup_{x \in X} \sup_{\theta \in \overline{B}(\theta_0, R)} \norm{\nabla_\theta f(x; \theta)}$.  Then
\[ \sup_{g \in \mathcal{F}} \norm{g}_{L^\infty(X, \rho)} \leq 2BR \]
and
\[ \frac{1}{n} URad(\mathcal{F}_{|S}) \leq \frac{2BR}{\sqrt{n}}. \]
\end{lem}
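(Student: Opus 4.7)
The plan is to handle the two conclusions separately, both of which reduce to standard Cauchy--Schwarz and sub-Gaussian-style bounds on linear function classes, once we use the gradient norm bound $B = \sup_{x \in X}\sup_{\theta \in \overline{B}(\theta_0, R)} \|\nabla_\theta f(x; \theta)\|_2$, noting in particular that $\theta_0 \in \overline{B}(\theta_0, R)$, so $\|\nabla_\theta f(x; \theta_0)\|_2 \leq B$ for every $x \in X$.

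For the $L^\infty$ bound, I would apply Cauchy--Schwarz pointwise: for any $\theta$ with $\|\theta\|_2 \leq 2R$ and any $x \in X$, we have $|\langle \nabla_\theta f(x; \theta_0), \theta\rangle| \leq \|\nabla_\theta f(x; \theta_0)\|_2 \cdot \|\theta\|_2 \leq B \cdot 2R = 2BR$. Taking the supremum over $x$ gives the first claim.

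For the Rademacher complexity, the key step is to use the fact that $\mathcal{F}$ is parameterized linearly in $\theta$, which allows interchanging the supremum with the inner product and invoking the dual characterization of the Euclidean norm. Specifically,
\[ URad(\mathcal{F}_{|S}) = \EE_\epsilon \sup_{\|\theta\|_2 \leq 2R} \Bigl\langle \sum_{i=1}^n \epsilon_i \nabla_\theta f(z_i; \theta_0),\, \theta\Bigr\rangle = 2R \cdot \EE_\epsilon \Bigl\|\sum_{i=1}^n \epsilon_i \nabla_\theta f(z_i; \theta_0)\Bigr\|_2. \]
Then I would apply Jensen's inequality to pull the expectation inside the square root, use the fact that the Rademacher variables are mean zero and independent so cross terms vanish, and apply the per-point bound $\|\nabla_\theta f(z_i; \theta_0)\|_2 \leq B$:
\[ \EE_\epsilon \Bigl\|\sum_{i=1}^n \epsilon_i \nabla_\theta f(z_i; \theta_0)\Bigr\|_2 \leq \sqrt{\EE_\epsilon \sum_{i, j} \epsilon_i \epsilon_j \langle \nabla_\theta f(z_i; \theta_0), \nabla_\theta f(z_j; \theta_0)\rangle} = \sqrt{\sum_{i=1}^n \|\nabla_\theta f(z_i; \theta_0)\|_2^2} \leq B\sqrt{n}. \]
Dividing through by $n$ gives $\frac{1}{n}URad(\mathcal{F}_{|S}) \leq \frac{2BR}{\sqrt{n}}$, as claimed.

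There is no real obstacle here; the lemma is essentially the textbook Rademacher bound for bounded-norm linear predictors with bounded feature vectors, specialized to the features $\nabla_\theta f(\cdot; \theta_0)$. The only small subtlety to check is that the gradient norm bound $B$, which is stated over $\theta \in \overline{B}(\theta_0, R)$, does apply at $\theta = \theta_0$, which it trivially does.
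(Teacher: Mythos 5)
Your proof is correct and follows essentially the same route as the paper: Cauchy--Schwarz for the $L^\infty$ bound, then duality of the Euclidean norm, Jensen's inequality, and vanishing cross terms for the Rademacher complexity (the paper attributes this chain to Bartlett and Mendelson, but the computation is identical to yours).
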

\begin{proof}
By Cauchy-Schwarz
\[|\langle \nabla_\theta f(x; \theta_0), \theta \rangle| \leq 2BR \]
and thus $\norm{g}_{L^\infty(X, \rho)} \leq 2BR$ for all $g \in \mathcal{F}$.  Now let $\epsilon \in \RR^n$ be a vector with i.i.d Rademacher entries $\epsilon_i \sim \text{Unif}\{+1, -1\}$.  Then as was shown by \citet[Lemma 22]{bartlettmendelson2003}
\begin{align*}
\EE_{\epsilon}\brackets{\sup_{\theta \in \overline{B}(\theta_0, 2R)} \sum_{i = 1}^n \epsilon_i \langle \nabla_\theta f(x_i, \theta_0), \theta \rangle} &= 2R \EE_{\epsilon} \norm{\sum_{i = 1}^n \epsilon_i \nabla_\theta f(x_i; \theta_0)}_2 \\
&\leq
2R \brackets{\EE_{\epsilon} \norm{\sum_{i = 1}^n \epsilon_i \nabla_\theta f(x_i; \theta_0)}_2^2}^{1/2} \\
&= 2R \brackets{\EE_{\epsilon}\brackets{\sum_{1 \leq i, j \leq n} \epsilon_i \epsilon_j \langle \nabla_\theta f(x_i; \theta_0), \nabla_\theta f(x_j; \theta_0)\rangle}}^{1/2} \\
&= 2R \sqrt{\sum_{i = 1}^n K^{\theta_0}(x_i, x_i)} \\
&\leq 2RB \sqrt{n}.
\end{align*}
where the first inequality above is an application of Jensen's inequality.  The Rademacher complexity bound then follows from the bound above.
\end{proof}
The following lemma compares the $L^2(X, \rho)$ norm to that of its empirical counterpart $L^2(X, \widehat{\rho})$ for the function classes discussed in Lemmas \ref{lem:fminusflinrad} and \ref{lem:linradbd}.
\begin{lem}\label{lem:combinedradbd}
Let $R \geq 1$, $\mathcal{F}_1 = \{x \mapsto f(x; \theta) - f_{lin}(x; \theta) : \theta \in \overline{B}(\theta_0, R) \}$, $\mathcal{F}_2 = \{x \mapsto \langle \nabla_\theta f(x; \theta_0), \theta \rangle : \norm{\theta}_2 \leq 2R \}$, and $B = \sup_{x \in X}  \sup_{\theta \in \overline{B}(\theta_0, R)} \norm{\nabla_\theta f(x; \theta)}$.
Then with probability at least $1 - \delta$ over the sampling of $S = (x_1, \ldots, x_n)$
\[ \sup_{g \in  \mathcal{F}_1 \cup \mathcal{F}_2} \abs{\norm{g}_{L^2(X, \rho)}^2 - \norm{g}_{L^2(X, \widehat{\rho})}^2} \leq 4BR^3 H_{max} +  \tilde{O}\parens{\frac{B^2 R^2}{\sqrt{n}}}. \]
where $\hat{\rho} = \frac{1}{n} \sum_{i = 1}^n \delta_{x_i}$ is the empirical measure induced by $x_1, \ldots, x_n$ and
\[ H_{max} := \max_{z \in S} \sup_{\theta \in \overline{B}(\theta_0, R)} \norm{H(z, \theta)}_{op}. \]
\end{lem}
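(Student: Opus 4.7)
The plan is to apply a standard uniform-concentration-via-Rademacher-complexity argument to the squared function class $\{g^2 : g \in \mathcal{F}_1 \cup \mathcal{F}_2\}$, since the desired quantity is exactly $\sup_g |\mathbb{E}_\rho[g^2] - \mathbb{E}_{\widehat{\rho}}[g^2]|$. The inputs we already have in hand are the $L^\infty$ bounds $\sup_{g \in \mathcal{F}_i} \|g\|_{L^\infty(X,\rho)} \leq 2BR$ from Lemmas~\ref{lem:fminusflinrad} and~\ref{lem:linradbd}, together with the Rademacher bounds $\tfrac{1}{n}URad((\mathcal{F}_1 \cup -\mathcal{F}_1)_{|S}) \leq \tfrac{1}{2} R^2 H_{max}$ and $\tfrac{1}{n}URad(\mathcal{F}_{2|S}) \leq 2BR/\sqrt{n}$.

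First I would observe that since each $g \in \mathcal{F}_1 \cup \mathcal{F}_2$ takes values in $[-2BR, 2BR]$, the map $\ell(t) = t^2$ is $4BR$-Lipschitz on the relevant range, and $g^2(x) \in [0, 4B^2R^2]$. Applying the contraction principle (Lemma~\ref{lem:mjtliplem}) componentwise gives
\[ URad(\{g^2 : g \in \mathcal{F}_i\}_{|S}) \leq 4BR \cdot URad(\mathcal{F}_{i|S}) \]
for $i = 1, 2$. For $\mathcal{F}_1$ this yields $\tfrac{1}{n}URad(\{g^2\}_{|S}) \leq 4BR \cdot \tfrac{1}{2} R^2 H_{max} = 2BR^3 H_{max}$ (using $\mathcal{F}_1 \subset \mathcal{F}_1 \cup -\mathcal{F}_1$), and for $\mathcal{F}_2$ it yields $\tfrac{1}{n}URad(\{g^2\}_{|S}) \leq 8B^2 R^2/\sqrt{n}$.

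Next I would apply Theorem~\ref{thm:mjtrademacher} to each squared class with envelope $[a,b] = [0, 4B^2R^2]$, producing with probability at least $1 - \delta$
\[ \sup_{g \in \mathcal{F}_i} \brackets{\|g\|_{L^2(X,\rho)}^2 - \|g\|_{L^2(X,\widehat{\rho})}^2} \leq \frac{2}{n} URad(\{g^2\}_{|S}) + 12 B^2 R^2 \sqrt{\frac{\log(2/\delta)}{2n}}. \]
To upgrade the one-sided sup to a two-sided absolute value I would apply the same theorem to the class $\{-g^2 : g \in \mathcal{F}_i\}$, whose Rademacher complexity equals that of $\{g^2\}$ by symmetry of the Rademacher vector, and then union-bound over the two signs and over the two families $\mathcal{F}_1, \mathcal{F}_2$ (absorbing the constant loss into the $\tilde O$). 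Combining the two Rademacher estimates, the bound for $\mathcal{F}_1$ contributes the $4BR^3 H_{max}$ term, while the $\mathcal{F}_2$ contribution and the Hoeffding-style concentration term both fit inside $\tilde O(B^2 R^2/\sqrt{n})$, giving exactly the claimed inequality.

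I do not anticipate a serious obstacle: all ingredients are already assembled in Lemmas~\ref{lem:fminusflinrad}, \ref{lem:linradbd}, \ref{lem:mjtliplem}, and Theorem~\ref{thm:mjtrademacher}. The only mild subtlety is ensuring that the contraction is applied with the correct Lipschitz constant on the uniform envelope $[-2BR, 2BR]$ rather than a larger range, and keeping track of the symmetrization when passing from one-sided to two-sided deviation; both are routine bookkeeping.
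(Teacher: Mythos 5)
Your proof is correct and follows essentially the same strategy as the paper: bound $\norm{g}_{L^\infty}$ by $2BR$, apply the contraction lemma with Lipschitz constant $4BR$, plug the resulting Rademacher complexity into Theorem~\ref{thm:mjtrademacher}, and symmetrize plus union-bound for the two-sided estimate. The one minor organizational difference is that you apply Theorem~\ref{thm:mjtrademacher} to $\mathcal{F}_1$ and $\mathcal{F}_2$ separately and then union-bound over four events, whereas the paper bounds $URad((\mathcal{F}_1 \cup \mathcal{F}_2)_{|S})$ directly via the subadditivity property $URad(A \cup A') \leq URad(A) + URad(A')$ (valid when both suprema are nonnegative) and applies the concentration theorem only twice; both give the same final bound after absorbing constants into $\tilde O$.
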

\begin{proof}
Let $\mathcal{F} = \{ |g|^2 : g \in \mathcal{F}_1 \cup \mathcal{F}_2 \}$.  Note that by Lemmas \ref{lem:fminusflinrad} and \ref{lem:linradbd} we have that
for $g \in \mathcal{F}_1 \cup \mathcal{F}_2$ that $\norm{g}_{L^\infty(X, \rho)} \leq 2BR$.  Thus every $g \in \mathcal{F}$ satisfies $g(x) \in [0, 4B^2 R^2]$ a.s.  Well then by Theorem \ref{thm:mjtrademacher} we have with probability at least $1 - \delta$ over the sampling of $S = (x_1, \ldots, x_n)$ that
\[ \sup_{g \in \mathcal{F}}\brackets{\EE_{x \sim \rho}[g(x)] - \frac{1}{n} \sum_{i = 1}^n g(x_i)} \leq \frac{2}{n} URad(\mathcal{F}_{|S}) + 12 B^2 R^2 \sqrt{\frac{\log(2/\delta)}{2n}}. \]
Well note that $x^2$ is $4BR$ Lipschitz on the interval $[-2BR, 2BR]$.  Then by Lemma \ref{lem:mjtliplem} we have that
\[ URad(\mathcal{F}_{|S}) \leq 4BR \cdot URad((\mathcal{F}_1 \cup \mathcal{F}_2)_{|S}). \]
Well then we have that
\[ URad((\mathcal{F}_1 \cup \mathcal{F}_2)_{|S}) \leq URad((\mathcal{F}_1 \cup - \mathcal{F}_1 \cup \mathcal{F}_2)_{|S}) \leq URad((\mathcal{F}_1 \cup -\mathcal{F}_1)_{|S}) + URad((\mathcal{F}_2)_{|S}) \]
where we have used the property that if $A, A'$ are vector classes such that $\sup_{u \in A} \langle \epsilon, u \rangle \geq 0$ and $\sup_{u \in A'} \langle \epsilon, u \rangle \geq 0$ for all $\epsilon \in \{1, -1\}^{n}$ then $URad(A \cup A') \leq URad(A) + URad(A')$.  Well by Lemma \ref{lem:fminusflinrad} we have that
\[ \frac{1}{n} URad((\mathcal{F}_1 \cup - \mathcal{F}_1)_{|S}) \leq \frac{1}{2} R^2 H_{max}. \]
On the other hand by Lemma \ref{lem:linradbd} we have that
\[ \frac{1}{n} URad((\mathcal{F}_2)_{|S}) \leq \frac{2BR}{\sqrt{n}}. \]
Therefore combining these two bounds we get that
\begin{gather*}
\frac{1}{n} URad((\mathcal{F}_1 \cup \mathcal{F}_2)_{|S}) \leq \frac{1}{2} R^2 H_{max} + \frac{2BR}{\sqrt{n}}
\end{gather*}
and thus
\[ \frac{1}{n} URad(\mathcal{F}_{|S}) \leq \frac{4BR}{n} \cdot URad((\mathcal{F}_1 \cup \mathcal{F}_2)_{|S}) \leq 4BR \brackets{\frac{1}{2} R^2 H_{max} + \frac{2BR}{\sqrt{n}}} . \]
Therefore by putting everything together we have that
\begin{gather*}
\sup_{g \in \mathcal{F}}\brackets{\EE_{x \sim \rho}[g(x)] - \frac{1}{n} \sum_{i = 1}^n g(x_i)} \leq 8BR \brackets{\frac{1}{2} R^2 H_{max} + \frac{2BR}{\sqrt{n}}} + 12 B^2 R^2 \sqrt{\frac{\log(2/\delta)}{2n}} \\
= 4BR^3 H_{max} + \frac{16 B^2 R^2}{\sqrt{n}} + 12 B^2 R^2 \sqrt{\frac{\log(2/\delta)}{2n}}.
\end{gather*}
By repeating the same argument for the class $-\mathcal{F}$ and taking a union bound we have with probability at least $1 - 2 \delta$ that
\[ \sup_{g \in \mathcal{F}}\abs{\EE_{x \sim \rho}[g(x)] - \frac{1}{n} \sum_{i = 1}^n g(x_i)} \leq 4BR^3 H_{max} + \frac{16 B^2 R^2}{\sqrt{n}} + 12 B^2 R^2 \sqrt{\frac{\log(2/\delta)}{2n}}. \]
The above can be reinterpreted as
\begin{gather*}
\sup_{g \in  \mathcal{F}_1 \cup \mathcal{F}_2} \abs{\norm{g}_{L^2(X, \rho)}^2 - \norm{g}_{L^2(X, \widehat{\rho})}^2} \leq 4BR^3 H_{max} + \frac{16 B^2 R^2}{\sqrt{n}} + 12 B^2 R^2 \sqrt{\frac{\log(2/\delta)}{2n}} \\
= 4BR^3 H_{max} + \tilde{O}\parens{\frac{B^2 R^2}{\sqrt{n}}}.
\end{gather*}
The desired result then follows from replacing $\delta$ with $\delta / 2$ in the previous argument.
\end{proof}
Now we are ready to provide a bound on the quantity $\norm{(T_K - T_n) r(\bullet; \theta)}_{L^2(X, \rho)}$ for $\theta$ satisfying $\norm{\theta - \theta_0}_2 \leq R$.
\begin{lem}\label{lem:longlemma}
Let $R \geq 1$ and let $B$ and $H_{max}$ be defined as in Lemma \ref{lem:combinedradbd}.  Let $\mathcal{C} = \{x \mapsto f_{lin}(x;\theta) - f^*(x) : \theta \in \overline{B}(\theta_0, R) \}$.
Then there are quantities $\Gamma$ and $\Phi$ such that
\[ \Gamma = \tilde{O}\parens{\frac{BR \sqrt{\log(\mathcal{N}(\mathcal{C}, L^2(X, \rho), \epsilon))}}{\sqrt{n}}} \]
and
\[\Phi = 4BR^3 H_{max} +  \tilde{O}\parens{\frac{B^2 R^2}{\sqrt{n}}} \]
such that with probability at least $1 - \delta$ over the sampling of $x_1, \ldots, x_n$
\[ \sup_{\theta \in \overline{B}(\theta_0, R)} \norm{(T_K - T_n) r(\bullet; \theta)}_{L^2(X, \rho)} \leq \Gamma + \kappa \brackets{\sqrt{R^4 H_{max}^2 + 2 \Phi} + \sqrt{4 \epsilon^2 + 2 \Phi}}. \]
\end{lem}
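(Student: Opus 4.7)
The plan is to decompose the residual $r(\bullet;\theta)=f(\bullet;\theta)-f^*$ into its linearization $r_{lin}(\bullet;\theta):=f_{lin}(\bullet;\theta)-f^*\in\mathcal C$ and the nonlinear remainder $h_\theta:=f(\bullet;\theta)-f_{lin}(\bullet;\theta)$, control each piece separately, and combine by the triangle inequality
\[
\|(T_K-T_n)r(\bullet;\theta)\|_{L^2(X,\rho)}\leq \|(T_K-T_n)h_\theta\|_{L^2(X,\rho)}+\|(T_K-T_n)r_{lin}(\bullet;\theta)\|_{L^2(X,\rho)}.
\]
The common workhorse will be the Cauchy--Schwarz estimate $\|(T_K-T_n)g\|_{L^2(X,\rho)}^2\leq 2\kappa^2\bigl[\|g\|_{L^2(X,\rho)}^2+\|g\|_{L^2(X,\hat\rho)}^2\bigr]$, obtained by splitting $(T_K-T_n)g$ into its two integrals, applying Cauchy--Schwarz with $K(x,\cdot)\leq\kappa$, and integrating against $\rho$.

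For the nonlinear remainder, Lemma~\ref{lem:fminusflinrad} gives the pointwise Taylor bound $|h_\theta(x_i)|\leq \tfrac12 R^2 H_{max}$ at each training point, so $\|h_\theta\|_{L^2(X,\hat\rho)}^2\leq R^4H_{max}^2/4$. Since $h_\theta\in\mathcal F_1$ of Lemma~\ref{lem:combinedradbd}, that lemma lifts the empirical bound to a population bound $\|h_\theta\|_{L^2(X,\rho)}^2\leq R^4H_{max}^2/4+\Phi$. Feeding both into the Cauchy--Schwarz estimate above yields $\|(T_K-T_n)h_\theta\|_{L^2(X,\rho)}\leq \kappa\sqrt{R^4H_{max}^2+2\Phi}$, which is the first bracketed term in the claimed bound.

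For the linearized part I fix an $\epsilon$-covering $\{h_1,\ldots,h_N\}$ of $\mathcal C$ in $L^2(X,\rho)$, with $N=\mathcal N(\mathcal C,L^2(X,\rho),\epsilon)$, and for each $\theta$ select some $h_{j(\theta)}$ within $\epsilon$. The difference $r_{lin}(\bullet;\theta)-h_{j(\theta)}$ equals $\langle\nabla_\theta f(\bullet;\theta_0),\theta-\theta_{j(\theta)}\rangle$ (the target $f^*$ and, thanks to the antisymmetric initialization, the constant $f(\bullet;\theta_0)$ cancel), with $\|\theta-\theta_{j(\theta)}\|_2\leq 2R$, so it lies in $\mathcal F_2$ of Lemma~\ref{lem:combinedradbd}. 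Therefore $\|r_{lin}(\bullet;\theta)-h_{j(\theta)}\|_{L^2(X,\hat\rho)}^2\leq \epsilon^2+\Phi$, and the Cauchy--Schwarz estimate produces $\|(T_K-T_n)(r_{lin}(\bullet;\theta)-h_{j(\theta)})\|_{L^2(X,\rho)}\leq \kappa\sqrt{4\epsilon^2+2\Phi}$, the second bracketed term.

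It remains to bound the anchor term $\max_j\|(T_K-T_n)h_j\|_{L^2(X,\rho)}$. For each fixed $h_j$ the quantity $(T_K-T_n)h_j=-\tfrac1n\sum_i Z_i^{(j)}$ where $Z_i^{(j)}:=K(\bullet,x_i)h_j(x_i)-T_Kh_j$ are i.i.d.\ mean-zero random elements of the Hilbert space $L^2(X,\rho)$ with $\|Z_i^{(j)}\|_{L^2(X,\rho)}=O(\kappa\|h_j\|_\infty)=O(BR)$ (using $\|f_{lin}(\bullet;\theta)\|_\infty\leq BR$ and Assumption~\ref{ass:boundedtarget}). A Hilbert-valued Hoeffding/Pinelis inequality then gives $\|(T_K-T_n)h_j\|_{L^2(X,\rho)}=\tilde O(BR/\sqrt n)$ with probability $1-\delta/N$; a union bound over the $N$ cover elements yields $\Gamma=\tilde O\bigl(BR\sqrt{\log N/n}\bigr)$ as claimed. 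The main obstacle, and the reason Lemma~\ref{lem:combinedradbd} is so important here, is the lack of a uniform Hessian bound off the training set: I can only control $h_\theta$ and differences of linearized residuals empirically, and must pay the Rademacher gap $\Phi$ each time I move from $L^2(X,\hat\rho)$ to $L^2(X,\rho)$; everything else is bookkeeping with Cauchy--Schwarz, the triangle inequality, and a standard concentration step for the cover centers.
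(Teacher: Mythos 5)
Your proof is correct and follows essentially the same route as the paper: the same decomposition into $r_{lin}$ and the Taylor remainder $f-f_{lin}$, the same $\epsilon$-cover of $\mathcal{C}$ with anchor-plus-difference bookkeeping, the same appeal to Lemmas~\ref{lem:fminusflinrad} and~\ref{lem:combinedradbd} to transfer between $L^2(X,\hat\rho)$ and $L^2(X,\rho)$, and the same Hoeffding-plus-union-bound for the cover centers. The only (harmless) variation is that you run the concentration step for $(T_K-T_n)h_j$ directly in $L^2(X,\rho)$, whereas the paper lifts $Z_i$ to the RKHS $\mathcal{H}$ of $K$ and then converts back via $\norm{\bullet}_{L^2}\leq\sqrt{\sigma_1}\norm{\bullet}_{\mathcal{H}}$; both give the same $\tilde O(BR\sqrt{\log N/n})$ bound for $\Gamma$.
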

\begin{proof}
We will define $r_{lin}(x; \theta) = f_{lin}(x; \theta) - f^*(x)$.  Well then we have that
\begin{gather*}
\norm{(T_K - T_n) r(\bullet; \theta)}_{L^2(X, \rho)} \\
\leq \norm{(T_K - T_n) r_{lin}(\bullet; \theta)}_{L^2(X, \rho)} + \norm{(T_K - T_n)(f - f_{lin})(\bullet; \theta)}_{L^2(X, \rho)}.
\end{gather*}
Now let $E$ be a proper $\epsilon$-covering  of $\mathcal{C} = \{ r_{lin}(x; \theta) : \theta \in \overline{B}(\theta_0, R) \}$ with respect to $L^2(X, \rho)$.  Furthermore assume $E$ is of minimal cardinality so that $|E| = \mathcal{N}(\mathcal{C}, L^2(X, \rho), \epsilon)$.  Then for any $r_{lin}(\bullet; \theta)$ we can choose $\hat{\theta} \in \overline{B}(\theta_0, R)$ so that $r_{lin}(\bullet; \hat{\theta}) \in E$ and 
\[ \norm{r_{lin}(\bullet; \theta) - r_{lin}(\bullet; \hat{\theta})}_{L^2(X, \rho)} \leq \epsilon.\]
Well then
\begin{gather*}
\norm{(T_K - T_n) r_{lin}(\bullet; \theta)}_{L^2(X, \rho)} \\
\leq  \norm{(T_K - T_n) r_{lin}(\bullet; \hat{\theta})}_{L^2(X, \rho)} + \norm{(T_K - T_n)(r_{lin}(\bullet; \theta) - r_{lin}(\bullet; \hat{\theta}))}_{L^2(X, \rho)}.
\end{gather*}
We note that for any $r_{lin}(x; \theta) \in \mathcal{C}$ that
\begin{gather*}
|r_{lin}(x; \theta)| \leq |f_{lin}(x; \theta)| + |f^*(x)| = |\langle \nabla_{\theta} f(x; \theta_0), \theta - \theta_0 \rangle| + |f^*(x)| \\
\leq BR + \norm{f^*}_{L^\infty(X, \rho)} =: S.
\end{gather*}
To handle the term $\big\|(T_K - T_n)r_{lin}(\bullet; \hat{\theta})\big\|_{L^2(X, \rho)}$, for $g \in E$ we define the random variables $Z_i := g(x_i) K_{x_i} - \EE_{x \sim \rho}[g(x) K_x]$ taking values in the separable Hilbert space $\mathcal{H}$ where $\mathcal{H}$ is the RKHS associated with $K$.  We note that $(T_n - T_K)g$ is equal to $\frac{1}{n} \sum_{i = 1}^n Z_i$.  Well then note that $\norm{g(x) K_x}_{\mathcal{H}} = |g(x)| \norm{K_x}_{\mathcal{H}} \leq \norm{g}_{L^\infty(X, \rho)} \sqrt{K(x, x)} \leq S \kappa^{1/2}$ a.s.  Well then
\begin{gather*}
\norm{Z_i}_{\mathcal{H}} \leq \norm{g(x_i) K_{x_i}}_{\mathcal{H}} + \norm{\EE_{x \sim \rho}[g(x) K_x]}_{\mathcal{H}} \\ 
\leq S \kappa^{1/2} + \EE_{x \sim \rho} \norm{g(x) K_x}_{\mathcal{H}} \leq 2 S \kappa^{1/2}.
\end{gather*}
Then using Hoeffding's inequality for random variables taking values in a separable Hilbert space (see \citealt[Section 2.4]{rosasco10a}) we have
\begin{gather*}
\PP\parens{\norm{\frac{1}{n} \sum_{i = 1}^n Z_i}_\mathcal{H} > s} \leq 2 \exp\parens{ -n s^2 / 2 [2 S \kappa^{1/2}]^2} . 
\end{gather*}
Thus by the union bound and the fact that $\frac{1}{n} \sum_{i = 1}^n Z_i = (T_n - T_K)g$ we have that
\[ \PP\parens{\max_{g \in E} \norm{(T_n - T_K)g}_{\mathcal{H}} > s} \leq 2 |E| \exp\parens{ -n s^2 / 2 [2 S \kappa^{1/2}]^2}. \]
By setting
\[ s = \frac{2\sqrt{2} \cdot S \kappa^{1/2} \sqrt{\log \parens{\frac{2 |E|}{\delta}}}}{\sqrt{n}} = \tilde{O}\parens{\frac{BR \sqrt{\log(\mathcal{N}(\mathcal{C}, L^2(X, \rho), \epsilon))}}{\sqrt{n}}} \]
we get that with probability at least $1 - \delta$ over the sampling of $x_1, \ldots, x_n$
\[ \max_{g \in E} \norm{(T_n - T_K)g}_{\mathcal{H}} \leq s \]
and thus from the inequality $\norm{\bullet}_{L^2(X, \rho)} \leq \sqrt{\sigma_1} \norm{\bullet}_{\mathcal{H}}$ we get
\begin{equation}\label{eq:hoeffdingbd}
\max_{g \in E} \norm{(T_n - T_K)g}_{L^2(X, \rho)} \leq s \sqrt{\sigma_1} \leq s \sqrt{\kappa}.    
\end{equation}
On the other hand we must bound
\[ \norm{(T_K - T_n)(r_{lin}(\bullet; \theta) - r_{lin}(\bullet; \hat{\theta}))}_{L^2(X, \rho)} \]
and
\[ \norm{(T_K - T_n)(f - f_{lin})(\bullet; \theta)}_{L^2(X, \rho)}. \]
Well note since $K(\bullet, \bullet) \leq \kappa$ pointwise it follows by Cauchy-Schwarz that for any $h$
\[|T_K h(x)| = \abs{\int K(x, s) h(s) d\rho(s)} \leq \kappa \norm{h}_{L^2(X, \rho)} \]
and similarly
\[ |T_n h(x)| = \abs{\int K(x, s) h(s) d\widehat{\rho}(s)} \leq \kappa \norm{h}_{L^2(X, \widehat{\rho})}. \]
Therefore
\begin{gather*}
\norm{(T_K - T_n)h}_{L^2(X, \rho)} \leq \norm{(T_K - T_n)h}_{L^\infty(X, \rho)} \leq \norm{T_K h}_{L^\infty(X, \rho)} + \norm{T_n h}_{L^\infty(X, \rho)} \\
\leq \kappa [\norm{h}_{L^2(X, \rho)} + \norm{h}_{L^2(X, \widehat{\rho})}].    
\end{gather*}
Thus we will bound $r_{lin}(\bullet; \theta) - r_{lin}(\bullet; \hat{\theta})$ and $(f - f_{lin})(\bullet;\theta)$ in $L^2(X, \rho)$ and $L^2(X, \widehat{\rho})$.  Well since $\theta \in \overline{B}(\theta_0, R)$ we have that $(f - f_{lin})(\bullet; \theta) \in \mathcal{F}_1$ where $\mathcal{F}_1$ is defined as in Lemma \ref{lem:combinedradbd}.  On the other hand we note that $r_{lin}(x; \theta) - r_{lin}(x; \hat{\theta}) = \langle \nabla_\theta f(x; \theta_0), \theta - \hat{\theta} \rangle$.  Note that since $\theta, \hat{\theta} \in \overline{B}(\theta_0, R)$ we have that $\big\|\theta - \hat{\theta}\big\|_2 \leq 2R$.  Thus $r_{lin}(\bullet; \theta) - r_{lin}(\bullet;\hat{\theta}) \in \mathcal{F}_2$ where $\mathcal{F}_2$ is defined as in Lemma \ref{lem:combinedradbd}.  Thus by Lemma \ref{lem:combinedradbd} separate from the randomness before we have with probability at least $1 - \delta$ over the sampling of $x_1, \ldots, x_n$
\begin{equation}\label{eq:phibd}
\sup_{g \in  \mathcal{F}_1 \cup \mathcal{F}_2} \abs{\norm{g}_{L^2(X, \rho)}^2 - \norm{g}_{L^2(X, \widehat{\rho})}^2} \leq 4BR^3 H_{max} +  \tilde{O}\parens{\frac{B^2 R^2}{\sqrt{n}}} := \Phi.     
\end{equation}
Well note that by Lemma \ref{lem:fminusflinrad} we have that for each $i \in [n]$ 
\[|f(x_i; \theta) - f_{lin}(x_i; \theta)| \leq \frac{1}{2} R^2 H_{max} \]
and consequently
\[ \norm{f(\bullet; \theta) - f_{lin}(\bullet; \theta)}_{L^2(X, \widehat{\rho})} \leq \frac{1}{2} R^2 H_{max}. \]
On the other hand we had by the selection of $\hat{\theta}$ that
\[ \norm{r_{lin}(\bullet; \theta) - r_{lin}(\bullet; \hat{\theta})}_{L^2(X, \rho)} \leq \epsilon. \]
Now for conciseness let $h_1 = f(\bullet; \theta) - f_{lin}(\bullet; \theta)$ and $h_2 = r_{lin}(\bullet; \theta) - r_{lin}(\bullet; \hat{\theta})$.  Then by \eqref{eq:phibd} we have 
\[ \norm{h_1}_{L^2(X, \rho)}^2 \leq \norm{h_1}_{L^2(X, \widehat{\rho})}^2 + \Phi \leq \frac{1}{4}R^4 H_{max}^2 + \Phi \]
and
\[ \norm{h_2}_{L^2(X, \widehat{\rho})}^2 \leq \norm{h_2}_{L^2(X, \rho)}^2 + \Phi \leq \epsilon^2 + \Phi. \]
This implies
\begin{align*}
\norm{h_1}_{L^2(X, \rho)}^2 + \norm{h_1}_{L^2(X, \widehat{\rho})}^2 &\leq \frac{1}{2} R^4 H_{max}^2 + \Phi\\
\norm{h_2}_{L^2(X, \rho)}^2 + \norm{h_2}_{L^2(X, \widehat{\rho})}^2 &\leq 2 \epsilon^2 + \Phi.
\end{align*}
Thus using the inequality $a + b \leq \sqrt{2}(a^2 + b^2)^{1/2}$ for $a, b \geq 0$ combined with the previous estimates we have
\[ \norm{h_1}_{L^2(X, \rho)} + \norm{h_1}_{L^2(X,\widehat{\rho})} \leq \sqrt{2} \sqrt{\frac{1}{2} R^4 H_{max}^2 + \Phi} = \sqrt{R^4 H_{max}^2 + 2 \Phi} \]
and
\[ \norm{h_2}_{L^2(X, \rho)} + \norm{h_2}_{L^2(X,\widehat{\rho})} \leq \sqrt{2} \sqrt{2 \epsilon^2 + \Phi} = \sqrt{4 \epsilon^2 + 2 \Phi}. \]
Thus we have just shown that assuming \eqref{eq:phibd} holds that
\[ \norm{(T_K - T_n)h_1}_{L^2(X, \rho)} \leq \kappa [\norm{h_1}_{L^2(X, \rho)} + \norm{h_1}_{L^2(X, \widehat{\rho})}] \leq \kappa \sqrt{R^4 H_{max}^2 + 2 \Phi}  \]
and
\[ \norm{(T_K - T_n)h_2}_{L^2(X, \rho)} \leq \kappa [\norm{h_2}_{L^2(X, \rho)} + \norm{h_2}_{L^2(X, \widehat{\rho})}] \leq \kappa \sqrt{4 \epsilon^2 + 2 \Phi}. \]
Then by taking a union bound we can assume with probability at least $1 - 2\delta$ that \eqref{eq:hoeffdingbd} and \eqref{eq:phibd} hold simultaneously.  In which case our previous estimates combine to give us the bound
\begin{gather*}
\norm{(T_K - T_n)r(\bullet; \theta)}_{L^2(X, \rho)} \\
\leq \norm{(T_K - T_n)r_{lin}(\bullet;\hat{\theta})}_{L^2(X, \rho)} + \norm{(T_K - T_n)h_1}_{L^2(X, \rho)} + \norm{(T_K - T_n)h_2}_{L^2(X, \rho)} \\
\leq s\sqrt{\kappa} + \kappa \brackets{\sqrt{R^4 H_{max}^2 + 2 \Phi} + \sqrt{4 \epsilon^2 + 2 \Phi}}.
\end{gather*}
We now note that as long as \eqref{eq:hoeffdingbd} and \eqref{eq:phibd} hold the same argument runs through for any $\theta \in \overline{B}(\theta_0, R)$.  Thus with probability at least $1 - 2\delta$
\[ \sup_{\theta \in \overline{B}(\theta_0, R)} \norm{(T_K - T_n) r(\bullet; \theta)}_{L^2(X, \rho)} \leq s \sqrt{\kappa} + \kappa \brackets{\sqrt{R^4 H_{max}^2 + 2 \Phi} + \sqrt{4 \epsilon^2 + 2 \Phi}}. \]
The desired conclusion follows by setting $\Gamma = s \sqrt{\kappa}$ and replacing $\delta$ with $\delta / 2$ in the previous argument.
\end{proof}
From Lemma~\ref{lem:longlemma} we get the following corollary.
\begin{cor}\label{cor:tkminustn}
Let $R \geq 1$ and
\[B = \sup_{x \in X} \sup_{\theta \in \overline{B}(\theta_0, R)} \norm{\nabla_\theta f(x, \theta)}_2.\]
Then with probability at least $1 - \delta$ over the sampling of $x_1, \ldots, x_n$ we have that
\[\sup_{\theta \in \overline{B}(\theta_0, R)} \norm{(T_K - T_n) r(\bullet; \theta)}_{L^2(X, \rho)}^2 \leq \epsilon \]
provided that
$B = \tilde{O}(1)$, $H_{max} = \tilde{O}(\epsilon / R^3)$ and $n = \tilde{\Omega}(R^4 / \epsilon^2)$ where the expressions under the $\tilde{O}$ and $\tilde{\Omega}$ notation do not depend on the values $x_1, \ldots, x_n$.
\end{cor}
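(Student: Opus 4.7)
The plan is to invoke Lemma~\ref{lem:longlemma} with a suitably chosen covering scale and then check that, under the stated hypotheses $B=\tilde O(1)$, $H_{max}=\tilde O(\epsilon/R^{3})$, and $n=\tilde\Omega(R^{4}/\epsilon^{2})$, each of the three terms $\Gamma$, $\kappa\sqrt{R^{4}H_{max}^{2}+2\Phi}$, and $\kappa\sqrt{4\epsilon_{\text{cov}}^{2}+2\Phi}$ appearing in that lemma is at most $\tilde O(\sqrt{\epsilon})$. Squaring then yields the $\epsilon$ bound asserted by the corollary.

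First I would set the covering scale $\epsilon_{\text{cov}}:=c\sqrt{\epsilon}$ for a suitable constant $c>0$, so that $4\epsilon_{\text{cov}}^{2}=\tilde O(\epsilon)$. Next I would control $\Phi=4BR^{3}H_{max}+\tilde O(B^{2}R^{2}/\sqrt{n})$. The first summand is $4\cdot\tilde O(1)\cdot R^{3}\cdot\tilde O(\epsilon/R^{3})=\tilde O(\epsilon)$, and the second is $\tilde O(R^{2}/\sqrt{n})=\tilde O(\epsilon)$ because $n=\tilde\Omega(R^{4}/\epsilon^{2})$ implies $R^{2}/\sqrt{n}\le\epsilon$. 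Similarly $R^{4}H_{max}^{2}=\tilde O(\epsilon^{2}/R^{2})\le\tilde O(\epsilon)$. Therefore both $\sqrt{R^{4}H_{max}^{2}+2\Phi}$ and $\sqrt{4\epsilon_{\text{cov}}^{2}+2\Phi}$ are $\tilde O(\sqrt{\epsilon})$, as required.

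The delicate part is the Rademacher/Hoeffding term
\[\Gamma=\tilde O\!\left(\frac{BR\sqrt{\log\mathcal N(\mathcal C,L^{2}(X,\rho),\epsilon_{\text{cov}})}}{\sqrt{n}}\right).\]
Here I would use Corollary~\ref{cor:covnum} applied to the class $\mathcal C=\{f_{lin}(\cdot;\theta)-f^{*}:\theta\in\overline B(\theta_{0},R)\}$ (translating by $-f^{*}$ and by $\theta_{0}$ does not change the metric entropy), which gives $\log\mathcal N=\tilde O(\tilde p(F^{1/2},3\epsilon_{\text{cov}}/(4R)))$. Since $B=\tilde O(1)$, the trace bound $\mathrm{tr}(F)=\int_{X}\|\nabla_{\theta}f(x;\theta_{0})\|_{2}^{2}\,d\rho(x)\le B^{2}=\tilde O(1)$ yields, via the Markov-type estimate $\tilde p(F^{1/2},t)\le\mathrm{tr}(F)/t^{2}$, the effective rank bound $\tilde p(F^{1/2},c\sqrt{\epsilon}/R)=\tilde O(R^{2}/\epsilon)$. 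Hence $\log\mathcal N=\tilde O(R^{2}/\epsilon)$ and
\[\Gamma=\tilde O\!\left(\frac{R\sqrt{R^{2}/\epsilon}}{\sqrt{n}}\right)=\tilde O\!\left(\frac{R^{2}}{\sqrt{n\epsilon}}\right)=\tilde O(\sqrt{\epsilon}),\]
where the last equality uses $n=\tilde\Omega(R^{4}/\epsilon^{2})$.

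The main obstacle I anticipate is the bookkeeping around the covering scale: choosing $\epsilon_{\text{cov}}\sim\sqrt{\epsilon}$ is what makes the contribution from the unresolved ball in $\mathcal C$ match $\sqrt{\epsilon}$, but then one has to verify that the effective rank $\tilde p(F^{1/2},\sqrt{\epsilon}/R)$ does not overwhelm the $1/\sqrt{n}$ factor. The trace-based bound is what makes this tight enough; writing a full proof only requires combining Lemma~\ref{lem:longlemma}, Corollary~\ref{cor:covnum}, and the above arithmetic, after which squaring and absorbing constants and polylogarithmic factors into $\tilde O$ gives the stated conclusion.
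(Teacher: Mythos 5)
Your proposal is correct and follows essentially the same route as the paper's proof: apply Lemma~\ref{lem:longlemma} at covering scale $\sqrt{\epsilon}$, bound the metric entropy via Corollary~\ref{cor:covnum}, control the effective rank by the trace-Markov estimate $\mathrm{tr}(F)\le B^2 = \tilde O(1)$, and then verify that $\Gamma$, $\Phi$, and $R^4H_{max}^2$ are each $\tilde O(\epsilon)$ (or $\tilde O(\sqrt\epsilon)$ after the square root). The only cosmetic difference is that you phrase the effective-rank bound directly as $\tilde p(F^{1/2},t)\le\mathrm{tr}(F)/t^2$ while the paper passes through $\tilde p(F^{1/2},t)=\tilde p(F,t^2)$ and then applies $\tilde p(F,s)\le\mathrm{tr}(F)/s$; these are identical.
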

\begin{proof}
After substituting $\epsilon^{1/2}$ for $\epsilon$ in Lemma \ref{lem:longlemma} we have that with probability at least $1 - \delta$ over the sampling of $x_1, \ldots, x_n$
\[ \sup_{\theta \in \overline{B}(\theta_0, R)} \norm{(T_K - T_n) r(\bullet; \theta)}_{L^2(X, \rho)} \leq \Gamma + \kappa \brackets{\sqrt{R^4 H_{max}^2 + 2 \Phi} + \sqrt{4 \epsilon + 2 \Phi}} \]
where
\[ \Gamma = \tilde{O}\parens{\frac{BR \sqrt{\log(\mathcal{N}(\mathcal{C}, L^2(X, \rho), \epsilon^{1/2}))}}{\sqrt{n}}}, \]
\[\Phi = 4BR^3 H_{max} +  \tilde{O}\parens{\frac{B^2 R^2}{\sqrt{n}}}, \]
and
\[\mathcal{C} = \{x \mapsto f_{lin}(x; \theta) - f^*(x) : \theta \in \overline{B}(\theta_0, R) \}. \]
Now define
\[ F := \int_X \nabla_\theta f(x; \theta_0) \nabla_\theta f(x; \theta_0)^T d\rho(x). \]
Since translation by a fixed function does not change the covering number we have by Corollary~\ref{cor:covnum} that
\[\log \mathcal{N}(\mathcal{C}, L^2(X, \rho), \epsilon^{1/2}) = \tilde{O}\parens{\tilde{p}\parens{F^{1/2}\, \frac{3\epsilon^{1/2}}{4R}}} = \tilde{O}\parens{\tilde{p}\parens{F, \frac{9\epsilon}{16R^2}}}. \]
Well using the fact that $\tilde{p}(A, \epsilon) \leq \frac{Tr(A)}{\epsilon}$ we have that
\[ \tilde{p}\parens{F, \frac{9\epsilon}{16R^2}} \leq \frac{16R^2 Tr(F)}{9 \epsilon}. \]
Well we note that
\begin{gather*}
Tr(F) = Tr(\EE_{x \sim \rho}[\nabla_\theta f(x; \theta_0) \nabla_\theta f(x; \theta_0)^T]) = \EE_{x \sim \rho} Tr(\nabla_\theta f(x; \theta_0) \nabla_\theta f(x; \theta_0)^T) 
\\ = \EE_{x \sim \rho} \norm{\nabla_\theta f(x; \theta_0)}^2 \leq B^2.
\end{gather*}
Therefore assuming $B = \tilde{O}(1)$ we have that
\[ \Gamma = \tilde{O}\parens{\frac{R \sqrt{\log \mathcal{N}(\mathcal{C}, L^2(X, \rho), \epsilon^{1/2})}}{\sqrt{n}}} = \tilde{O}\parens{\frac{R^2}{\epsilon^{1/2} \sqrt{n}}}. \]
Thus $n = \tilde{\Omega}(R^4 / \epsilon^2)$ suffices to ensure that $\Gamma = O(\epsilon^{1/2})$.  Now we must bound
\[\Phi = 4BR^3 H_{max} +  \tilde{O}\parens{\frac{B^2 R^2}{\sqrt{n}}}. \]
We note that whenever $B = \tilde{O}(1)$ we have that
$H_{max} = \tilde{O}(\epsilon / R^3)$ and $n = \tilde{\Omega}(R^4 / \epsilon^2)$ guarantees that $\Phi = O(\epsilon)$.  Finally we have that $H_{max} = \tilde{O}(\epsilon / R^3) \subset \tilde{O}(\epsilon^{1/2} / R^2)$ suffices to ensure that $R^4 H_{max}^2 = O(\epsilon)$.  Thus given all these conditions are met we have that
\[ \Gamma + \kappa \brackets{\sqrt{R^4 H_{max}^2 + 2 \Phi} + \sqrt{4 \epsilon + 2 \Phi}} = O(\epsilon^{1/2}). \]
The desired result then follows from setting the constants under the $\tilde{O}$ and $\tilde{\Omega}$ notation appropriately.
\end{proof}
The following lemma combines the results in this section to get the ultimate bound on the operator deviations $T_K - T_n^t$.
\begin{lem}\label{lem:bigbadlemma}
Let $R \geq 1$ and $\epsilon \in (0, R)$.  Let $S = (x_1, \ldots, x_n)$ and $S' = (x_1', \ldots, x_n')$ be two separate i.i.d.\ samples from $\rho$ and denote 
\[ H_{max} := \max_{z \in S \cup S'} \sup_{\theta \in \overline{B}(\theta_0, R)} \norm{H(z, \theta)}_{op} \]
\[B := \sup_{x \in X} \sup_{\theta \in \overline{B}(\theta_0, R)} \norm{\nabla_\theta f(x, \theta)}_2.\]
Then with probability at least $1 - \delta$ over the sampling of $S$, $S'$ we have that for any $t$ such that $\norm{\theta_t - \theta_0}_2 \leq R$ that
\[ \norm{(T_K - T_n^t) r_t}_{L^2(X, \rho)}^2  \leq 4 \norm{f^*}_{L^\infty(X, \rho)}^2 \norm{K - K_0}_{L^2(X^2, \rho \otimes \rho)}^2 + \epsilon \]
provided that $B = \tilde{O}(1)$, $H_{max} = \tilde{O}(\epsilon / R^3)$ and $n = \tilde{\Omega}(R^4 / \epsilon^2)$ where the expressions under the $\tilde{O}$ and $\tilde{\Omega}$ notation do not depend on $S$ and $S'$.
\end{lem}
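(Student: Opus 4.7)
}
The plan is to derive the bound by a triangle-inequality decomposition of the operator $T_K - T_n^t$ into the ``sampling'' piece $T_K - T_n$ and the ``kernel drift'' piece $T_n - T_n^t$, and then invoke the two corollaries already proved in this section, namely Corollary~\ref{cor:tnminustnt} and Corollary~\ref{cor:tkminustn}. Since the conclusion is stated for the squared $L^2(X,\rho)$-norm, I will use $(a+b)^2 \le 2a^2 + 2b^2$ to write
\[
\norm{(T_K - T_n^t)r_t}_{L^2(X, \rho)}^2 \le 2\norm{(T_K - T_n)r_t}_{L^2(X, \rho)}^2 + 2\norm{(T_n - T_n^t)r_t}_{L^2(X, \rho)}^2.
\]
The two summands will then be controlled separately, each up to an additive error of $\epsilon/4$, and a union bound over the two ``good'' events will give the claim with total failure probability at most $\delta$ after replacing $\delta$ by $\delta/2$ in each invocation.

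\paragraph{Controlling the two terms}
For the first term, since $\norm{\theta_t - \theta_0}_2 \le R$, the function $r_t = r(\bullet;\theta_t)$ lies in the class covered by the supremum in Corollary~\ref{cor:tkminustn}. Applying that corollary with error parameter $\epsilon/4$ yields
\[
\norm{(T_K - T_n)r_t}_{L^2(X, \rho)}^2 \le \epsilon/4
\]
on an event of probability at least $1-\delta/2$ over the sampling of $S$, provided $B=\tilde O(1)$, $H_{max} = \tilde O(\epsilon/R^3)$, and $n = \tilde\Omega(R^4/\epsilon^2)$. For the second term, Corollary~\ref{cor:tnminustnt} applied with error parameter $\epsilon/4$ yields, on an event of probability $\ge 1-\delta/2$ over the sampling of $S,S'$,
\[
\norm{(T_n - T_n^t)r_t}_{L^2(X, \rho)}^2 \le 2\norm{f^*}_{L^\infty(X,\rho)}^2 \norm{K - K_0}_{L^2(X^2, \rho\otimes\rho)}^2 + \epsilon/4,
\]
provided $B=\tilde O(1)$, $H_{max} = \tilde O(\epsilon^{1/2}/R)$, and $n = \tilde\Omega(\epsilon^{-2})$.

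\paragraph{Combining the estimates and verifying the hypotheses}
On the intersection of the two good events (probability $\ge 1-\delta$), the decomposition above gives
\[
\norm{(T_K - T_n^t)r_t}_{L^2(X, \rho)}^2 \le 4\norm{f^*}_{L^\infty(X,\rho)}^2 \norm{K - K_0}_{L^2(X^2, \rho\otimes\rho)}^2 + \epsilon/2 + \epsilon/2,
\]
which is exactly the stated bound. It remains to check that the hypotheses $B=\tilde O(1)$, $H_{max} = \tilde O(\epsilon/R^3)$, $n=\tilde\Omega(R^4/\epsilon^2)$ in the lemma suffice for both invocations simultaneously. Since we assume $\epsilon \in (0, R)$ and $R \ge 1$, we have $\epsilon/R^3 \le \epsilon^{1/2}/R$ (equivalently $\epsilon^{1/2}\le R^2$, which holds because $\epsilon < R \le R^2$), so the Hessian requirement of Corollary~\ref{cor:tkminustn} is the binding one and subsumes that of Corollary~\ref{cor:tnminustnt}; similarly $R^4/\epsilon^2 \ge 1/\epsilon^2$ implies the sample-size requirement of Corollary~\ref{cor:tnminustnt} is implied by that of Corollary~\ref{cor:tkminustn}.

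\paragraph{Where the work really sits}
This final step is essentially bookkeeping: the genuine technical effort has already been spent in Corollaries~\ref{cor:tnminustnt} and~\ref{cor:tkminustn}, whose proofs required the Rademacher/ghost-sample argument (Lemma~\ref{lem:kerneldevghostsample}) and the covering-number bound for the linearized model (Corollary~\ref{cor:covnum}). The only mild subtlety to keep in mind is that the event in Corollary~\ref{cor:tkminustn} is uniform in $\theta \in \overline{B}(\theta_0,R)$, so once $\norm{\theta_t - \theta_0}_2 \le R$ is known we may pick the random $\theta_t$ post hoc without incurring any additional union bound over $t$; the same is true for the $\theta_t$-dependence in Corollary~\ref{cor:tnminustnt}, whose good event is likewise uniform over all such $\theta_t$.
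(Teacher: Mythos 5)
Your proof is correct and follows essentially the same approach as the paper: split $T_K - T_n^t$ into $(T_K - T_n) + (T_n - T_n^t)$, apply $(a+b)^2 \le 2a^2 + 2b^2$, invoke Corollary~\ref{cor:tkminustn} and Corollary~\ref{cor:tnminustnt} each with $\epsilon/4$ and $\delta/2$, and take a union bound. The explicit verification that $\epsilon/R^3 \le \epsilon^{1/2}/R$ and $R^4/\epsilon^2 \ge \epsilon^{-2}$ under $\epsilon \in (0, R)$, $R \ge 1$ is a small but welcome addition that the paper leaves implicit.
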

\begin{proof}
We note that for $\theta_t$ such that $\norm{\theta_t - \theta_0}_2 \leq R$ that
\begin{align*}
\norm{(T_K - T_n^t) r_t}_{L^2(X, \rho)}^2 &\leq [\norm{(T_K - T_n) r_t}_{L^2(X, \rho)} + \norm{(T_n - T_n^t) r_t}_{L^2(X, \rho)}]^2 \\
&\leq 2 \norm{(T_K - T_n) r_t}_{L^2(X, \rho)}^2 + 2\norm{(T_n - T_n^t) r_t}_{L^2(X, \rho)}^2\\
&\leq 2 \sup_{\theta \in \overline{B}(\theta_0, R)} \norm{(T_K - T_n) r(\bullet; \theta)}_{L^2(X, \rho)}^2 + 2 \norm{(T_n - T_n^t) r_t}_{L^2(X, \rho)}^2.
\end{align*}
Well by Corollary \ref{cor:tkminustn} we have with probability at least $1 - \delta$ over the sampling of $x_1, \ldots, x_n$
\[\sup_{\theta \in \overline{B}(\theta_0, R)} \norm{(T_K - T_n) r(\bullet; \theta)}_{L^2(X, \rho)}^2 \leq \epsilon \]
provided that $B = \tilde{O}(1)$,
$H_{max} = \tilde{O}(\epsilon / R^3)$ and $n = \tilde{\Omega}(R^4 / \epsilon^2)$.  This result also does not depend in any way on $S'$.  On the other hand by Corollary \ref{cor:tnminustnt} separate from the randomness before we have with probability at least $1 - \delta$ over the sampling of $S$ and $S'$ that for any $\theta_t$ such that $\norm{\theta_t - \theta_0}_2 \leq R$
\begin{gather*}
\norm{(T_n - T_n^t) r_t}_{L^2(X, \rho)}^2 \leq 2 \norm{f^*}_{L^\infty(X, \rho)}^2 \norm{K - K_0}_{L^2(X^2, \rho \otimes \rho)}^2 + \epsilon.
\end{gather*}
provided that $B = \tilde{O}(1)$, $H_{max} = \tilde{O}(\epsilon^{1/2} / R)$ and $n = \tilde{\Omega}(\epsilon^{-2}).$  The desired result then follows from taking a union bound and replacing $\delta$ with $\delta / 2$ and $\epsilon$ with $\epsilon / 4$. 
\end{proof}

\section{Main Result}
\label{sec:main_result_proof}
\subsection{Damped Deviations}\label{sec:dampeddev}
In this subsection we will recall some definitions and results from \citet{bowman2022implicit}.  The main theorems in \citet{bowman2022implicit} assume that the network architecture is shallow, however the results we recall in this section do not depend on the architecture.  Let $K(x, x')$ be a continuous, symmetric, positive-definite kernel.  Recall that $K$ defines the integral operator
\[ T_K g(x) := \int_X K(x, s) g(s) d\rho(s). \]
Then by Mercer's theorem
\[ K(x, x') = \sum_{i = 1}^\infty \sigma_i \phi_i(x) \phi_i(x') \]
where $\{\phi_i\}_i$ is an orthonormal basis of $L^2(X, \rho)$ and $\{\sigma_i\}_i$ is a nonincreasing sequence of positive values.  Each $\phi_i$ is an eigenfunction of $T_K$ with eigenvalue $\sigma_i$, i.e. $T_K \phi_i = \sigma_i \phi_i$.  Let $x \mapsto g_s(x)$ be a $L^2(X, \rho)$ function for each $s \in [0, t]$.  Assume $s \mapsto \langle \phi_i, g_s \rangle_\rho$ is measureable for each $i$ and $\int_0^t \norm{g_s}_{L^2(X, \rho)}^2 ds < \infty$.  Then we write
\[ \int_0^t g_s ds \]
to denote the coordinate-wise integral, meaning that $\int_0^t g_s ds$ is the $L^2(X, \rho)$ function $h$ such that
\[ \langle h, \phi_i \rangle_\rho = \int_0^t \langle g_s, \phi_i \rangle_\rho ds. \]
With this definition in hand we now recall the following ``Damped Deviations'' lemma given by \citet[Lemma 2.4]{bowman2022implicit}. 
\begin{lem}\label{lem:dampeddevfunc}
Let $K(x, x')$ be a continuous, symmetric, positive-definite kernel.  Let $[T_{K} h](\bullet) = \int_X K(\bullet, s) h(s) d\rho(s)$ be the integral operator associated with $K$ and let $[T_n^s h](\bullet) = \frac{1}{n} \sum_{i = 1}^n K_s(\bullet, x_i) h(x_i)$ denote the operator associated with the time-dependent NTK $K_s$.  Then
\[r_t = \exp(-T_{K}t)r_0 + \int_0^t \exp(-T_{K}(t - s))(T_{K} - T_n^s)r_s ds , 
\]
where the equality is in the $L^2(X, \rho)$ sense.
\end{lem}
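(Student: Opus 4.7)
The plan is to prove this via Duhamel's principle (variation of parameters) applied coordinate-wise in the Mercer eigenbasis of $T_K$. The starting point is the gradient-flow ODE for the residual, $\partial_t r_t = -T_n^t r_t$, which I will rewrite as the inhomogeneous linear evolution
\[ \partial_t r_t = -T_K r_t + (T_K - T_n^t) r_t, \]
treating $(T_K - T_n^t) r_t$ as a forcing term driving the otherwise diagonal flow $\partial_t u = -T_K u$. Since $T_K$ is a fixed self-adjoint compact operator independent of time, its semigroup $\exp(-T_K t)$ is well-defined on $L^2(X,\rho)$ and the Duhamel identity
\[ r_t = \exp(-T_K t) r_0 + \int_0^t \exp(-T_K(t - s))(T_K - T_n^s) r_s \, ds \]
is the formal solution. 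The remaining work is to make this rigorous in the $L^2(X,\rho)$ sense, which the lemma statement emphasizes.

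First I would invoke Mercer's theorem to write $K(x,x') = \sum_i \sigma_i \phi_i(x)\phi_i(x')$ with $\{\phi_i\}$ an orthonormal basis of $L^2(X,\rho)$, so that $T_K \phi_i = \sigma_i \phi_i$. Define the coefficient functions $c_i(t) := \langle r_t, \phi_i\rangle_\rho$ and $g_i(t) := \langle (T_K - T_n^t) r_t, \phi_i\rangle_\rho$. Since $f(x;\theta_t)$ is smooth in $\theta_t$ and $\theta_t$ evolves smoothly under gradient flow, $t \mapsto r_t \in L^2(X,\rho)$ is continuous (in fact differentiable) in $L^2(X,\rho)$, so interchanging $\partial_t$ with $\langle \cdot, \phi_i\rangle_\rho$ is justified. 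This gives the scalar ODE
\[ \dot{c}_i(t) = -\sigma_i c_i(t) + g_i(t), \]
whose solution by the standard integrating factor $e^{\sigma_i t}$ is
\[ c_i(t) = e^{-\sigma_i t} c_i(0) + \int_0^t e^{-\sigma_i(t - s)} g_i(s)\, ds. \]

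Next I would check that this scalar identity, holding for every $i$, matches the projection of the claimed $L^2(X,\rho)$ identity onto $\phi_i$. Note that $\langle \exp(-T_K t) r_0, \phi_i\rangle_\rho = e^{-\sigma_i t}\langle r_0, \phi_i\rangle_\rho = e^{-\sigma_i t} c_i(0)$, and by the coordinate-wise definition of the Bochner-style integral recalled just before the lemma,
\[ \left\langle \int_0^t \exp(-T_K(t - s))(T_K - T_n^s) r_s\, ds, \phi_i \right\rangle_\rho = \int_0^t e^{-\sigma_i(t - s)} g_i(s)\, ds. \]
Matching projections on every basis element $\phi_i$ then yields the equality in $L^2(X,\rho)$.

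The main obstacle is verifying the technical prerequisites for the coordinate-wise integral to be well-defined and for the interchange of $\partial_t$ and $\langle \cdot, \phi_i\rangle_\rho$. Specifically I need that $s \mapsto g_i(s)$ is measurable and that $\int_0^t \|(T_K - T_n^s) r_s\|_{L^2(X,\rho)}^2\, ds < \infty$, so that the expression $\int_0^t \exp(-T_K(t-s))(T_K - T_n^s) r_s\, ds$ defines an element of $L^2(X,\rho)$ as specified in the excerpt. Both follow from the continuity of $\theta_t$, the joint continuity of $K_s(x,x')$ in its arguments, compactness of $X$, and the a priori bound $\|\hat{r}_t\|_{\mathbb{R}^n} \leq \|\hat{r}_0\|_{\mathbb{R}^n}$, which together yield uniform $L^\infty$ bounds on the integrand over $[0,t]$. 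With these regularity checks in hand, the Mercer projection argument above completes the proof.
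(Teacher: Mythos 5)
Your proof is correct. The paper does not prove this lemma itself; it attributes the result to \citet[Lemma 2.4]{bowman2022implicit} and restates it for reference, so there is no in-paper argument to compare against. Your Duhamel / variation-of-parameters approach projected onto the Mercer eigenbasis is the natural proof of the identity, and it dovetails exactly with the coordinate-wise definition of $\int_0^t g_s\,ds$ that the paper introduces immediately before the lemma, which is set up precisely so that verifying the operator identity reduces to the scalar integrating-factor computation you carry out. The regularity discussion is correctly flagged as the only real technical work: to make it airtight one should justify the interchange of $\partial_t$ with $\langle \cdot, \phi_i\rangle_\rho$ by dominated convergence (using the uniform bound $|\partial_t r_t(x)| = |T_n^t r_t(x)| \leq \bigl(\max_{i}|K_t(x,x_i)|\bigr)\,\|\hat{r}_t\|_{\mathbb{R}^n}$, which is uniformly bounded on compact time intervals), and one should bound $\|r_s\|_{L^2(X,\rho)}$ itself rather than only $\|\hat{r}_s\|_{\mathbb{R}^n}$; the latter follows from Assumption~\ref{ass:boundedtarget} together with the a priori parameter bound of Lemma~\ref{lem:aprioriparambd} and the gradient norm bound of Lemma~\ref{lem:graddeterministicbd}.
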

Furthermore we have the following lemma \citet[Lemma C.8]{bowman2022implicit}
\begin{lem}\label{lem:kerdiffrecipe}
Let $K(x, x')$ be a continuous, symmetric, positive-definite kernel with associated operator $T_K h(\bullet) = \int_X K(\bullet, s) h(s) d\rho(s)$.  Let $T_n^s h(\bullet) = \frac{1}{n} \sum_{i = 1}^n K_s(\bullet, x_i) h(x_i)$ denote the operator associated with the time-dependent NTK.  Then
\[\norm{P_k(r_t - \exp(-T_{K}t)r_0)}_{L^2(X, \rho)} \leq \frac{1 - \exp(-\sigma_k t)}{\sigma_k} \sup_{s \leq t} \norm{(T_{K} - T_n^s)r_s}_{L^2(X, \rho)}.\]
and
\[\norm{r_t - \exp(-T_{K}t)r_0}_{L^2(X, \rho)} \leq t \cdot \sup_{s \leq t} \norm{(T_{K} - T_n^s)r_s}_{L^2(X, \rho)}.\]
\end{lem}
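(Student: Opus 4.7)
The plan is to start from the damped deviations identity of Lemma~\ref{lem:dampeddevfunc}, which expresses the deviation from the idealized trajectory as a Bochner-type integral:
\[ r_t - \exp(-T_K t) r_0 = \int_0^t \exp(-T_K(t - s))(T_K - T_n^s) r_s\, ds. \]
Both claimed inequalities will follow by taking $L^2(X, \rho)$ norms on both sides, pulling the norm inside the integral (via the triangle inequality extended to the coordinate-wise integral, using the Mercer basis), and bounding the remaining operator factor uniformly over $s \in [0,t]$.

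For the second inequality, I would observe that since $K$ is positive definite, $T_K$ is a positive self-adjoint operator, so $\exp(-T_K(t-s))$ has operator norm at most $1$ on $L^2(X, \rho)$ (its eigenvalues are $\exp(-\sigma_i(t-s)) \in (0,1]$ read off the Mercer basis). Therefore
\[ \norm{r_t - \exp(-T_K t) r_0}_{L^2(X, \rho)} \le \int_0^t \norm{(T_K - T_n^s) r_s}_{L^2(X, \rho)}\, ds \le t \cdot \sup_{s \le t} \norm{(T_K - T_n^s) r_s}_{L^2(X, \rho)}, \]
which is exactly the second claim.

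For the first inequality, the key is to analyze the operator $P_k \exp(-T_K(t-s))$. Since $P_k$ projects onto $\mathrm{span}\{\phi_1, \ldots, \phi_k\}$ and the $\phi_i$ diagonalize $T_K$, this composite operator acts as multiplication by $\exp(-\sigma_i(t-s))$ on the $i$-th coordinate for $i \le k$ and by $0$ otherwise. Its operator norm is therefore $\max_{i \le k} \exp(-\sigma_i(t-s)) = \exp(-\sigma_k(t-s))$, since $\sigma_k$ is the smallest of the top-$k$ eigenvalues. Applying $P_k$ inside the integral representation gives
\[ \norm{P_k(r_t - \exp(-T_K t) r_0)}_{L^2(X, \rho)} \le \int_0^t \exp(-\sigma_k(t-s)) \norm{(T_K - T_n^s) r_s}_{L^2(X, \rho)}\, ds, \]
and pulling the supremum outside the integral yields
\[ \int_0^t \exp(-\sigma_k(t-s))\, ds \cdot \sup_{s \le t}\norm{(T_K - T_n^s) r_s}_{L^2(X, \rho)} = \frac{1 - \exp(-\sigma_k t)}{\sigma_k} \cdot \sup_{s \le t}\norm{(T_K - T_n^s) r_s}_{L^2(X, \rho)}, \]
which is the first claim.

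The main obstacle is largely bookkeeping rather than mathematical depth: one must justify interchanging the $L^2$ norm with the coordinate-wise integral from Section~\ref{sec:dampeddev}, and verify that $P_k \exp(-T_K \tau)$ really has operator norm $\exp(-\sigma_k \tau)$. Both are handled by expanding in the Mercer basis $\{\phi_i\}$, computing Fourier coefficients $\langle r_t - \exp(-T_K t) r_0, \phi_i \rangle_\rho = \int_0^t \exp(-\sigma_i(t-s)) \langle (T_K - T_n^s) r_s, \phi_i \rangle_\rho\, ds$, and then applying Parseval together with the Cauchy--Schwarz inequality in the time variable (or just the triangle inequality for integrals), restricting the sum to $i \le k$ for the projected bound. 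No concentration or architecture-dependent estimates are needed here; this lemma is a purely deterministic functional-analytic consequence of the damped deviations identity.
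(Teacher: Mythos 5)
Your proof is correct, and it is the natural argument: the paper under review does not actually reprove this lemma (it is imported verbatim from the cited prior work as Lemma~C.8 there), but the route you describe is the standard one that the damped-deviations identity of Lemma~\ref{lem:dampeddevfunc} is designed to make possible. The only substantive ingredients are (i) the Bochner/Minkowski triangle inequality $\norm{\int_0^t g_s\,ds}_{L^2(X,\rho)} \le \int_0^t \norm{g_s}_{L^2(X,\rho)}\,ds$, which is justified for the coordinate-wise integral exactly as you say (Parseval plus Minkowski's integral inequality, using the square-integrability hypothesis on $s\mapsto\norm{g_s}$); and (ii) the observation that $P_k$ and $\exp(-T_K\tau)$ are simultaneously diagonal in the Mercer basis, so $P_k\exp(-T_K\tau)$ is multiplication by $\exp(-\sigma_i\tau)\ind{i\le k}$ and has operator norm $\max_{i\le k}\exp(-\sigma_i\tau) = \exp(-\sigma_k\tau)$ because the eigenvalues are nonincreasing and $\tau\ge 0$. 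The elementary integral $\int_0^t \exp(-\sigma_k(t-s))\,ds = (1-\exp(-\sigma_k t))/\sigma_k$ then finishes the first claim, and dropping $P_k$ (operator norm of $\exp(-T_K\tau)$ at most $1$ since every $\sigma_i>0$) gives the second. Nothing is missing; this is a deterministic functional-analytic consequence of the identity, as you note.
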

\subsection{Proof of Theorem~\ref{thm:main}}
We are now ready to prove the main result of this paper.
\main*
\begin{proof}
Let $\theta_0$ be the parameter initialization and let $S = (x_1, \ldots, x_n)$ and $S' = (x_1', \ldots, x_n')$ be two i.i.d.\ samples from $\rho$.  Furthermore let $1 \leq R \leq \sqrt{m}$.  Let $E_1 \subset \RR^p \times X^{2n}$ be the set of values $(\theta_0, S, S')$ so that the conclusion of Lemma \ref{lem:bigbadlemma} holds.  Similarly let $E_2$ be the set of values $(\theta_0, S, S')$ satisfying
\[ B := \max_{x \in X} \sup_{\theta \in \overline{B}(\theta_0, R)} \norm{\nabla_\theta f(x; \theta)}_2 = O(1) \]
and
\[ H_{max} := \max_{z \in S \cup S'} \sup_{\theta \in \overline{B}(\theta_0, R)} \norm{H(z, \theta)}_{op} = \tilde{O}(\epsilon / R^3) \]
where the expression $O(1)$ above is the bound on $B$ given by Lemma~\ref{lem:gradbd} and the expression $\tilde{O}(\epsilon/R^3)$ is precisely the condition on $H_{max}$ in the conclusion of Lemma \ref{lem:bigbadlemma}.  By Lemma \ref{lem:bigbadlemma} for any fixed $\theta_0$ we have that the conclusion holds with probability at least $1 - \delta$ over the sampling of $S, S'$.  Thus for any $\theta_0$ we have that
\[\EE_{S,S'}[\ind{(\theta_0, S, S') \in E_1}] \geq 1 - \delta. \]
It follows then by the Fubini-Tonelli theorem that 
\[\PP(E_1) = \EE_{\theta_0} \EE_{S,S'}[\ind{(\theta_0, S, S') \in E_1}] \geq 1 - \delta. \]
On the other hand by Theorem \ref{thm:liuhessbd} and Lemma \ref{lem:gradbd} combined with a union bound we have that for any fixed $S, S'$ then with probability at least $1 -2Cmn\exp(-c\log^2(m)) - C\exp(-cm)$ that $H_{max} = \tilde{O}(R / \sqrt{m})$ and $B = O(1)$.  Thus if $m = \tilde{\Omega}(R^8 / \epsilon^2)$ we ensure that $H_{max} = \tilde{O}(\epsilon / R^3)$.  Then by the same Fubini-Tonelli argument as before we get that
\[ \PP(E_2) = \EE_{S, S'} \EE_{\theta_0} \ind{(\theta_0, S, S') \in E_2} \geq 1 - 2Cmn \exp(-c\log^2(m)) - 
C\exp(-c m). \]
Thus by taking a union bound we have with probability at least $1 - \delta - O(mn)\exp(-\Omega(\log^2(m))$ that the events $E_1$ and $E_2$ both hold simultaneously.  This holds for any $\delta$ so we may as well set $\delta = O(mn) \exp(-\Omega(\log^2(m)))$ and absorb it into the other term.  Whenever $E_1$ and $E_2$ hold simultaneously we have by Lemma \ref{lem:bigbadlemma} that for any $\theta_t$ such that $\norm{\theta_t - \theta_0}_2 \leq R$
\begin{equation}\label{eq:swaelee}
\norm{(T_K - T_n^t)r_t}_{L^2(X, \rho)}^2 \leq 4 \norm{f^*}_{L^\infty(X, \rho)}^2 \norm{K - K_0}_{L^2(X^2, \rho \otimes \rho)}^2 + \epsilon . 
\end{equation}
Well by Lemma \ref{lem:aprioriparambd} we have that $\norm{\theta_t - \theta_0} \leq \frac{\sqrt{t}}{\sqrt{2}} \norm{f^*}_{L^\infty(X, \rho)}$.  Thus for $t \leq \frac{2R^2}{\norm{f^*}_{L^\infty(X, \rho)}^2}$ we have that $\norm{\theta_t - \theta_0} \leq R$.  Well then by Lemma \ref{lem:kerdiffrecipe} and the inequality \eqref{eq:swaelee} we have that
\begin{gather*}
\norm{P_k(r_t - \exp(-T_{K}t)r_0)}_{L^2(X, \rho)}^2 \\
\leq \brackets{\frac{1 - \exp(-\sigma_k t)}{\sigma_k}}^2 \cdot \brackets{ 4 \norm{f^*}_{L^\infty(X, \rho)}^2 \norm{K - K_0}_{L^2(X^2, \rho \otimes \rho)}^2 + \epsilon}
\end{gather*}
and
\[\norm{r_t - \exp(-T_{K}t)r_0}_{L^2(X, \rho)}^2 \leq t^2 \cdot \brackets{4 \norm{f^*}_{L^\infty(X, \rho)}^2 \norm{K - K_0}_{L^2(X^2, \rho \otimes \rho)}^2 + \epsilon}.\]
The desired result then follows by setting $T = \frac{2R^2}{\norm{f^*}_{L^\infty(X, \rho)}^2}$.
\end{proof}

\section{Discussion of Assumption~\ref{ass:kernelconc}}\label{sec:assumptiondisc}
We will discuss why it is reasonable to assume that $m = \tilde{\Omega}(\epsilon^{-2})$ suffices to ensure that $\norm{K_0 - K^{\infty}}_{L^2(X \times X, \rho \otimes \rho)}^2 \leq \epsilon$ holds with high probability over the initialization.  We note that for fixed $\theta_0$, $K_0$ and $K^\infty$ are bounded and thus by Hoeffding's inequality we have that with high probability
\begin{gather*}
\norm{K_0 - K^{\infty}}_{L^2(X \times X, \rho \otimes \rho)}^2 \\
\leq \frac{1}{N} \sum_{i = 1}^N |K_0(x_i, x_i') - K^\infty(x_i, x_i')|^2 + \tilde{O}\parens{\frac{\norm{K_0 - K^\infty}_{L^\infty(X \times X, \rho \times \rho)}^2}{\sqrt{N}}},     
\end{gather*}
where $(x_1, x_1'), \ldots, (x_N, x_N')$ is an i.i.d.\ sample from $\rho \otimes \rho$.  Furthermore we have by Lemma~\ref{lem:gradbd} that $\norm{K_0 - K^\infty}_{L^\infty(X \times X, \rho \times \rho)}^2 = \tilde{O}(1)$ with high probability over the initialization of $\theta_0$.  Thus if we set $N = \tilde{\Omega}(\epsilon^{-2})$ we have that Assumption~\ref{ass:kernelconc} holds provided that
\[ \frac{1}{N} \sum_{i = 1}^N |K_0(x_i, x_i') - K^\infty(x_i, x_i')|^2 = O(\epsilon) \]
with high probability over the simultaneous sampling of $\theta_0$ and $(x_1, x_1'), \ldots, (x_N, x_N')$.  
\par
It is been shown in many settings that the pointwise deviations satisfy 
\[ |K_0(x, x') - K^\infty(x, x')| = \tilde{O}(1/\sqrt{m}) \] with high probability over $\theta_0$.  The earliest was \citet{du2018gradient} who demonstrate that for a shallow ReLU network for fixed $x, x'$ we have with probability at least $1 - \delta$ over the initialization
\[ |K_0(x, x') - K^\infty(x, x')| \leq O\parens{\frac{\log(1/\delta)}{\sqrt{m}}}. \]
Analyzing the portion of the Neural Tangent Kernel corresponding to the last hidden layer, \citet{du2019gradient} get an analogous bound for deep fully-connected, ResNet, and convolutional networks with smooth activations.  This is substantiated by the results of \citet{huang2019dynamics} for deep fully-connected networks with smooth activations.  In their work they demonstrate that for a fixed training set $x_1, \ldots, x_n$
\[ \max_{i, j} |K_0(x_i, x_j) - K^\infty(x_i, x_j)| = \tilde{O}(1/\sqrt{m}) \]
with high probability over the initialization.  In their result there are constants that depend on how well dispersed $x_1, \ldots, x_n$ are.  \citet{bowman2022implicit} demonstrated that for shallow fully-connected networks with smooth activations
\[ \sup_{(x, x') \in X \times X} |K_0(x, x') - K^\infty(x, x')| = \tilde{O}(1/\sqrt{m}) \]
with high probability over the initialization.  For deep fully-connected ReLU networks \citet{aroraexact} demonstrate that for fixed $x, x'$ if $m = \Omega(L^6 \log(L / \delta) / \epsilon^4)$ then with probability at least $1 - \delta$
\[ |K_0(x, x') - K^\infty(x, x')| \leq (L + 1) \epsilon.  \]
In terms of the width $m$ this translates to $|K_0(x, x') - K^\infty(x, x')| = \tilde{O}(1/m^{1/4})$ with high probability.  This was improved in a recent work by \citet{buchanan2021deep} that demonstrated that if $\mathcal{M}$ is a Riemannian submanifold of the unit sphere then with high probability over the initialization
\[ \sup_{x, x' \in \mathcal{M} \times \mathcal{M}}|K_0(x, x') - K^\infty(x, x')| = \tilde{O}(1/\sqrt{m}).  \]
Furthermore as stated by \citet{buchanan2021deep} their analysis should be amenable to other architectures.
\par 
Now note that $\max_{i \in [N]}|K_0(x_i, x_i') - K^\infty(x_i, x_i')| = O(\epsilon^{1/2})$ suffices to ensure that
\[ \frac{1}{N} \sum_{i = 1}^N |K_0(x_i, x_i') - K^\infty(x_i, x_i')|^2 = O(\epsilon). \]
Based on the previous discussion, we expect that with high probability 
\[\max_{i \in [N]}|K_0(x_i, x_i') - K^\infty(x_i, x_i')| = \tilde{O}(1/\sqrt{m}). \]
Thus if $m = \tilde{\Omega}(1/\epsilon^2)$ then we would have that 
$\max_{i \in [N]}|K_0(x_i, x_i') - K^\infty(x_i, x_i')| = \tilde{O}(\epsilon)$ which is stronger than what we need.  In fact $\max_{i \in [N]}|K_0(x_i, x_i') - K^\infty(x_i, x_i')| = \tilde{O}(1/ m^{1/4})$ is sufficient.  For these reasons, we view Assumption~\ref{ass:kernelconc} as quite reasonable.  Nevertheless, we are not aware of an out-of-the box result that simultaneously addresses all the cases we consider and thus we must add this as an external assumption.  However, if desired one can bypass Assumption~\ref{ass:kernelconc} by citing the aforementioned results to get statements for the cases in which they apply to.
\section{Experimental Details}\label{sec:experimental}

\paragraph{Architecture and Parameterization}
The code to produce Figure~\ref{fig:ntk_spec} is available at
\ifx\deanonymize\undefined
\url{https://anonymous.4open.science/r/deepspec-5564/README.md}
\else
\url{https://github.com/bbowman223/deepspec}
\fi  The NTK Gram matrix $(G_0)_{i,j} := K^{\theta_0}(x_i, x_j) = \langle \nabla_\theta f(x_i; \theta_0), \nabla_\theta f(x_j; \theta_0) \rangle$ was computed for two separate networks.  The first network corresponds to LeNet-5 \citep{lenetpaper} where the output is the logit corresponding to class $0$.  The second network is a feedforward network with one hidden layer with the Softplus activation $\omega(x) = \log(1 + \exp(x))$.  For LeNet-5 we compute the NTK using PyTorch \citep{pytorch} using the default PyTorch initialization and parameterization.  For the shallow network we implement the network directly and use the Neural Tangent Kernel parameterization:
\[ f(x; \theta) = \frac{1}{\sqrt{m}} \sum_{i = 1}^m a_i \omega(\langle w_i, x \rangle + b_i) + b_0, \]
where there is an explicit $1/\sqrt{m}$ factor.  All parameters for the shallow network are initialized as i.i.d.\ standard Gaussian random variables $N(0, 1)$. 

\paragraph{Details of Computation}
For each network we compute the NTK Gram matrix $G_0$ for 10 separate pairs of $(\theta_0, S)$ where $\theta_0$ is the parameter initialization and $S = (x_1, \ldots, x_n)$ is the data batch.  Each line in the plots of Figure~\ref{fig:ntk_spec} corresponds to a different pair $(\theta_0, S)$.  We simultaneously sample the parameter initialization $\theta_0$ and a random batch of $2000$ training samples $x_1, \ldots, x_{2000}$.  We load the batches using ``DataLoader'' in PyTorch with the ``shuffle'' parameter set to True.  This means the batches will be sampled sequentially from a random permutation of the training data and thus are sampled without replacement.  We then compute the NTK Gram matrix $(G_0)_{i,j} := K^{\theta_0}(x_i, x_j) = \langle \nabla_\theta f(x_i; \theta_0), \nabla_\theta f(x_j; \theta_0) \rangle$.  Once we compute $G_0$ we compute its spectrum and plot the first $1000$ eigenvalues.  Note that the number of eigenvalues that we plot is half the batch size.  We observe that if one plots all $n$ eigenvalues (the number of eigenvalues equals the number of samples) one gets a sharp drop in log scale magnitude starting near the bottom 5-10\% of eigenvalues.  We observed this to occur even as one varies $n$.  We suspect this is due to numerical errors and thus we only plot the first half of the spectrum. 

\paragraph{Data}
The dataset used for LeNet-5 is MNIST \citep{lenetpaper} and the dataset for the shallow model is CIFAR-10 \citep{Krizhevsky09learningmultiple}.  MNIST is made available through the Creative Commons Attribution-Share Alike 3.0 license.  CIFAR-10 does not specify a license.  Neither of these datasets have personally identifiable information nor offensive content.

\paragraph{Computational Resources and Runtime}
The experiments were run on a 2016 Macbook Pro with a 2.6 Ghz Quad-Core Intel Core i7 processor and 16GB of RAM.  The experiment took less than an hour in wall-clock time. 

\paragraph{Software Licenses and Attribution}
Our experiments were implemented in Python with the aid of the following software libraries/tools: PyTorch \citep{pytorch}, NumPy \citep{harris2020array}, SciPy \citep{2020SciPy-NMeth}, Matplotlib \citep{Hunter:2007}, Jupyter Notebook \citep{Kluyver2016jupyter}, IPython \citep{ipython}, and autograd-hacks \url{https://github.com/cybertronai/autograd-hacks}.  PyTorch, Numpy, and SciPy are available under the BSD license.  Jupyter and IPython are available under the new/modified BSD license.  Matplotlib uses only BSD compatible code and is available under the PSF license.  The code for autograd-hacks belongs to the public domain as specified by the public-domain-equivalent-license ``Unlicense'' \url{https://unlicense.org/}.

\end{document}